\theoremstyle{plain}
\newtheorem{theorem}{Theorem}[section]
\newtheorem{proposition}[theorem]{Proposition}
\newtheorem{lemma}[theorem]{Lemma}
\newtheorem{corollary}[theorem]{Corollary}
\theoremstyle{definition}
\newtheorem{assumption}[theorem]{Assumption}
\theoremstyle{remark}
\def\eqref#1{equation~\ref{#1}}
\def\1{\bm{1}}
\DeclareMathAlphabet{\mathsfit}{\encodingdefault}{\sfdefault}{m}{sl}
\SetMathAlphabet{\mathsfit}{bold}{\encodingdefault}{\sfdefault}{bx}{n}
\newif\ifsup\supfalse
\newcommand{\pll}{\kern 0.56em/\kern -0.8em /\kern 0.56em}
\newcommand{\dd}{\textnormal{d}}
\icmltitlerunning{Finite-Time Analysis of Discrete-Time Stochastic Interpolants}
\begin{document}

\twocolumn[
\icmltitle{Finite-Time Analysis of Discrete-Time Stochastic Interpolants}



\icmlsetsymbol{equal}{*}

\begin{icmlauthorlist}
\icmlauthor{Yuhao Liu}{iiis}
\icmlauthor{Yu Chen}{iiis}
\icmlauthor{Rui Hu}{iiis}
\icmlauthor{Longbo Huang}{iiis}
\end{icmlauthorlist}

\icmlaffiliation{iiis}{IIIS, Tsinghua University, Beijing, China}

\icmlcorrespondingauthor{Longbo Huang}{longbohuang@tsinghua.edu.cn}

\icmlkeywords{Machine Learning, ICML}

\vskip 0.3in
]



\printAffiliationsAndNotice{}  

\begin{abstract}
The stochastic interpolant framework offers a powerful approach for constructing generative models based on ordinary differential equations (ODEs) or stochastic differential equations (SDEs) to transform arbitrary data distributions. However, prior analyses of this framework have primarily focused on the continuous-time setting, assuming a perfect solution of the underlying equations. In this work, we present the first discrete-time analysis of the stochastic interpolant framework, where we introduce an innovative discrete-time sampler and derive a finite-time upper bound on its distribution estimation error. Our result provides a novel quantification of how different factors, including 
the distance between source and target distributions and estimation accuracy, affect the convergence rate and also offers a new principled way to design efficient schedules for convergence acceleration. Finally, numerical experiments are conducted on the discrete-time sampler to corroborate our theoretical findings. 

\end{abstract}

\section{Introduction}
Stochastic interpolants \cite{flows, interpolation} provide a general framework for constructing continuous mappings between arbitrary distributions. This framework draws inspiration from flow-based and diffusion-based models, which generate samples by continuously transforming data points from a base distribution to a target distribution via learned ordinary differential equations (ODEs) or stochastic differential equations (SDEs).

Within the stochastic interpolant framework, one obtains learnable ODEs or SDEs that transport data by defining an interpolation between data points sampled from different distributions. 
This framework offers significant design flexibility and has demonstrated promising results in various applications, including probabilistic forecasting \cite{chen2024forcasting}, image generation \cite{ma2024sitexploringflowdiffusionbased, albergo2024coupling}, and sequential modeling \cite{chen2024recurrent}.


Despite its potential in real-world applications, there remains a gap between the theoretical analyses and practical implementations of stochastic interpolants. In practical scenarios, instead of perfectly solving the underlying equations, one can only access a learned estimator for a finite number of time steps, which necessitates the use of discrete-time samplers to simulate the true continuous generation process. However, previous analyses have largely focused on continuous-time generation, assuming perfect solutions to the underlying equations. This leads to a crucial question for bridging the gap:


\begin{center}
\textbf{What is the convergence rate of discrete-time stochastic interpolant, and how to enhance its performance algorithmically?}
\end{center}


Similar problems have been studied in the theories of diffusion models, and most results were derived based on Girsanov-based methods in SDE analyses, which reduce the problem to providing upper bounds on the discretization errors. 
Specifically, existing analyses on the discretization errors can be mainly categorized into two types. The first type partitions the error into space-discretization and time-discretization. Among them, \citet{lee2022polynomial} and \citet{chen2023ddpm} assume a uniform Lipschitz constant on the score function, while \citet{chen2023improved} do not, but they all utilize the Markovian property and the Gaussian form of the diffusion process to obtain their results. 
The second type uses It\^o's calculus to obtain upper bounds, such as \citet{dlinear}, who adapt existing results from stochastic localization to produce tight bounds by finding the equivalence between two methods.

However, the aforementioned ideas do not apply to stochastic interpolants due to the following key difference: the stochastic interpolant framework in consideration has a distinct structure introduced by a random interpolation between two distributions instead of a linear combination of one distribution with Gaussian, destroying the Markovian property. 
This difference not only necessitates novel analysis for the discretization errors of score functions but also requires additional analysis to bound the discretization errors for the velocity function, which arises from the general interpolation function introduced in this context. 

To tackle the above challenges, in this work, we offer the first finite-time convergence bound in Kullback-Leibler (KL) error for the SDE-based generative model within the stochastic interpolant framework. 
Our result presents a novel analysis building on existing Girsanov-based techniques.
In the analysis of discretization errors, one key highlight of our approach is  modeling the evolution of discretized terms via stochastic calculus. This allows us to decompose the discretization error into components linked to derivatives of conditional expectation. Notably, we leverage the Gaussian latent variables embedded within our stochastic interpolants, enabling the explicit representation of these derivatives as conditional expectations, and hence providing a key solution to the challenges.

\paragraph{Our contributions.} The main contributions of our paper are as follows. 


(i) This work presents the first finite-time convergence bound for the SDE-based generative model within the stochastic interpolant framework. Specifically, we formulate the discrete-time sampler using the Euler–Maruyama scheme and derive a general error bound on the generative process, which notably does not require Lipschitz assumptions on the score functions or velocity functions. This setting aligns with recent analyses of score-based diffusion models that relax Lipschitz assumptions on the score functions, such as those by \citet{chen2023improved} and \citet{dlinear}.

(ii) We propose a novel  schedule of step sizes and rigorously  bound  the number of steps required to achieve an $\varepsilon^2$ KL-error. In the specific case where the base distribution $\rho_0$ is Gaussian, where our setting reduces to the standard diffusion model, our bound achieves the same order of dependence as that by \citet{chen2023improved}. 

(iii) We implement the sampler with our proposed schedule and conduct a comparison against using uniform step sizes. Our results validate the theoretical findings and demonstrate the superior performance of our schedule when no additional regularity conditions are assumed. 



\section{Related Works}


\paragraph{Stochastic Interpolants}

The concept of stochastic interpolants is introduced by \citet{flows}, establishing a framework for constructing generative models based on continuous-time normalizing flows. 
Building upon this foundation, \citet{interpolation} extend the framework by incorporating Gaussian noise into the interpolant, effectively unifying flow-based and diffusion-based methods. Both \citet{flows} and \citet{interpolation} investigate the impact of using estimators instead of the true velocities in the equations. 

Following the stochastic interpolants framework, several works have focused on specific applications. \citet{albergo2024coupling} utilize the framework to develop novel data coupling methods, addressing image generation tasks such as in-painting and super-resolution.  \citet{chen2024forcasting} and \citet{chen2024recurrent} adapt the conditional generation framework with stochastic interpolants to tackle future state prediction and sequential modeling problems, respectively.

\paragraph{Convergence Analysis of Diffusion Models} 
Numerous results have been established on the convergence rates of diffusion models under various data assumptions \cite{manifold, lee2022polynomial}. 
Notably, \citet{chen2023ddpm} employ an approximation argument to apply Girsanov's theorem in scenarios where the Novikov condition does not hold. Based on this analysis, \citet{chen2023ddpm} and \citet{chen2023improved} provide error bounds assuming Lipschitz score functions. \citet{chen2023improved} also develop a KL error bound without requiring Lipschitzness assumptions, leveraging early stopping. This bound is subsequently improved by \citet{dlinear}, achieving a $\varepsilon^2$ KL error in $\tilde{O}\left(\frac{d\log^2(1/\delta)}{\varepsilon^2}\right)$ steps, which exhibits near-linear dependence on the dimension $d$. 
In addition to convergence rate analysis, several works have focused on problems such as score approximation \cite{chen2023score}, improved DDPM samplers \cite{liang2024broadeningtargetdistributionsaccelerated,li2024fasternonasymptoticconvergencediffusionbased,li2024acceleratingconvergencescorebaseddiffusion} and ODE-based methods \cite{pfprobably}. 


\section{Preliminaries: Stochastic Interpolants}
\label{sec:preliminaries}



In this paper, we consider continuous-time stochastic processes that bridge any two arbitrary probability distributions in finite time. 
Formally, given two probability distributions $\rho_0$ and $\rho_1$ in $\mathbb{R}^d$, a stochastic interpolant \cite{interpolation, flows, albergo2024coupling} is a stochastic process defined as: 
\begin{equation}
    x_t=I(t,x_0,x_1)+\gamma(t)z,\qquad t\in[0,1]
    \label{eq:stochastic-interpolant}
\end{equation}
where $I$ is a twice-continuously differentiable interpolation function satisfying the boundary conditions $I(0,x_0,x_1)=x_0$ and $I(1,x_0,x_1)=x_1$, and there exists a constant $C>0$ such that for all $t\in[0,1]$ and $x_0,x_1\in\mathbb{R}^d$, 
    \begin{equation}
        \begin{aligned}
      &\partial_tI(t,x_0,x_1)\le C\Vert x_0-x_1\Vert.
        \end{aligned}
        \label{eq:I-lip}
    \end{equation}
Here $\gamma(t)$ is a time-dependent scale function with $\gamma(t)^2\in C^2[0,1]$, $\gamma(0)=\gamma(1)=0$, and $\gamma(t)>0$ for $t\in[0,1]$. This definition indicates that $\left|\frac{d}{dt}(\gamma^2(t))\right|$ is bounded by a constant.
$(x_0,x_1)$ is drawn from a joint measure $\nu$ with marginals $\rho_0$ and $\rho_1$, i.e., $\rho_0(dx_0)=\nu(dx_0,\mathbb{R}^d)$ and $\rho_1(dx_1)=\nu(\mathbb{R}^d,dx_1).$
$z\sim \mathcal{N}(0,I_d)$ is a standard Gaussian variable independent of $(x_0,x_1)$.

In the definition (\ref{eq:stochastic-interpolant}), $I(t,x_0,x_1)$ represents the interpolation component, while $\gamma(t)z$ introduces a Gaussian latent term crucial for subsequent analysis. We denote the density of $x_t$ by $\rho(t,\cdot)$ or simply $\rho(t)$. According to the construction, the stochastic interpolant satisfies $\rho(0)=\rho_0$ and $\rho(1)=\rho_1$.
This framework allows for a wide range of interpolation functions $I(t,x_0,x_1)$ and scale functions $\gamma(t)$, offering significant flexibility in design.

Stochastic interpolants provide a framework for generative modeling through stochastic differential equations.
As shown by \citet{interpolation}, when $\mathbb{E}_{(x_0,x_1)\sim\nu}\Vert\partial_tI(t,x_0,x_1)\Vert^4$ and $\mathbb{E}_{(x_0,x_1)\sim\nu}\Vert\partial^2_tI\Vert^2$ are bounded, the solution to the following forward SDE
\begin{equation}
    \dd X_t^F=b_F(t,X_t^F)\dd t+\sqrt{2\epsilon(t)}\dd W_t,X_0\sim\rho_0
    \label{eq:forward-sde}
\end{equation}
satisfies $X_t^F\sim\rho(t)$ for all $t\in[0,1]$. Here $\epsilon\in C[0,1]$ is a non-negative function, and the drift term $b_F$ is defined as
\begin{equation}
    \begin{aligned}
        b_F(t,x)&=b(t,x)+\epsilon(t)s(t,x),\\
        b(t,x)&=\mathbb{E}[\dot{x}_t|x_t=x]\\
            &=\mathbb{E}[\partial_tI(t,x_0,x_1)+\dot{\gamma}(t)z|x_t=x],\\
        s(t,x)&=\nabla\log\rho(t,x)=-\gamma^{-1}(t)\mathbb{E}[z|x_t=x].
    \end{aligned}
    \label{def:bf}
\end{equation}
In the definition (\ref{def:bf}), $s(t,x)$ is the well-known score function, and $b(t,x)$ represents the mean velocity field of (\ref{eq:stochastic-interpolant}) (following the notations of \citealt{interpolation}). 
This implies that a process starting from $\rho_0$ and evolving according to the forward SDE (\ref{eq:forward-sde}) will have density $\rho(t)$ at time $t$. Consequently, at time $t=1$, the process will have the desired target density $\rho_1$. This establishes a stochastic mapping from $\rho_0$ to $\rho_1$, providing a foundation for generative modeling. 

Following \citet{interpolation}, we further introduce the velocity function $v(t,x)$ as: 
\begin{equation}
    \begin{aligned}
        v(t,x)&=\mathbb{E}[\partial_tI(t,x_0,x_1)|x_t=x]\\
        &=b(t,x)+\dot{\gamma}\gamma s(t,x).
    \end{aligned}
    \label{def:v}
\end{equation}
Both $b(t,x)=v(t,x)-\dot{\gamma}(t)\gamma(t)s(t,x)$ and $b_F(t,x)=b(t,x)+\epsilon(t)s(t,x)$ are linear combinations of $v(t,x)$ and $s(t,x)$. The function $\epsilon(t)$ controls the level of randomness in the mapping from $\rho_0$ to $\rho_1$. When $\epsilon(t)\equiv0$  the SDE reduces to an ODE. We assume $\epsilon(t)=\epsilon$ is a constant without loss of generality, similar to \citet{interpolation} and \citet{costa2024equijumpproteindynamicssimulation}. 



\paragraph{Connection with Diffusion Models.} 

Consider the special case where $x_0$ and $x_1$ are independent, with $x_0\sim \mathcal{N}(0,I_d)$. Let $I(t,x_0,x_1)=(1-t)x_0+tx_1$ and $\gamma(t)=\sqrt{2t(1-t)}$. Then, the stochastic interpolant can be expressed as 
$$x_t=tx_1+\sqrt{1-t^2}\bar{z}$$
where $\bar{z}\sim \mathcal{N}(0,I_d)$ is another standard Gaussian variable independent of $x_1$. Diffusion models \cite{song2021scorebased} employ the Ornstein–Uhlenbeck (OU) SDE:
$$\dd Y_s=-Y_s\dd s+\sqrt{2}\dd W_s,\quad Y_0\sim p_{\text{data}}$$
which gradually adds noise to the data distribution $p_{\text{data}}$. Given $Y_0$, $Y_s$ can be written as $$Y_s=e^{-s}Y_0+\sqrt{1-e^{-2s}}Z,\quad Z\sim \mathcal{N}(0,I_d).$$
Therefore, if we choose $x_1\sim p_{\text{data}}$ in the stochastic interpolant, $Y_s$ and $x_{e^{-s}}$ have the same distribution.

\paragraph{Previous Theoretical Results}

For SDE-based generative models, \citet{interpolation} provide the following KL error bound when an estimator $\hat{b}_F(t,X_t^F)$ is used instead of the true drift term $b_F(t,X_t^F)$ in the SDE. This bound, which can also be derived using Girsanov's Theorem \cite{chen2023ddpm,pmlr-v202-oko23a}, is given by:
$$\begin{aligned}
    \textnormal{KL}(\rho(1)\Vert\hat{\rho}(1))&\le\frac{1}{4\epsilon}\int_0^1\int_{\mathbb{R}^d}\Vert\hat{b}_F(t,x)\\
    &\qquad\quad-b_F(t,x)\Vert^2\rho(t,x)\dd x\dd t.
\end{aligned}$$
This inequality establishes an upper bound on the distribution estimation error in terms of the error in estimating the drift function $b_F(t,X_t^F)$. This estimation error is evaluated with respect to the true underlying density $\rho(t,x)$. 

The primary limitation of the existing results is the assumption that the SDEs can be solved exactly (e.g.,  \citealt{chen2024forcasting, interpolation}). However, obtaining exact solutions is often challenging in practice, leading to the use of discrete-time samplers for estimating these solutions.
Yet, when an SDE is discretized, discretization error causes distribution estimation error, which ultimately invalidates  previous results. Moreover, the choice of discretizations can have a significant impact on the convergence of the dynamics, 
(see, e.g., \citealt{andre2016variational}), leaving the design of optimal discretization methods largely open.

\section{Main Results}
\label{sec:results}









In this section, we present a novel analysis for the discrete-time stochastic interpolant framework. Specifically, we focus on the following formulation: given a schedule $\{t_k\}_{k=0}^N\subseteq[0,1]$ where $t_0<t_1<\cdots<t_{N-1}<t_N$, we define an estimated process using the following SDE: 
\begin{equation}
\dd\hat{X}_t^F=\hat{b}_F(t_k,\hat{X}_{t_k}^F)\dd t+\sqrt{2\epsilon}\dd W_t,\quad t\in[t_k,t_{k+1}).
\label{eq:estimated-sde}
\end{equation}
In practice, we can express this as:
\begin{equation}
    \begin{aligned}
        X_{t_{k+1}}^F=&X_{t_k}^F+(t_{k+1}-t_k)\hat{b}_F(t_k,\hat{X}_{t_k}^F)\\
        &+\sqrt{2\epsilon(t_{k+1}-t_k)}w_k,\qquad w_k\in \mathcal{N}(0,I_d)
    \end{aligned}
    \label{eq:sampler}
\end{equation}
for $k=0,1,\dots,N-1$. 
Here $\hat{b}_F(t,X_t^F)$ represents an estimator of the true drift term $b_F$. 
Equation (\ref{eq:estimated-sde}), or equivalently (\ref{eq:sampler}), corresponds to the Euler-Maruyama discretization of the continuous-time SDE \cite{chen2023improved,chen2023score}. In this paper, we refer to (\ref{eq:estimated-sde}) as the estimated SDE, while (\ref{eq:forward-sde}) is referred to as the true SDE.



Below, we present our analysis for the discrete-time stochastic interpolant.  
We begin by introducing the main assumptions, which will be crucial for bounding the error between the estimated distribution and the true target distribution.
\begin{assumption}
    The joint measure $\nu$ defined in the stochastic interpolants (\ref{eq:stochastic-interpolant}) satisfies 
    $$\underset{(x_0,x_1)\sim\nu}{\mathbb{E}}\Vert x_0-x_1\Vert^8<\infty,$$
    and the interpolation function  is such that
    $$\underset{(x_0,x_1)\sim\nu}{\mathbb{E}}\Vert\partial_t^2I(t,x_0,x_1)\Vert^2\le M_2<\infty,\quad \forall t\in[0,1].$$
    \label{a:regularity}
\end{assumption} 
\vspace{-0.5cm}
The first moment bound assumes that the initial and target distributions, $\rho_0$ and $\rho_1$, are not excessively far apart. In fact, the inequality (\ref{eq:I-lip}) further implies that $\mathbb{E}\Vert\partial_tI(t,x_0,x_1)\Vert^8<\infty$ for any $t\in[0,1]$. The second part of the assumption ensures that the time derivative of the interpolation function does not exhibit significant variations. Assumption \ref{a:regularity}  is similar to previous assumptions on stochastic interpolants, with the exception of the first part, which utilizes the eighth moment instead of the fourth moment (see \citealt{interpolation} or Appendix \ref{appendix:preliminaries}).

\begin{assumption}
    The estimator $\hat{b}_F$ satisfies 
    $$\begin{aligned}
        \sum_{k=0}^{N-1}(t_{k+1}-t_k)\mathbb{E}\Vert\hat{b}_F(t_k,x_{t_k})-b_F(t_k,x_{t_k})\Vert^2\le\varepsilon_{b_F}^2,
    \end{aligned}$$
    where the expectations is taken over $x_{t_k}\sim\rho(t_k)$.
    \label{a:estimation}
\end{assumption} 
Assumption \ref{a:estimation} assumes that we have a sufficiently accurate estimator for the drift term $b_F(t,x)$ at the discretized time points. This assumption is analogous to common assumptions employed in the theoretical analysis of diffusion models \cite{dlinear, chen2023improved}. 


Now we are ready to present our main theorem.

\begin{theorem}
    \label{thm:main}
    Suppose we start with an initial distribution $\hat{\rho}(t_0)$ and evolve the process
    according to the estimated SDE (\ref{eq:estimated-sde}) with $\epsilon=O(1)$ until time $t=t_N$. 
    Let $\hat{\rho}(t_N)$ be the distribution obtained at time $t=t_N$. Denote the ground truth marginal distributions by $\{\rho(t)\}_{t\in[0,1]}$, and define $\bar{\gamma}_k=\min_{t\in[t_k,t_{k+1}]}\gamma(t)$. We have  under Assumptions \ref{a:regularity} and \ref{a:estimation} that: 
    $$\begin{aligned}
        \textnormal{KL}(\rho(t_N)\Vert\hat{\rho}(t_N))&\lesssim\textnormal{KL}(\rho(t_0)\Vert\hat{\rho}(t_0))+\epsilon^{-1}\varepsilon_{b_F}^2
    \end{aligned}$$
    $$\begin{aligned}
        &+\epsilon^{-1}\sum_{k=0}^{N-1}(t_{k+1}-t_k)^3\\
        &\qquad\cdot\left[M_2+\bar{\gamma}_k^{-6}d^3+\bar{\gamma}_k^{-2}d\sqrt{\mathbb{E}\Vert x_0-x_1\Vert^{8}}\right]\\
        &+\sum_{k=0}^{N-1}(t_{k+1}-t_k)^2\bar{\gamma}_k^{-2}d\left[\sqrt{\mathbb{E}\Vert x_0-x_1\Vert^4}+\bar{\gamma}_k^{-2}d\right]
    \end{aligned}$$
Here the notation $f\lesssim g$ denotes  $f=O(g)$.
\end{theorem}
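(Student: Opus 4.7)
The plan is to reduce the problem to a Girsanov-style KL bound between path measures and then control the discretization error via It\^o's formula and the Gaussian structure of the interpolant. Concretely, I would first couple the true forward SDE (\ref{eq:forward-sde}) with the piecewise-frozen estimated SDE (\ref{eq:estimated-sde}) through the same Brownian motion and initial law; since both SDEs share the constant diffusion coefficient $\sqrt{2\epsilon}$, Girsanov's theorem combined with the data-processing inequality (passing from path-measure KL to marginal KL) yields
\begin{equation*}
\textnormal{KL}(\rho(t_N)\|\hat\rho(t_N)) \leq \textnormal{KL}(\rho(t_0)\|\hat\rho(t_0)) + \frac{1}{4\epsilon}\sum_{k=0}^{N-1}\int_{t_k}^{t_{k+1}} \mathbb{E}\|b_F(t, X_t^F) - \hat b_F(t_k, X_{t_k}^F)\|^2 \, dt,
\end{equation*}
with the expectation taken under the true forward process, so that $X_{t_k}^F \sim \rho(t_k)$. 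Adding and subtracting $b_F(t_k,X_{t_k}^F)$ splits the integrand into an \emph{estimation} piece, whose integral is controlled by $\varepsilon_{b_F}^2$ via Assumption \ref{a:estimation} (with the $1/\epsilon$ absorbed since $\epsilon = O(1)$), and a \emph{discretization} piece $\|b_F(t, X_t^F) - b_F(t_k, X_{t_k}^F)\|^2$, which carries all the remaining complexity.

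For the discretization piece I would use the identity $b_F = v + (\epsilon - \dot\gamma\gamma)\, s$ to treat the velocity and score separately, and then apply It\^o's formula along the true SDE to both $v(t, X_t^F)$ and $s(t, X_t^F)$. For example,
\begin{equation*}
s(t,X_t^F) - s(t_k, X_{t_k}^F) = \int_{t_k}^t \bigl(\partial_\tau s + \nabla s\cdot b_F + \epsilon\,\Delta s\bigr)\, d\tau + \sqrt{2\epsilon}\int_{t_k}^t \nabla s\, dW_\tau ,
\end{equation*}
and Cauchy--Schwarz on the Lebesgue integral combined with the It\^o isometry on the martingale integral gives
\begin{equation*}
\int_{t_k}^{t_{k+1}}\mathbb{E}\|s(t,X_t^F)-s(t_k,X_{t_k}^F)\|^2\, dt \lesssim (t_{k+1}-t_k)^3\sup_\tau \mathbb{E}\|\partial_\tau s + \nabla s\cdot b_F + \epsilon\,\Delta s\|^2 + \epsilon(t_{k+1}-t_k)^2\sup_\tau \mathbb{E}\|\nabla s\|_F^2 ,
\end{equation*}
together with the analogous bound for $v$. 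After dividing by the overall $4\epsilon$ factor from Girsanov, the first contribution produces the $\epsilon^{-1}(t_{k+1}-t_k)^3$ term and the second produces the $(t_{k+1}-t_k)^2$ term of the theorem.

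The hard part, and the novel piece of the argument, is bounding $\partial_t s$, $\nabla s$, $\Delta s$, $\partial_t v$ and $\nabla v$ pointwise without any Lipschitz hypothesis, and this is precisely where the Gaussian latent $z$ inside the stochastic interpolant is essential. Using Tweedie's identity, $s(t,x) = -\gamma(t)^{-2}\,\mathbb{E}[x - I(t,x_0,x_1)\mid x_t = x]$, and the analogous conditional-expectation representation $v(t,x)=\mathbb{E}[\partial_tI(t,x_0,x_1)\mid x_t=x]$, one can differentiate inside the conditional expectation: spatial derivatives become conditional covariances (Stein's identity applied to the Gaussian $z$), while time derivatives become conditional expectations of $\partial_t I$ and $\partial_t^2 I$ together with $\dot\gamma$-terms. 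Each such derivative contributes a power of $\bar\gamma_k^{-1}$ (explaining the $\bar\gamma_k^{-2}$ and $\bar\gamma_k^{-6}$ factors), while the conditional-moment content is then controlled pathwise by Assumption \ref{a:regularity}. The main bookkeeping obstacle will be tracking the $\gamma$-powers inside $\nabla s\cdot b_F + \epsilon\,\Delta s$, since $b_F$ itself contains an $\epsilon s$ piece that is singular in $\gamma$; in particular, the $\bar\gamma_k^{-6}d^3$ factor arises from squaring the third-order-in-$\gamma^{-1}$ term $\Delta s$ and taking the Frobenius-norm dimension factors, and the eighth-moment hypothesis on $\|x_0-x_1\|$ (stronger than the fourth-moment bound used in the continuous-time interpolant analyses) comes from a double application of Cauchy--Schwarz when expanding these squared conditional expectations.
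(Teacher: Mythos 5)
Your proposal is correct and follows essentially the same route as the paper: the Girsanov/data-processing reduction to the integrated drift mismatch, the split into estimation and discretization errors, It\^o's formula on $v$ and $s$ with Jensen/Cauchy--Schwarz on the drift integrals and It\^o isometry on the martingale part, and derivative bounds obtained by differentiating the Gaussian-kernel conditional-expectation representations of $v$ and $s$ (yielding conditional covariances), which is exactly the content of Lemmas \ref{lem:v-time}--\ref{lem:s-laplace}. Your accounting of where the $\bar\gamma_k^{-6}d^3$ factor and the eighth-moment hypothesis enter also matches the paper's derivation.
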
 

\Cref{thm:main} provides the first finite-time error bound for the  discrete-time stochastic interpolant framework (\ref{eq:forward-sde}), i.e., SDE (\ref{eq:estimated-sde}). It explicitly quantifies the impact of the initial distribution mismatch (i.e., $\text{KL}(\rho(t_0)\Vert\hat{\rho}(t_0))$) and the $b_F$ estimation error (i.e.,  $\varepsilon_{b_F}^2$), demonstrates their dependence on the choice of latent scale $\gamma(t)$ and the time discretization schedule $\{t_k\}_{k=0}^N$. Notably, the bound offers a novel theoretical explanation for how the convergence behavior depends on the distance between the source and destination distributions, as reflected in the terms involving $\mathbb{E}\Vert x_0-x_1\Vert^p$ with $p=4$ and $p=8$.

Now we explain the terms in \Cref{thm:main}.

The terms $(t_{k+1}-t_k)^3\left[M_2+\bar{\gamma}_k^{-2}d\sqrt{\mathbb{E}\Vert x_0-x_1\Vert^8}\right]$ and $(t_{k+1}-t_k)^2\bar{\gamma}_k^{-2}d\sqrt{\mathbb{E}\Vert x_0-x_1\Vert^4}$ quantify the discretization error associated with the velocity function $v(t,x)$ (\Cref{def:v}), which is a component of the drift term $b_F(t,x)$. Notably, the distance of form $\mathbb{E}\Vert x_0-x_1\Vert^p$ is involved here, highlighting the influence of the distance between the source and target distributions on the discretization error. 

Conversely, the terms $(t_{k+1}-t_k)^3\bar{\gamma}_k^{-6}d^3$ and $(t_{k+1}-t_k)^2\bar{\gamma}_k^{-4}d^2$ quantify the discretization error arising from the score function $s(t,x)$. This component of the discretization error exhibits a stronger dependence on the latent scale $\gamma(t)$ and the data dimension $d$. Specifically, assuming sufficiently small step sizes, the dependence of the score discretization error on $\gamma(t)$ is $\bar{\gamma}_k^{-4}$ (if we assume sufficiently small step sizes), which aligns with the findings for diffusion models \cite{chen2023improved, dlinear}.

Since $\gamma(0)=\gamma(1)=0$, the discretization error will become unbounded if we were to simulate the estimated SDE (\ref{eq:estimated-sde}) from $t_0=0$ to $t_N=1$. To address this, we choose $0<t_0<t_N<1$ to ensure that $\bar{\gamma}_k$ is lower bounded, thereby maintaining a finite discretization error bound in \Cref{thm:main}. Under this approach, the SDE is simulated within the interval $[t_0,t_N]$, and an estimation of $\rho(t_N)$ is obtained instead of $\rho_1$. This is acceptable when $t_N$ is sufficiently close to $1$, as $\rho(t_N)$ will be sufficiently close to $\rho_1$ (e.g., in terms of Wasserstein distance). This practice of choosing $t_N<1$ is analogous to the early stopping technique commonly employed in diffusion models \cite{song2021scorebased,chen2023improved,dlinear}. 

The term $\text{KL}(\rho(t_0)\Vert\hat{\rho}(t_0))$ quantifies the effect of choosing a slightly different base distribution. As discussed earlier, we typically choose $t_0>0$, and in many cases, the true base distribution $\rho(t_0)$ may not be readily available. This bound theoretically supports the use of a similar base distribution $\hat{\rho}(t_0)$ as an approximation for the true base distribution. 

Finally, the term $\varepsilon_{b_F}^2$ accounts for the error in estimating the drift term $b_F(t,x)$. 
Compared to previous continuous-time analyses of the stochastic interpolant framework, which typically measure the estimation error by an averaged error over the entire time interval \cite{flows, interpolation}, our analysis evaluates the estimation error using a weighted average of the errors at the discretized time points.

The bound provided by \Cref{thm:main} explicitly depends on the choice of latent scale $\gamma(t)$ and the time schedule $\{t_k\}_{k=0}^N$. This dependence can be leveraged to assess the computational complexity for a given time schedule under a specific choice of $\gamma(t)$. In Section \ref{sec:instance}, we will develop a time schedule that achieves a fast convergence rate. 

We now provide a proof sketch for  Theorem \ref{thm:main}. 

\paragraph{Proof Sketch of Theorem \ref{thm:main}}
The proof of \Cref{thm:main} contains two key steps. In step one, we establish a bound on the KL divergence due to discretization error in the drift term $b_F(t,x)$ of the SDE, based on Girsanov's theorem. 
Then, in step two, we exploit special structure of the $b_F(t,x)$ by expressing its derivatives as conditional covariances, enabling the application of relevant expectation inequalities and eventually bounding the discretization error. 





\paragraph{Step One: Bounding the KL-divergence with Discretization Error.}

Leveraging the results provided by \citet{chen2023ddpm} (see Proposition \ref{prop:girsanov}), which are derived using Girsanov's theorem, we obtain the following bound:
$$\begin{aligned}
    &\qquad\textnormal{KL}(\rho(t_N)\Vert\hat{\rho}(t_N))\\
    &\le\text{KL}(\rho(t_0)\Vert\hat{\rho}(t_0))+\text{KL}(P\Vert Q)\\
    &=\underbrace{\text{KL}(\rho(t_0)\Vert\hat{\rho}(t_0))}_{\text{Initialization error}}\\
    &\quad+\frac{1}{4\epsilon}\sum_{k=0}^{N-1}\int_{t_k}^{t_{k+1}}\mathbb{E}[\Vert b_F(t,X_t^F)-\hat{b}_F(t_k,X_{t_k}^F)\Vert^2]\dd t
\end{aligned}$$
where $P$ and $Q$ represent the path measures of the solutions to the true SDE (\ref{eq:forward-sde}) and estimated SDE (\ref{eq:estimated-sde}), respectively, both with the same initial distribution $\rho(t_0)$. Applying the triangle inequality yields: 
$$\begin{aligned}
    &\qquad\mathbb{E}[\Vert b_F(t,X_t^F)-\hat{b}_F(t_k,X_{t_k}^F)\Vert^2]\\
    &\le2\underbrace{\mathbb{E}[\Vert b_F(t,X_t^F)-b_F(t_k,X_{t_k}^F)\Vert^2]}_{\text{Discretization error}}\\
    &\quad+2\underbrace{\mathbb{E}[\Vert b_F(t_k,X_{t_k}^F)-\hat{b}_F(t_k,X_{t_k}^F)\Vert^2}_{\text{Estimation error}}]
\end{aligned}$$
The second term on the right-hand-side corresponds to the estimation error of $\hat{b}_F(t,x)$, and its summation can be bounded by $\varepsilon_{b_F}^2$ according to Assumption \ref{a:estimation}. The first term, on the other hand, represents the discretization error associated with $b_F(t,x)$ and requires further analysis.

\paragraph{Step Two: Bound the Discretization Error.} 
We now bound the discretization error above. 
A central tool in this part is the It\^o's formula. By applying It\^o's formula, we obtain:
\begin{equation}
    \begin{aligned}
        \int_{t_k}^t\dd b_F(s,X_s^F)&=\int_{t_k}^t\partial_sb_F(s,X_s^F)\dd s\\
        &\quad+\int_{t_k}^t\nabla b_F(s,X_s^F)\cdot b_F(s,X_s^F)\dd s\\
        &\quad+\int_{t_k}^t\epsilon\Delta b_F(s,X_s^F)\dd s\\
        &\quad+\int_{t_k}^t\sqrt{2\epsilon}\nabla b_F(s,X_s^F)\cdot\dd W_s.
    \end{aligned}
    \label{eq:ito-bf}
\end{equation}
While the application of It\^o's formula is analogous to that of \citet{dlinear}, we refrain from eliminating the three linear terms due to their more complex forms in the context of stochastic interpolants. Instead, we apply Jensen's inequality on the integrals with respect to time and apply It\^o's isometry (\citealt{le2016brownian}, Equation 5.8) on the integral with respect to Brownian motion, so that we can bound the term by the derivatives of $b_F(t,x)$ 
(i.e., terms like $\partial_tb_F(t,x)$ and $\nabla_xb_F(t,x)$, see Lemma \ref{lem:discretize}). 


Since $b_F(t,x)$ can be expressed as a linear combination of $v(t,x)$ and $s(t,x)$, it remains to bound the derivatives of both $v(t,x)$ and $s(t,x)$, respectively.  
Note that $v(t,x)$ and $s(t,x)$ can be written as the conditional expectations of $\partial_tI$ and $\gamma^{-1}z=\frac{x_t-I}{\gamma^2}$ given $x_t=x$. To bound the derivatives of these conditional expectations, we employ the following key equality:
$$\begin{aligned}
    \partial_{\alpha}\mathbb{E}[f|x_t=x]
    &=\mathbb{E}[\partial_{\alpha}f|x_t=x]\\
    &+\text{Cov}(f,\partial_{\alpha}[-\Vert x-I\Vert^2/2\gamma^2]|x_t=x).\\
\end{aligned}$$
Here $\alpha$ can represent either $x$ or $t$. This equality crucially relies on the Gaussian latent term $\gamma(t)z$ introduced in the stochastic interpolant framework. This generalizes the result for $s(t,x)=\nabla\log\rho(t,x)=\mathbb{E}[\gamma^{-2}(I-x_t)|x_t=x]$ 
in diffusion models, as found in previous works (see, e.g., \citealt{manifold, dlinear}). We apply this equality extensively and derive bounds for both $s(t,x)$ and $v(t,x)$, where the function $v(t,x)$ does not appear or appears in a much simpler form (such as $x$) in the context of diffusion model theories. Subsequently, we apply a series of inequalities to ultimately bound the expectation over $x_t$. Detailed derivations and proofs can be found in \ref{appendix:lemmas}.
\section{Schedule Design for Faster Convergence}
\label{sec:instance}



In \Cref{thm:main}, we provide an upper bound on the KL divergence from the target distribution to the estimated distribution for a general class of SDE-based generative models. Since the bound depends on the choice of latent scale $\gamma(t)$ and schedule $\{t_k\}_{k=0}^N$, we are able to carefully design a time schedule for a given latent scale, thereby achieving a provably bounded error within a minimum number of steps.

Specifically, we consider the common choice of latent scale in stochastic interpolants, $\gamma(t)=\sqrt{at(1-t)}$, which is first introduced in \citet{interpolation}. 
This choice is equivalent to changing the definition $$x_t=I(t,x_0,x_1)+\gamma(t)z$$
to $$x_t=I(t,x_0,x_1)+\sqrt{a}\dd B_t,$$
where $B_t$ is a standard Brownian bridge process independent of $(x_0,x_1)$. 
For this $\gamma(t)$, we present the following time schedule to optimize the sample complexity.

\paragraph{Exponentially Decaying Time Schedule} 

As suggested by \Cref{thm:main}, smaller steps need to be taken in order to balance the error terms. Moreover, to exactly cancel the $\gamma$-terms, we need $h_k=O(\bar{\gamma}_k^2)$ where $\bar{\gamma}$ is defined in \Cref{thm:main}. Hence, we propose an exponentially decaying time schedule inspired by the approach of \citet{dlinear}. 
Specifically, we first select a midpoint $t_M=\frac{1}{2}$. Let $h\in(0,1)$ be a parameter that controls the step size. We then define the time steps as follows:
$$t_{k+1}-t_k=\begin{cases}\frac{1}{2}h(1-h)^{M-k-1},&k<M\\\frac{1}{2}h(1-h)^{k-M},&k\ge M.\end{cases}$$
This leads to $$t_k=\begin{cases}
    \frac{1}{2}(1-h)^{M-k},&k<M\\
    1-\frac{1}{2}(1-h)^{k-M},&k\ge M.
\end{cases}$$
The parameter $h$ determines the overall scale of the step sizes. A smaller $h$ results in a finer discretization of the time interval. 

Let $h_k=t_{k+1}-t_k$ denote the step size at the $k$-th step. We observe that $$h_k=O(h\min\{t_k,1-t_{k+1}\})=O(h\bar{\gamma}_k^2),$$
which satisfies the condition of canceling the $\gamma$-terms.
Moreover, the total number of steps is given by
$$\begin{aligned}
    N&=O\left(\frac{\log(1/t_0)+\log(1/(1-t_N))}{\log(1/(1-h))}\right)\\
    &=O\left(h^{-1}\log\left(\frac{1}{t_0(1-t_N)}\right)\right).
\end{aligned}$$

Now we can provide the following bound:

\begin{proposition}
    Consider the same settings as in \Cref{thm:main}. Suppose $h_k=t_{t+1}-t_k=O(h\bar{\gamma}^2)$, $\epsilon=\Theta(1)$ and $h=O(\frac{1}{d})$. 
    Then, we have
    $$\begin{aligned}
        \textnormal{KL}(\rho(t_N)\Vert\hat{\rho}(t_N))&\lesssim\epsilon^{-1}\varepsilon_{b_F}^2+\textnormal{KL}(\rho(t_0)\Vert\hat{\rho}(t_0))\\
        &+hd\sqrt{\mathbb{E}\Vert x_0-x_1\Vert^4}+Nh^2d^2.
    \end{aligned}$$
    \label{cor:schedule}
\end{proposition}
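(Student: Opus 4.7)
The plan is to substitute the prescribed step size $h_k := t_{k+1}-t_k = O(h\bar\gamma_k^2)$ directly into each of the five discretization terms appearing in Theorem~\ref{thm:main} and exploit the cancellations this produces with the negative powers of $\bar\gamma_k$ in those terms. Since $\epsilon = \Theta(1)$, the prefactor $\epsilon^{-1}$ is an absolute constant absorbed into $\lesssim$. I would group the bound into the three cubic-in-$h_k$ contributions and the two quadratic-in-$h_k$ contributions and simplify each in turn.

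For the cubic contributions, the substitution yields $h_k^3 M_2 = h^3 \bar\gamma_k^6 M_2$, $h_k^3 \bar\gamma_k^{-6} d^3 = h^3 d^3$, and $h_k^3 \bar\gamma_k^{-2} d\sqrt{\mathbb{E}\|x_0-x_1\|^8} = h^3 \bar\gamma_k^4 d\sqrt{\mathbb{E}\|x_0-x_1\|^8}$ per step. For the first and third terms, I would peel off one copy of $h_k = h\bar\gamma_k^2$ and use $\sum_k h_k \le t_N - t_0 \le 1$ together with the uniform upper bound on $\bar\gamma_k^2$; this produces an $O(h^2)$ contribution times constants that are finite by Assumption~\ref{a:regularity}, and is absorbed into $Nh^2 d^2$. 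The second cubic term sums to $Nh^3 d^3$, and the hypothesis $h = O(1/d)$ converts this into $Nh^2 d^2 \cdot (hd) \le Nh^2 d^2$.

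For the quadratic contributions, the substitution yields $h_k^2 \bar\gamma_k^{-2} d\sqrt{\mathbb{E}\|x_0-x_1\|^4} = h \cdot h_k \cdot d\sqrt{\mathbb{E}\|x_0-x_1\|^4}$ and $h_k^2 \bar\gamma_k^{-4} d^2 = h^2 d^2$. Summing the first and using $\sum_k h_k \le 1$ yields the target term $hd\sqrt{\mathbb{E}\|x_0-x_1\|^4}$, while summing the second over $N$ steps gives $Nh^2 d^2$. Adding these surviving contributions to the initialization term $\textnormal{KL}(\rho(t_0)\Vert\hat\rho(t_0))$ and the estimation error $\varepsilon_{b_F}^2$ directly inherited from Theorem~\ref{thm:main} then reproduces the claimed bound.

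The main obstacle is essentially bookkeeping: one must verify that in every term appearing in Theorem~\ref{thm:main} the negative powers of $\bar\gamma_k$ are dominated by the positive powers produced by $h_k^p = h^p \bar\gamma_k^{2p}$ with $p \in \{2,3\}$, so that the surviving exponents of $\bar\gamma_k$ are nonnegative and hence uniformly bounded (using that $\bar\gamma_k^2 = O(\min\{t_k, 1-t_{k+1}\}) = O(1)$ for $\gamma(t)=\sqrt{at(1-t)}$, which also confirms the assumed scaling $h_k = O(h\bar\gamma_k^2)$ for the exponentially decaying schedule). Once this domination is established, each sum telescopes via $\sum_k h_k \le 1$ or is crudely bounded by $N \cdot \max_k(\cdot)$, and the two scale conditions $\epsilon = \Theta(1)$ and $h = O(1/d)$ dispose of the remaining constants.
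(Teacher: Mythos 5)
Your proposal is correct and follows essentially the same route as the paper's proof: substitute $h_k=O(h\bar{\gamma}_k^2)$ into each term of Theorem~\ref{thm:main} so the negative powers of $\bar{\gamma}_k$ cancel, bound the surviving sums via $\sum_k h_k\le t_N-t_0\le 1$ or by $N$ times the per-step bound, and use $h=O(1/d)$ together with $\epsilon=\Theta(1)$ to absorb the $Nh^3d^3$ and the residual $O(h^2)$ moment terms. The only cosmetic difference is that you absorb the higher-order $h^2(M_2+d\sqrt{\mathbb{E}\Vert x_0-x_1\Vert^8})$ contribution into $Nh^2d^2$ explicitly, where the paper simply declares it higher order and drops it.
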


\Cref{cor:schedule} provides the KL error bound when the step sizes is chosen so that the $\gamma$-terms are canceled.

\begin{corollary}
    Using $\gamma=\sqrt{at(1-t)}$ and the time schedule defined above, suppose that $\textnormal{KL}(\rho(t_0)\Vert\hat{\rho}(t_0))\le\varepsilon^2$ and $\varepsilon^2_{b_F}\le\epsilon\cdot\varepsilon^2$. Furthermore, assume that $\epsilon=\Theta(1)$ and $h=O(\frac{1}{d})$. Then, under the same settings as in \Cref{thm:main}, the total number of steps required to achieve $\textnormal{KL}(\rho(t_N)\Vert\hat{\rho}(t_N))=O(\varepsilon^2)$ is:
    $$\begin{aligned}
        N=O\left\{\frac{1}{\varepsilon^2}\left[\sqrt{\mathbb{E}\Vert x_0-x_1\Vert^4}d\log\left(\frac{1}{t_0(1-t_N)}\right)\right.\right.\\
        \left.\left.+d^2\log^2\left(\frac{1}{t_0(1-t_N)}\right)\right]\right\}.
    \end{aligned}$$
    \label{cor:instant}
\end{corollary}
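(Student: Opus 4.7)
The plan is to obtain Corollary \ref{cor:instant} as a direct algebraic consequence of Proposition \ref{cor:schedule} combined with the step-count formula $N=O(h^{-1}\log(1/(t_0(1-t_N))))$ derived for the exponentially decaying schedule. The goal is to pick the free parameter $h$ so that every one of the four summands in the KL bound of \Cref{cor:schedule} is $O(\varepsilon^2)$, then read off the resulting $N$.

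First, I would record the assumptions $\textnormal{KL}(\rho(t_0)\Vert\hat{\rho}(t_0))\le\varepsilon^2$ and $\varepsilon_{b_F}^2\le\varepsilon^2$, which immediately handle the first two terms of \Cref{cor:schedule}. The remaining terms are $hd\sqrt{\mathbb{E}\Vert x_0-x_1\Vert^4}$ and $Nh^2d^2$. Substituting the schedule's identity $N=O(h^{-1}L)$ with $L:=\log(1/(t_0(1-t_N)))$, the last term becomes $O(hd^2 L)$, so the overall bound reduces to
\begin{equation*}
\textnormal{KL}(\rho(t_N)\Vert\hat{\rho}(t_N))\lesssim\varepsilon^2+hd\sqrt{\mathbb{E}\Vert x_0-x_1\Vert^4}+hd^2 L.
\end{equation*}

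Second, I would choose $h$ to balance the two remaining terms against $\varepsilon^2$. The natural choice is
\begin{equation*}
h=\Theta\!\left(\min\!\left\{\frac{\varepsilon^2}{d\sqrt{\mathbb{E}\Vert x_0-x_1\Vert^4}},\;\frac{\varepsilon^2}{d^2 L}\right\}\right),
\end{equation*}
which drives each of the two residual contributions to $O(\varepsilon^2)$. I would briefly verify that this choice is consistent with the standing hypothesis $h=O(1/d)$ from \Cref{cor:schedule}: the second factor in the min is already $O(\varepsilon^2/(d^2 L))\le O(1/d)$ for the regime of interest (small $\varepsilon$, $L\gtrsim 1$), and if ever the first factor dominates one can additionally cap $h$ by $1/d$ without changing the asymptotic order of $N$ up to a lower-order term.

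Finally, plugging the chosen $h$ into $N=O(h^{-1}L)$ gives
\begin{equation*}
N=O\!\left(\frac{L}{\varepsilon^2}\max\!\left\{d\sqrt{\mathbb{E}\Vert x_0-x_1\Vert^4},\;d^2 L\right\}\right)=O\!\left(\frac{1}{\varepsilon^2}\!\left[d\sqrt{\mathbb{E}\Vert x_0-x_1\Vert^4}\,L+d^2 L^2\right]\right),
\end{equation*}
which is exactly the stated bound after expanding $L$. The main (and essentially only) nontrivial point is the consistency check that the selected $h$ still respects $h=O(1/d)$ so that \Cref{cor:schedule} can be invoked; everything else is routine balancing. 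I do not foresee a conceptual obstacle, since the heavy lifting — in particular the cancellation of the $\gamma$-terms via $h_k=O(h\bar{\gamma}_k^2)$ and the telescoping estimate $N=O(h^{-1}L)$ — has already been carried out before the corollary is stated.
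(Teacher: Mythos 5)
Your proposal is correct and is essentially the paper's own argument: the paper simply parameterizes in the other direction, fixing $N$ and setting $h=\Theta(N^{-1}\log(1/(t_0(1-t_N))))$ so that the bound of Proposition \ref{cor:schedule} becomes $\varepsilon^2+N^{-1}[d\sqrt{\mathbb{E}\Vert x_0-x_1\Vert^4}L+d^2L^2]$, then solves for $N$; your version solves for $h$ first and recovers $N=O(h^{-1}L)$, which is the same algebra. Your added consistency check that the chosen $h$ satisfies $h=O(1/d)$ is a reasonable (and slightly more careful) touch that the paper leaves implicit.
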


Corollary \ref{cor:instant} provides the computational complexity of sampling data using the forward SDE. For a fixed error bound $\varepsilon$, the complexity scales proportionally to $\varepsilon^{-2}$. We can further decompose the complexity into distance-related complexity and Gaussian diffusion complexity. 
Here $O\left(\frac{1}{\varepsilon^2}\sqrt{\mathbb{E}\Vert x_0-x_1\Vert^4}d\log\left(\frac{1}{t_0(1-t_N)}\right)\right)$ is the distance-related complexity representing the number of steps required to achieve a sufficiently small discretization error associated with the velocity function $v(t, x)$. $O\left(\frac{1}{\varepsilon^2}d^2\log^2\left(\frac{1}{t_0(1-t_N)}\right)\right)$ is the Gaussian diffusion complexity  representing the number of steps required to achieve a sufficiently small discretization error associated with the score function $s(t, x)$.

We briefly explain how to obtain this complexity. First, given a desired number of steps $N$, we select $$h=\Theta\left(N^{-1}\log\left(\frac{1}{t_0(1-t_N)}\right)\right)$$
to achieve the specified number of steps. Since $h_k=O(\bar{\gamma}_k^2h)$, we have: 
$$\begin{aligned}
    \sum_{k=0}^{N-1}h_k^3\left[M_2+\bar{\gamma}_k^{-6}d^3+\bar{\gamma}_k^{-2}d\sqrt{\mathbb{E}\Vert x_0-x_1\Vert^{8}}\right]\\\le Nh^3d^3+h^2\left(M_2+d\sqrt{\mathbb{E}\Vert x_0-x_1\Vert^8}\right),
\end{aligned}$$
and 
$$\begin{aligned}
    \sum_{k=0}^{N-1}\left(h^2d^2+h_khd\sqrt{\mathbb{E}\Vert x_0-x_1\Vert^4}\right)\\
    \le Nh^2d^2+hd\sqrt{\mathbb{E}\Vert x_0-x_1\Vert^4}.
\end{aligned}$$
By substituting the chosen value of $h$ for the given $N$ into \Cref{thm:main}, we can derive the stated complexity bound.

\paragraph{Comparison to a Uniform Schedule.} 
To highlight the benefits of our proposed exponentially decaying time schedule, we compare it with a natural uniform schedule that satisfies $h_k = \frac{t_N - t_0}{N} \approx \frac{1}{N}$. 
We further assume the ideal case where  $\varepsilon_{b_F}^2=0$ and $\rho(t_0)=\hat{\rho}(t_0)$ in our analysis. 

According to \Cref{thm:main}, the error bound for the uniform schedule is given by
$$\begin{aligned}
    &\quad\sum_{k=0}^{N-1}h_k^3(M_2+\bar{\gamma}_k^{-6}d^3+\bar{\gamma}_k^{-2}d\sqrt{\mathbb{E}\Vert x_0-x_1\Vert^8})\\
    &+\sum_{k=0}^{N-1}h_k^2(\bar{\gamma}_k^{-4}d^2+\bar{\gamma}_k^{-2}d\sqrt{\mathbb{E}\Vert x_0-x_1\Vert^4}).
\end{aligned}$$
Since $\bar{\gamma}_k^2=\Theta(\min\{t_k,1-t_{k+1}\})$, and noting that $$\int_{\delta}^{0.5}t^{-p}\dd t=
\begin{cases}
    \Theta(\log(1/\delta)),&p=1\\
    \Theta(\delta^{-(p-1)}),&p>1
\end{cases}$$
for a uniform schedule, the overall error bound becomes:
$$\begin{aligned}
    &\qquad\textnormal{KL}(\rho(t_N)\Vert\hat{\rho}(t_N))\\&=O\left(\frac{1}{N}\left[\sqrt{\mathbb{E}\Vert x_0-x_1\Vert^4}d\log\left(\frac{1}{t_0(1-t_N)}\right)\right.\right.\\
    &\qquad\qquad\qquad\left.\left.+\frac{1}{t_0(1-t_N)}d^2\right]\right).
\end{aligned}$$
Consequently, the complexity of using a uniform schedule is given by 
\begin{eqnarray*}
N&=&O\bigg(\varepsilon^{-2}\bigg[\log\left(\frac{1}{t_0(1-t_N)}\right)d\sqrt{\mathbb{E}\Vert x_0-x_1\Vert^4}\\&& \qquad\qquad\qquad+\frac{1}{t_0(1-t_N)}d^2\bigg]\bigg),
\end{eqnarray*}
which exhibits a higher computational complexity compared to the proposed exponentially decaying schedule.

\paragraph{Comparison to Diffusion Models Results.} By setting $I(t,x_0,x_1)=(1-t)x_0+tx_1$, $\gamma(t)=\sqrt{2t(1-t)}$, $x_0\sim\rho_0=\mathcal{N}(0,I_d)$, and assuming that $x_0$ and $x_1$ are independent, the stochastic interpolant reduces to $x_t=\sqrt{1-t^2}\bar{z}+tx_1$ for some $\bar{z}\sim \mathcal{N}(0,I_d)$, which fits the diffusion model setting \cite{song2021scorebased}. 
Assuming that the fourth moment of $\rho_1$ is bounded by a constant (see \Cref{appendix:reduce-to-gaussian} for details), the complexity of our approach simplifies to $$N=O\left(\varepsilon^{-2}d^2\log^2\left(\frac{1}{1-t_N}\right)\right).$$

For diffusion models with an early stopping time $\delta$, \citet{chen2023improved} established a complexity bound of $\tilde{O}\left(\varepsilon^{-2}d^2\log^2\left(\frac{1}{\delta}\right)\right)$.
By setting $\delta=1-t_N$ in our analysis, we recover the same complexity bound as that obtained for diffusion models. 
While \citet{dlinear} further improves the complexity bound for diffusion models to $\tilde{O}\left(\varepsilon^{-2}d\log^2\left(\frac{1}{\delta}\right)\right)$ by leveraging techniques from stochastic localization, these techniques heavily rely on the Gaussian structure of diffusion models and cannot be directly applied to the more general stochastic interpolant framework.

\paragraph{Other Choices of $\gamma(t)$.} In addition to the commonly used $\gamma(t)=\sqrt{at(1-t)}$, our framework can readily be extended to analyze other choices of $\gamma(t)$. In Appendix \ref{appendix:another}, we present an analysis for $\gamma^2(t)=(1-s)^2s$, which is equivalent to the definition in \citet{chen2024forcasting}. We show that the proposed time schedule in Appendix \ref{appendix:another} also outperforms the uniform schedule in terms of computational complexity, demonstrating the effectiveness of our schedule design. 


%
\section{Numerical Experiments}
\label{sec:experiments}

\begin{figure}[tb]
    \centering
    \includegraphics[width=\linewidth]{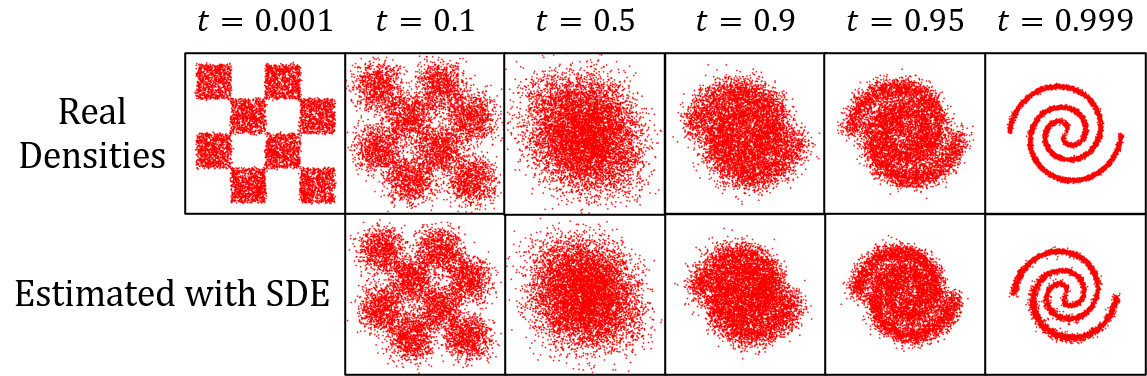}
    \caption{An illustration of the interpolants. We choose $I(t,x_0,x_1)=(1-t)x_0+tx_1$ and $\gamma(t)=\sqrt{2t(1-t)}$. The first row of graphs shows the densities given by (\ref{eq:stochastic-interpolant}). The second row of graphs shows the estimated densities using SDE (\ref{eq:estimated-sde}), where the estimator $\hat{b}_F$ is learned by a neural network.}
    \label{fig:1}
\end{figure}

\begin{figure}[htb]
    \centering
    \begin{subfigure}[b]{0.9\linewidth}
        \includegraphics[width=\linewidth]{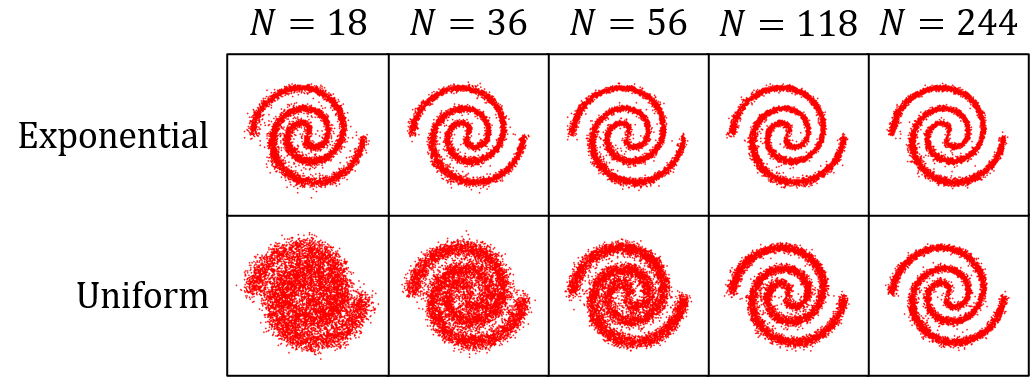}
        \caption{An illustration of generation results using different schedules.}
        \label{fig:2}
    \end{subfigure}
    \begin{subfigure}[b]{0.9\linewidth}
        \includegraphics[width=\linewidth]{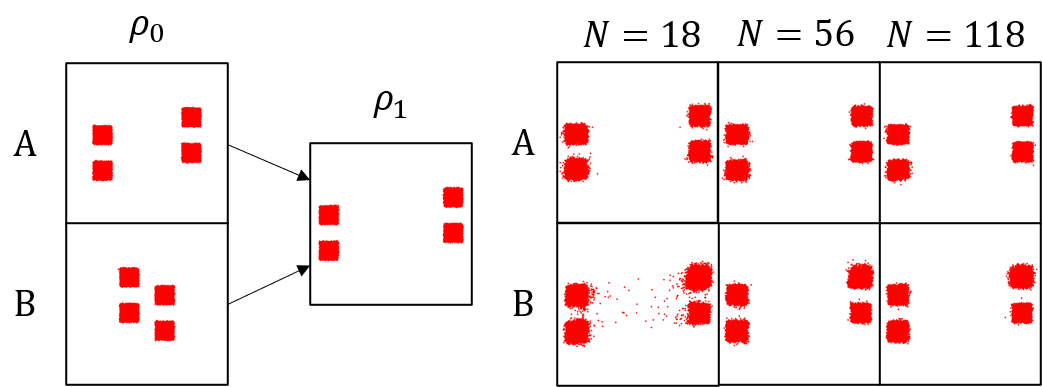}
        \caption{A view of different $\rho_0$ and their effects on the convergence rate.}
        \label{fig:4}
    \end{subfigure}
    \caption{A visualization of experiments.}
    \label{fig:2-3}
\end{figure}

\begin{figure}[htb]
    \centering
    \begin{subfigure}[b]{0.48\linewidth}
        \includegraphics[width=\linewidth]{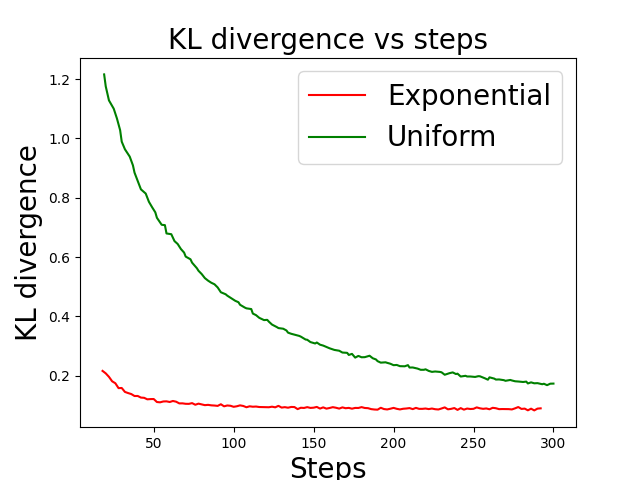}
        \caption{Different schedules. }
        \label{fig:3}
    \end{subfigure}
    \begin{subfigure}[b]{0.48\linewidth}
        \includegraphics[width=\linewidth]{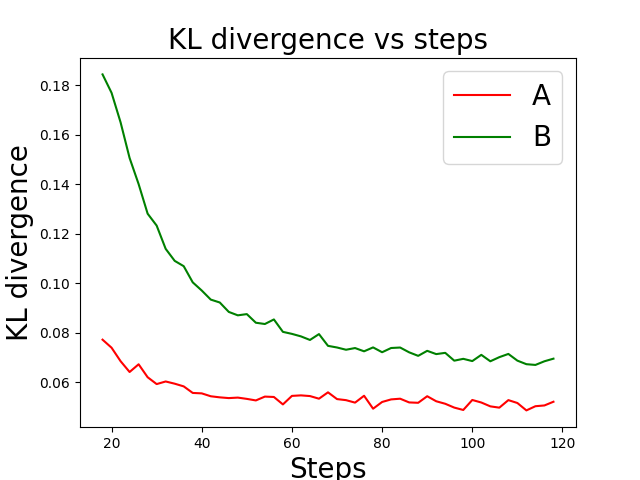}
        \caption{Different $\rho_0$ and $\nu$.}
        \label{fig:5}
    \end{subfigure}
    \caption{Estimated KL divergences. The $x$-axis refers to the number of steps, while the $y$-axis refers to the empirical KL divergence between target distribution $\rho(t_N)$ and estimated distribution $\hat{\rho}(t_N)$. \Cref{fig:3} compares the exponentially decaying schedule (red line) with the uniform schedule (green line). \Cref{fig:5} compares different settings of $\rho_0$ and $\nu$ (setting A: red line; setting B: green line).}
    \label{fig:4-5}
\end{figure}

In this section, we conduct experiments to validate our theoretical results. We implement the discretized sampler as defined in Equation (\ref{eq:sampler}), and evaluate its performance on two-dimensional datasets (primarily from \citealt{grathwohl2018scalable}) and Gaussian mixtures. The experiments focus on the following three aspects: (i) the convergence rate using the exponentially decaying schedule, (ii) the effect of choosing different $\rho_0$ as base densities, which is reflected by the distribution distance terms in our bound, and (iii) the dependence of the error bound on $d$.



We employ 
$I(t,x_0,x_1)=tx_1+(1-t)x_0$,
$\gamma(t)=\sqrt{2t(1-t)}$
and $\epsilon=1$ in our experiments.
Figure \ref{fig:1} presents a visualization of the interpolants and the estimated densities generated using the forward SDE. As we can see in Figure \ref{fig:1}, the density defined by the stochastic interpolant progressively changes from the checkerboard density to the spiral density, and the estimated density given by SDE (\ref{eq:estimated-sde}) well tracks the change of the real density. 

\paragraph{Comparison of Different Time Schedules} We now compare the performance of different schedules, namely, those proposed in Section \ref{sec:instance} and the uniform step sizes. The task involves transforming from a "checkerboard" density to a "spiral" density. As illustrated in Figure \ref{fig:2}, employing exponentially decaying step sizes results in about $10\times$ faster convergence of the target density compared to using uniform step sizes.  

To quantitatively assess this, we estimate the KL divergence between the target density and the generated densities using sampled data points, and we put the details of the estimation of KL divergence in Appendix \ref{appendix:experiments}. The lower bound observed in the estimated distances is attributed to the estimation error $\varepsilon_{b_F}^2$ and the inherent randomness in the TV distance estimation process. Both Figure \ref{fig:2} and Figure \ref{fig:3} corroborate the superior convergence performance of exponentially decaying step sizes.


\vspace{-0.20cm}

\paragraph{Effect of Different Distribution Distances} Second, we investigate the impact of different source densities ($\rho_0$) and couplings ($\nu$) on the complexity of the generation process while keeping the target density ($\rho_1$) fixed. The densities are shown in \ref{fig:4}, where we have two source densities, namely ``A'' and ``B''. Each ``block" in $\rho_0$ is coupled with the corresponding ``block" in $\rho_1$ in the same vertical position.

Figure \ref{fig:5} shows the estimated TV distances between the generated data distribution and the target distribution for both scenarios. For both choices of $\rho_0$, we utilized the exponentially decaying step size schedule. The results demonstrate that the generation process converges faster when the source density ($\rho_0$) is closer to the target density ($\rho_1$) under the specified coupling ($\nu$). In contrast, the generation process converges slower when the source density is farther away from the target density. This observation highlights the influence of the initial distribution and the coupling structure on the overall convergence behavior.

\vspace{-0.35cm}

\paragraph{Test on $d$-dimensional Gaussian Mixtures} Next, we compare the performance of the sampler under different dimension $d$. The source and target distribution are chosen as Gaussian mixtures, as in this case the drift $b_F(t,x)$ has an analytical form \cite{interpolation}. The comparison is shown in Figure \ref{fig:d12}.

\begin{figure}[htb]
    \centering
    \begin{subfigure}[b]{0.48\linewidth}
        \includegraphics[width=\linewidth]{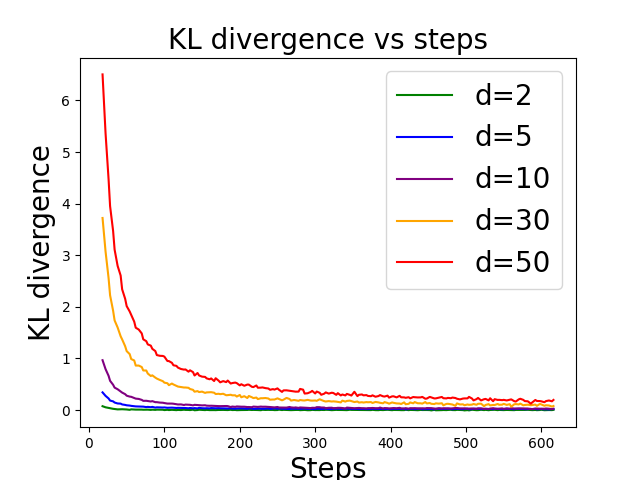}
        \caption{Convergence for different $d$.}
        \label{fig:d1}
    \end{subfigure}
    \begin{subfigure}[b]{0.48\linewidth}
        \includegraphics[width=\linewidth]{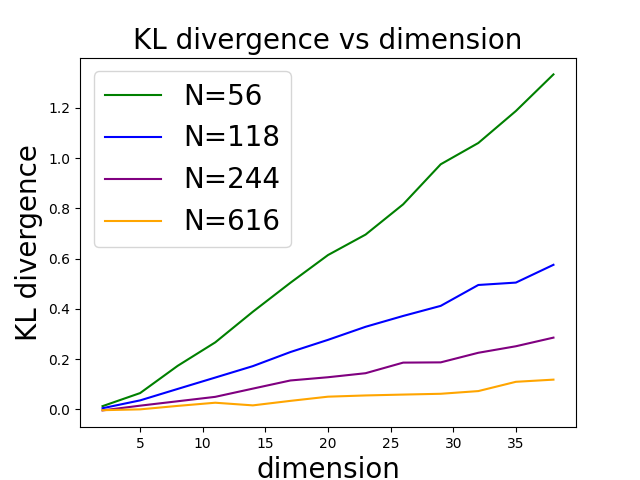}
        \caption{Error for fixed $N$.}
        \label{fig:d2}
    \end{subfigure}
    \caption{The impact of dimension $d$. Figure \ref{fig:d1} shows the convergence for different $d$. Figure \ref{fig:d2} compares the KL error for different $d$ under fixed number of iterations $N$.}
    \label{fig:d12}
\end{figure}

The upper bound of the KL error provided by our theory is $O(d^2)$, but the empirical KL error evaluated in this experiment grows almost linearly w.r.t. $d$. This gap could be due to several reasons. For example, there might be some special properties of Gaussian mixtures which do not hold for general distributions, or there simply exists a sharper bound that we have not found. Filling this gap would be an interesting direction for future research.
\section{Conclusion and Future Directions}

This paper provides the first discrete-time analysis for the SDE-based generative models within the stochastic interpolant framework. We formulate a discrete-time sampler using the Euler–Maruyama scheme to estimate the target distribution by leveraging learned velocity estimators. We then provide an upper bound on the KL divergence from the target distribution to the estimated distribution. 
Our result provides a novel quantification on how different factors, including 
the distance between source and target distributions $\sqrt{\mathbb{E}_{\nu}\Vert x_0-x_1\Vert^p}$ and the desired estimation accuracy $\varepsilon^2$, affect the convergence rate and also offers a new principled way to design efficient schedule for convergence acceleration. 
Finally, we also conduct numerical experiments with the discrete-time sampler, which validates our theoretical findings. 

Future research avenues can fruitfully explore the enhancement of convergence bounds, with a particular focus on addressing the dependency on the dimension $d$. Notably, diffusion models are demonstrated by previous works to exhibit near $d$-linear convergence rates, indicating the potential for improvement. Another direction is to investigate strategies for refining the sampling algorithm to attain convergence rates superior to the currently observed $O(\varepsilon^{-2})$. Furthermore, the finite-time convergence phenomenon of ODE-based generative models within the context of the stochastic interpolant framework warrants a more comprehensive investigation.



\section*{Grant Acknowledgement} This work is supported by the National Natural Science Foundation of China Grants 52450016 and 52494974.

\section*{Impact Statement}
This paper presents work whose goal is to advance the field of Machine Learning. There are many potential societal consequences of our work, none which we feel must be specifically highlighted here.

\bibliography{ref}
\bibliographystyle{icml2025}

\newpage
\appendix
\onecolumn
\appendixpage


\begin{itemize}
    \item \textbf{Appendix \ref{appendix:preliminaries}: Supplementary details for Section \ref{sec:preliminaries}} - This section provides additional details from \cite{interpolation} that were not included in Section \ref{sec:preliminaries}. It includes a formal statement of key equations and their associated conditions. Furthermore, it outlines the optimization objectives for the score and velocity estimators, which are used for model training in Section \ref{sec:experiments}.  
    
    \item \textbf{Appendix \ref{appendix:lemmas}: Useful lemmas in bounding derivatives} - This section presents lemmas that are essential for bounding the derivatives of velocity functions and score functions, along with their corresponding proofs. The proofs primarily rely on properties and inequalities related to (conditional) expectations. These lemmas play a crucial role in deriving the overall KL error bounds.
    
    \item \textbf{Appendix \ref{appendix:overall}: Proofs of results in Section \ref{sec:results} and \ref{sec:instance}} - This section provides the complete proofs for the results presented in Section \ref{sec:results} and Section \ref{sec:instance}. This includes the proof of \Cref{thm:main} (\Cref{appendix:proofofmain}), the proof of \Cref{cor:schedule} and \Cref{cor:instant} (Appendices \ref{appendix:proofofcor},\ref{appendix:proofofschedule}), and additional details regarding the discussions in Section \ref{sec:instance} (Appendix \ref{appendix:another}).

    \item \textbf{Appendix \ref{appendix:experiments}: More details of numerical experiments} - This section provides omitted details for Section \ref{sec:experiments}, including the parameterization of estimators, choice of $(t_0,t_N)$ and optimizers, and how the TV distance is estimated. We also include additional experiments for $\gamma(t)=\sqrt{(1-t)^2t}$.
\end{itemize}

\startcontents[section]
\printcontents[section]{l}{1}{\setcounter{tocdepth}{2}}
\newpage

\section*{Notations}

We use $\Vert\cdot\Vert$ to denote $\ell_2$ norm for both vectors and matrices. For a matrix $A$, we use $\Vert A\Vert_F=\sqrt{\sum_{ij}A_{ij}^2}$ to denote the Frobenious norm of $A$. We use $\frac{\dd}{\dd u}$, $\frac{\partial}{\partial u}$, or just $\partial_u$ to denote the (partial) derivative with respect to $u$. We use $\nabla$ to denote the gradient or Jacobian, depending on whether the function is scalar-valued or vector-valued. If not specified, for the function in form of $f(t,x)$ where $t$ is a scalar and $x$ is a vector, $\nabla f(t,x)$ means the gradient vector or Jacobian matrix with respect to $x$ rather than $t$. We use $\Delta f(t,x)=\sum_{i=1}^d\frac{\partial^2}{\partial x_i^2}f$ as the Laplace operator. We use $\mathbb{E}[X]$ to denote the expectation of a random variable $X$, and $\text{Cov}(X,Y)$ to denote the covariance of two random variables $X,Y$. $\mathbb{E}[X|c]$ and $\text{Cov}(X,Y|c)$ denote the corresponding conditional expectation and conditional covariance given condition $c$. We use the notation $f(x)\lesssim g(x)$ or $f(x)=O(g(x))$ to denote that there exists a constant $C>0$ such that $f(x)\le Cg(x)$.


\section{Supplementary Details for Section \ref{sec:preliminaries}}
\label{appendix:preliminaries}

This part summarizes some of the results from \cite{interpolation} that are not introduced in Section \ref{sec:preliminaries}.

\begin{proposition}
    (\cite{interpolation}, Theorem 2.6, Corollaries 2.10 and 2.18, and their proofs)
    Suppose that the joint measure $\nu$ and the function $I$ satisfies 
    \begin{equation}
        \underset{(x_0,x_1)\sim\nu}{\mathbb{E}}\Vert\partial_tI(t,x_0,x_1)\Vert^4\le M_1<\infty,\quad\underset{(x_0,x_1)\sim\nu}{\mathbb{E}}\Vert\partial_t^2I(t,x_0,x_1)\Vert^2\le M_2<\infty,\quad \forall t\in[0,1].
        \label{eq:assumption1}
    \end{equation}
    Then, $\rho\in C^1((0,1),C^p(\mathbb{R}^d))$, $s\in C^1((0,1),(C^p(\mathbb{R}^d))^d)$ and $b\in C^0((0,1),(C^p(\mathbb{R}^d))^d)$, and both the solution of the probability flow ODE
    $$\frac{\dd}{\dd t}X_t=b(t,X_t), \qquad X_0\sim\rho_0$$
    and the solution of the forward SDE
    $$\dd X_t^F=b_F(t,X_t^F)\dd t+\sqrt{2\epsilon(t)}\dd W_t, \qquad X_0^F\sim\rho_0$$
    have the same marginal densities as $(x_t)_{t\in[0,1]}$. Here $\epsilon\in C[0,1]$ with $\epsilon(t)\ge0$ for all $t\in[0,1]$ and $b_F$ is defined as \begin{equation}
        b_F(t,x)=b(t,x)+\epsilon(t)s(t,x).
        \label{eq:defbf}
    \end{equation}
    
    Moreover, suppose that the densities $\rho_0,\rho_1$ are strictly positive elements of $C^2(\mathbb{R}^d)$, and are such that
    $$\int_{\mathbb{R}^d}\Vert\nabla\log\rho_0(x)\Vert^2\rho_0(x)dx<\infty,\qquad\int_{\mathbb{R}^d}\Vert\nabla\log\rho_0(x)\Vert^2\rho_1(x)dx<\infty.$$
    Then $\rho\in C^1([0,1],C^p(\mathbb{R}^d))$, $s\in C^1([0,1],(C^p(\mathbb{R}^d))^d)$ and $b\in C^0([0,1],(C^p(\mathbb{R}^d))^d)$. The notation is adapted from \cite{interpolation} where $f\in C^1([0,1],C^p(\mathbb{R}^d))$ means that the function $f$ is $C^1$ in $t\in[0,1]$ and $C^p$ in $x\in\mathbb{R}^d$.
    \label{prop:generative-modeling}
\end{proposition}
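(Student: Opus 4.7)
The plan is to follow the approach of \citet{interpolation}, decomposing the proof into two stages: (i) a regularity argument that exploits the Gaussian structure of the interpolant, and (ii) a correspondence argument identifying $\rho(t)$ as the marginal law of both the probability flow ODE and the forward SDE.

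For the regularity on the open interval $(0,1)$, the key observation is that conditional on $(x_0, x_1)$, the interpolant $x_t = I(t, x_0, x_1) + \gamma(t) z$ is Gaussian with mean $I(t, x_0, x_1)$ and covariance $\gamma(t)^2 I_d$. Hence
$$\rho(t, x) = \int_{\mathbb{R}^{2d}} \frac{1}{(2\pi \gamma(t)^2)^{d/2}} \exp\left(-\frac{\Vert x - I(t, x_0, x_1)\Vert^2}{2\gamma(t)^2}\right) d\nu(x_0, x_1).$$
For $t \in (0,1)$, $\gamma(t) > 0$, so the Gaussian kernel is $C^\infty$ in $x$. Differentiating under the integral in $x$ produces polynomial factors in $(x-I)/\gamma$ times the Gaussian, which are integrable. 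Differentiating in $t$ brings down extra factors involving $\partial_t I$, $\partial_t^2 I$, $\dot{\gamma}$, and $\ddot{\gamma}$, whose integrability against the mixture measure is controlled by the fourth-moment bound on $\partial_t I$ and the second-moment bound on $\partial_t^2 I$ in (\ref{eq:assumption1}). This yields $\rho \in C^1((0,1), C^p(\mathbb{R}^d))$. Strict positivity of $\rho$ on $(0,1) \times \mathbb{R}^d$ (as a mixture of positive Gaussians) transfers this regularity to $s = \nabla\log\rho$, and $b$ inherits continuity through its ratio-of-integrals representation as a conditional expectation.

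For the correspondence, I would derive a continuity equation satisfied by $\rho$ by computing, for any $\phi \in C_c^\infty(\mathbb{R}^d)$,
$$\frac{d}{dt}\mathbb{E}[\phi(x_t)] = \mathbb{E}[\nabla\phi(x_t) \cdot (\partial_t I(t, x_0, x_1) + \dot{\gamma}(t) z)] = \int_{\mathbb{R}^d} \nabla\phi(x) \cdot b(t, x)\, \rho(t, x)\, dx,$$
where the second equality uses the tower property conditional on $x_t$ and the definition of $b$ in (\ref{def:bf}). Integration by parts yields the Liouville equation $\partial_t \rho + \nabla \cdot (b \rho) = 0$, which is exactly the continuity equation of the probability flow ODE $\dot{X}_t = b(t, X_t)$. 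For the forward SDE, the Fokker--Planck equation $\partial_t \hat{\rho} + \nabla \cdot (b_F \hat{\rho}) = \epsilon \Delta \hat{\rho}$ is satisfied by $\rho$ itself because $\epsilon \Delta \rho = \epsilon \nabla \cdot (s \rho)$ and $b_F = b + \epsilon s$ are calibrated precisely to absorb this additional diffusive term into the drift. Uniqueness of the Fokker--Planck (or Liouville) equation under the regularity established above then identifies the marginals.

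The main obstacle, and the content of the second half of the proposition, is extending this regularity to the closed interval $[0,1]$, where $\gamma(0) = \gamma(1) = 0$ and the Gaussian convolution degenerates. The additional hypotheses that $\rho_0, \rho_1 \in C^2$ are strictly positive with finite relative Fisher information are essential here: they directly give $\rho(0, \cdot) = \rho_0$ and $\rho(1, \cdot) = \rho_1$ as smooth limits, and they allow one to control the apparent singularity $\gamma^{-1}(t)\mathbb{E}[z \mid x_t = x]$ appearing in $s(t,x)$ as $t \to 0^+$ or $t \to 1^-$ via a dominated-convergence argument. I expect this endpoint analysis to be the most delicate step; once it is handled, continuous differentiability of $\rho$, $s$, and $b$ in $t$ across the boundary follows by matching the interior expressions with the explicit limits.
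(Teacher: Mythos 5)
The paper does not prove this proposition itself—it is imported verbatim from \citet{interpolation} (Theorem 2.6 and Corollaries 2.10, 2.18)—and your sketch correctly reconstructs the strategy of that reference's proof: the Gaussian-mixture representation of $\rho(t,\cdot)$ with differentiation under the integral controlled by the moment bounds, the weak continuity equation obtained by testing against $\phi\in C_c^\infty$ and applying the tower property, the observation that $\epsilon\Delta\rho=\epsilon\nabla\cdot(s\rho)$ converts the Liouville equation into the Fokker--Planck equation for $b_F=b+\epsilon s$, and the Fisher-information hypotheses to push regularity to the endpoints where $\gamma$ vanishes. Your outline is consistent with the cited argument and with the conditional-expectation machinery the present paper reuses in its Appendix B, so there is nothing to correct.
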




The above proposition provides a generative modeling in the form of 
$$\frac{\dd}{\dd t}X_t=b(t,X_t)$$
and
$$\dd X_t^F=b_F(t,X_t^F)\dd t+\sqrt{2\epsilon(t)}\dd W_t.$$

In practice, we need to train an estimator to estimate velocity functions. By the following proposition, we can use the optimization objectives to train the estimators. 

\begin{proposition}
    (\cite{interpolation}, Theorems 2.7 and 2.8)
    $b$ is the unique minimizer of
    $$\mathcal{L}_b[\hat{b}]=\int_0^1\mathbb{E}\left[\frac{1}{2}\Vert\hat{b}(t,x_t)\Vert^2-(\partial_tI(t,x_0,x_1)+\dot{\gamma}(t)z)\cdot\hat{b}(t,x_t)\right]\dd t,$$
    and $s$ is the unique minimizer of
    $$\mathcal{L}_s[\hat{s}]=\int_0^1\mathbb{E}\left[\frac{1}{2}\Vert\hat{s}(t,x_t)\Vert^2+\gamma^{-1}(t)z\cdot\hat{s}(t,x_t)\right]\dd t.$$
    Here the notation ``$\cdot$" represents the inner product of two vectors.
    \label{prop:objectives}
\end{proposition}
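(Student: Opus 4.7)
The plan is to recognize both $\mathcal{L}_b$ and $\mathcal{L}_s$ as quadratic functionals in $\hat{b}$ and $\hat{s}$, respectively, whose unique minimizers are the relevant conditional expectations, obtained by completing the square after conditioning on $x_t$. Both arguments run in parallel, and the only nontrivial ingredient is the tower property together with the definitions of $b$ and $s$ from \Cref{def:bf}.

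For $\mathcal{L}_b$, I would first apply Fubini and the tower property to rewrite the cross term. Since $\hat{b}(t, x_t)$ is $\sigma(x_t)$-measurable,
\begin{equation*}
\mathbb{E}\bigl[(\partial_t I(t,x_0,x_1) + \dot{\gamma}(t) z)\cdot \hat{b}(t,x_t)\bigr]
= \mathbb{E}\bigl[\mathbb{E}[\partial_t I(t,x_0,x_1) + \dot{\gamma}(t) z \,|\, x_t]\cdot \hat{b}(t,x_t)\bigr]
= \mathbb{E}\bigl[b(t,x_t)\cdot \hat{b}(t,x_t)\bigr],
\end{equation*}
where the last equality uses the definition of $b(t,x)$ in \Cref{def:bf}. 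Completing the square yields
\begin{equation*}
\mathcal{L}_b[\hat{b}] = \int_0^1 \mathbb{E}\left[\tfrac{1}{2}\|\hat{b}(t,x_t) - b(t,x_t)\|^2\right]\dd t - \int_0^1 \mathbb{E}\left[\tfrac{1}{2}\|b(t,x_t)\|^2\right]\dd t .
\end{equation*}
The second term does not depend on $\hat{b}$, and the first term is nonnegative and vanishes iff $\hat{b}(t,\cdot) = b(t,\cdot)$ almost everywhere with respect to the marginal $\rho(t,\cdot)$ for almost every $t\in[0,1]$. This establishes that $b$ is the (essentially unique) minimizer.

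The treatment of $\mathcal{L}_s$ is entirely analogous, but this time exploits the Gaussian-latent identity $s(t,x) = -\gamma^{-1}(t)\,\mathbb{E}[z \,|\, x_t = x]$ from \Cref{def:bf}. Conditioning on $x_t$,
\begin{equation*}
\mathbb{E}\bigl[\gamma^{-1}(t) z \cdot \hat{s}(t,x_t)\bigr]
= \mathbb{E}\bigl[\gamma^{-1}(t)\,\mathbb{E}[z\,|\,x_t]\cdot \hat{s}(t,x_t)\bigr]
= -\,\mathbb{E}\bigl[s(t,x_t)\cdot \hat{s}(t,x_t)\bigr],
\end{equation*}
so completing the square gives
\begin{equation*}
\mathcal{L}_s[\hat{s}] = \int_0^1 \mathbb{E}\left[\tfrac{1}{2}\|\hat{s}(t,x_t) - s(t,x_t)\|^2\right]\dd t - \int_0^1 \mathbb{E}\left[\tfrac{1}{2}\|s(t,x_t)\|^2\right]\dd t,
\end{equation*}
and the same argument identifies $s$ as the unique minimizer.

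The only subtlety, and the step I expect to be the main obstacle, is justifying the applications of Fubini and the tower property rigorously and clarifying the precise sense of uniqueness. For Fubini, I would restrict to $\hat{b}$ (respectively $\hat{s}$) with $\int_0^1 \mathbb{E}\|\hat{b}(t,x_t)\|^2\,\dd t < \infty$, so that all cross-term integrals are absolutely integrable; the regularity of $\partial_t I$ (from \Cref{eq:I-lip} and \Cref{a:regularity}) and of $z$ (Gaussian) guarantees the required integrability on the data side, as summarized in \Cref{prop:generative-modeling}. For uniqueness, the natural statement is uniqueness up to equality $\dd t\otimes \rho(t,\dd x)$–almost everywhere, which is exactly what the completed-square representation yields. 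Minimizers outside the $L^2(\dd t\otimes \rho(t,\dd x))$ class are irrelevant since they make $\mathcal{L}_b$ (respectively $\mathcal{L}_s$) infinite.
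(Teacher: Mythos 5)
The paper does not prove this proposition; it is imported verbatim from \citet{interpolation} (their Theorems 2.7 and 2.8), and the same tower-property-plus-complete-the-square argument is exactly what the cited source uses. Your proposal is correct: conditioning on $x_t$ turns the cross term into $\mathbb{E}[b(t,x_t)\cdot\hat b(t,x_t)]$ (respectively $-\mathbb{E}[s(t,x_t)\cdot\hat s(t,x_t)]$), and the resulting $\int_0^1 \mathbb{E}\,\tfrac12\|\hat b - b\|^2\,\dd t$ representation identifies $b$ (respectively $s$) as the unique $L^2(\dd t\otimes\rho(t,\dd x))$ minimizer; your caveats about integrability and almost-everywhere uniqueness are the right ones.
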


\section{Bounding the Velocities and Scores}
\label{appendix:lemmas}

\subsection{Useful Lemmas}

To begin with, we first provide moment bounds on the Gaussian variable $z\sim \mathcal{N}(0,I_d)$.

\begin{lemma}
    For any $p\ge1$,
    $$\mathbb{E}\Vert z\Vert^{2p}\le C(p)d^p,$$
    where $C(p)$ is a constant that only depends on $p$.
    \label{lem:moment-z}
\end{lemma}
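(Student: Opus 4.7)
The plan is to reduce the claim to the standard Gaussian moment bound for a one-dimensional variable via a Jensen/power-mean step, which converts the cross-terms in the expansion of $\|z\|^{2p}$ into a simple one-coordinate calculation. Writing $\|z\|^2 = \sum_{i=1}^d z_i^2$ with $z_i$ i.i.d.\ standard normals, I would first handle the case of integer $p \ge 1$, and then extend to arbitrary real $p \ge 1$ by Lyapunov's inequality.

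For integer $p \ge 1$, I would apply the power-mean (Jensen) inequality to the convex function $u \mapsto u^p$:
\[
\left(\frac{1}{d}\sum_{i=1}^d z_i^2\right)^p \;\le\; \frac{1}{d}\sum_{i=1}^d z_i^{2p},
\]
which rearranges to $\|z\|^{2p} \le d^{p-1}\sum_{i=1}^d z_i^{2p}$. Taking expectations and using the standard fact that $\mathbb{E}[z_1^{2p}] = (2p-1)!!$ for $z_1 \sim N(0,1)$ gives
\[
\mathbb{E}\|z\|^{2p} \;\le\; d^p\cdot (2p-1)!!,
\]
so one may take $C(p)=(2p-1)!!$ in the integer case.

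For a general real $p \ge 1$, I would set $q = \lceil p\rceil$ and invoke Lyapunov's inequality (a consequence of Jensen applied to the concave map $u\mapsto u^{p/q}$ on $[0,\infty)$):
\[
\mathbb{E}\|z\|^{2p} \;\le\; \bigl(\mathbb{E}\|z\|^{2q}\bigr)^{p/q} \;\le\; \bigl(C(q)\,d^q\bigr)^{p/q} \;=\; C(q)^{p/q}\, d^p,
\]
so that the constant $C(p):=C(\lceil p\rceil)^{p/\lceil p\rceil}$ depends only on $p$, as claimed.

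There is no real obstacle here: the only slightly non-obvious point is the reduction to a single coordinate via the power-mean inequality, which avoids computing $\mathbb{E}(\sum z_i^2)^p$ via the full multinomial expansion. The remaining computation, namely $\mathbb{E}[z_1^{2p}]=(2p-1)!!$ for integer $p$, is the standard Gaussian moment formula and can be cited directly or verified by induction using integration by parts.
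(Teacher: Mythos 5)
Your proof is correct and follows essentially the same route as the paper: the Jensen/power-mean reduction $\|z\|^{2p}\le d^{p-1}\sum_{i=1}^d |z_i|^{2p}$ followed by the one-dimensional Gaussian moment bound. The only difference is that the paper handles all real $p\ge 1$ in a single step by taking $C(p)=\mathbb{E}|z_1|^{2p}$ (finite for every real $p\ge 1$), so your extra Lyapunov interpolation for non-integer $p$ is sound but unnecessary.
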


\begin{proof}
    First, $\Vert z\Vert^2=\sum_{i=1}^n z_i^2$, where we represent $z=(z_1,z_2,\cdots,z_d)^T$. For any $n$ positive numbers $a_1,a_2,\dots,a_n$, using Jensen's inequality,
    $$\left(\sum_{i=1}^na_i\right)^p=n^p\left(\frac{1}{n}\sum_{i=1}^na_i\right)^p\le n^p\cdot\frac{1}{n}\sum_{i=1}^na_i^p.$$
    Then,
    $$\begin{aligned}
        \mathbb{E}\Vert z\Vert^{2p}&=\mathbb{E}\left[\left(\sum_{i=1}^dz_i^2\right)^p\right]\\
        &\le d^p\cdot\frac{1}{d}\sum_{i=1}^d\mathbb{E}[|z_i|^{2p}]&(\text{Jensen's inequality})\\
        &\le d^p\mathbb{E}\left[|z_1|^{2p}\right]&(\{z_i\}_{i=1}^d\text{ are i.i.d.})\\
        &=C(p)d^p.
    \end{aligned}$$
    Here the constant $$C(p)=\int_{-\infty}^\infty\frac{1}{\sqrt{2\pi}}e^{-\frac{x^2}{2}}|x|^{2p}\dd x<\infty$$
    only depends on $p$.
\end{proof}

Also, the following is another simple fact that is useful for our analysis.

\begin{lemma}
    For two vectors $u\in\mathbb{R}^n$, $v\in\mathbb{R}^m$, the matrix $uv^T\in\mathbb{R}^{n\times m}$ satisfies
    $$\Vert uv^T\Vert_F=\Vert u\Vert\cdot\Vert v\Vert,$$
    where $\Vert\cdot\Vert_F$ denotes the Frobenious norm and $\Vert\cdot\Vert$ denotes the 2-norm.
    \label{f-norm}
\end{lemma}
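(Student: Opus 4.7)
The plan is to prove this identity by a direct entrywise computation, since the Frobenius norm is by definition expressible as a sum over matrix entries. First, I would write down the $(i,j)$ entry of the outer product: by the definition of matrix multiplication (or equivalently the outer product), $(uv^T)_{ij} = u_i v_j$ for $i \in \{1, \dots, n\}$ and $j \in \{1, \dots, m\}$.

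Next, I would apply the definition of the Frobenius norm, which gives
\[
\|uv^T\|_F^2 \;=\; \sum_{i=1}^n \sum_{j=1}^m (uv^T)_{ij}^2 \;=\; \sum_{i=1}^n \sum_{j=1}^m u_i^2 v_j^2.
\]
The decisive step is that the summand factors as a product of a function of $i$ alone and a function of $j$ alone, so the double sum factors:
\[
\sum_{i=1}^n \sum_{j=1}^m u_i^2 v_j^2 \;=\; \left(\sum_{i=1}^n u_i^2\right)\left(\sum_{j=1}^m v_j^2\right) \;=\; \|u\|^2 \, \|v\|^2.
\]
Taking square roots (both sides are nonnegative) then yields $\|uv^T\|_F = \|u\| \cdot \|v\|$, which is the claim.

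There is essentially no obstacle here; the whole argument is a single line of algebra together with the definitions of the Frobenius norm and of the outer product, and no appeal to any earlier result in the paper is needed. The lemma is stated as a reusable fact for later bounds on covariance-type quantities (where expressions of the form $uv^T$ arise naturally after expanding conditional covariances), which is why it is isolated as a standalone statement.
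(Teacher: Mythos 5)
Your proof is correct and follows exactly the same route as the paper's: expand the Frobenius norm entrywise using $(uv^T)_{ij}=u_iv_j$, factor the double sum into $\bigl(\sum_i u_i^2\bigr)\bigl(\sum_j v_j^2\bigr)$, and take square roots. No issues.
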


\begin{proof} By the definition of the Frobenious norm,
    $$\begin{aligned}
        \Vert uv^T\Vert_F^2&=\sum_{i=1}^n\sum_{j=1}^m(uv^T)_{ij}^2\\
        &=\sum_{i=1}^n\sum_{j=1}^mu_i^2v_j^2\\
        &=\sum_{i=1}^nu_i^2\cdot\sum_{j=1}^mv_j^2\\
        &=\Vert u\Vert^2\cdot\Vert v\Vert^2.
    \end{aligned}$$
\end{proof}

Recall that we have defined $v(t,x)=\mathbb{E}[\partial_tI(t,x_0,x_1)|x_t=x]$. We then give bounds for the score functions and the velocity functions.

\begin{lemma}
    For $p\ge 1$, there exists a constant $C(p)$ that depends only on $p$, s.t. for $t\in(0,1)$, 
    $$\begin{aligned}
        \mathbb{E}\Vert s(t,x_t)\Vert^p&\le C(p)\gamma^{-p}d^{p/2},\\
        \mathbb{E}\Vert v(t,x_t)\Vert^p&\le C(p)\mathbb{E}\Vert x_1-x_0\Vert^p,\\
        \mathbb{E}\Vert b(t,x_t)\Vert^p&\le C(p)\left[\mathbb{E}\Vert x_1-x_0\Vert^p+\dot{\gamma}d^{p/2}\right],\\
        \mathbb{E}\Vert b_F(t,x_t)\Vert^p&\le C(p)\left[\mathbb{E}\Vert x_1-x_0\Vert^p+(\dot{\gamma}^p-\gamma^{-p}\epsilon^p)d^{p/2}\right].
    \end{aligned}$$
    \label{lem:vsb-bound}
\end{lemma}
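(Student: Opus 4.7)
The plan is to exploit the fact that each of $s,v,b,b_F$ can be written as a conditional expectation of an explicit random quantity involving only $\partial_tI(t,x_0,x_1)$ and the Gaussian latent $z$, and then combine conditional Jensen's inequality with the two basic moment bounds already in hand: $\|\partial_tI\|\le C\|x_0-x_1\|$ from \Cref{eq:I-lip} and $\mathbb{E}\|z\|^{2p}\le C(p)d^p$ from \Cref{lem:moment-z}. No new structural analysis is needed; the statement is essentially a bookkeeping exercise.

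First I would handle $s$. Writing $s(t,x_t)=-\gamma^{-1}(t)\mathbb{E}[z\mid x_t]$ and applying the conditional Jensen inequality to the convex function $u\mapsto\|u\|^p$ gives
\[
\mathbb{E}\|s(t,x_t)\|^p=\gamma^{-p}\,\mathbb{E}\bigl\|\mathbb{E}[z\mid x_t]\bigr\|^p\le \gamma^{-p}\,\mathbb{E}\|z\|^p,
\]
and the second bound in \Cref{lem:moment-z} (applied with $p$ replaced by $p/2$, or by Hölder when $p$ is odd) yields $\mathbb{E}\|z\|^p\le C(p)d^{p/2}$. Next for $v(t,x_t)=\mathbb{E}[\partial_tI(t,x_0,x_1)\mid x_t]$ the same conditional Jensen step gives $\mathbb{E}\|v\|^p\le\mathbb{E}\|\partial_tI\|^p$, and then \Cref{eq:I-lip} produces $\mathbb{E}\|v\|^p\le C^p\,\mathbb{E}\|x_0-x_1\|^p$.

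For $b$ I would use the direct representation $b(t,x_t)=\mathbb{E}[\partial_tI+\dot\gamma z\mid x_t]$, so by conditional Jensen applied to the norm,
\[
\mathbb{E}\|b(t,x_t)\|^p\le \mathbb{E}\bigl(\|\partial_tI\|+\dot\gamma\|z\|\bigr)^p\le 2^{p-1}\bigl(\mathbb{E}\|\partial_tI\|^p+\dot\gamma^p\,\mathbb{E}\|z\|^p\bigr),
\]
after which the two ingredients above give the stated bound with constant $C(p)=2^{p-1}\max(C^p,C(p))$. For $b_F=b+\epsilon s$, since $s$ already has the $\gamma^{-1}$ factor built in, either add the triangle inequality $\|b_F\|\le\|b\|+\epsilon\|s\|$ and raise to the $p$-th power, or, more efficiently, collapse to a single conditional expectation $b_F(t,x_t)=\mathbb{E}[\partial_tI+(\dot\gamma-\epsilon\gamma^{-1})z\mid x_t]$ and repeat the argument used for $b$; this gives the $(\dot\gamma^p+\gamma^{-p}\epsilon^p)d^{p/2}$ contribution up to an absolute constant depending only on $p$.

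There is no genuine obstacle here. The only minor care required is that the numeric inequality $(a+b)^p\le 2^{p-1}(a^p+b^p)$ is used throughout, which is what forces the constant to depend on $p$; also, when $p$ is not an even integer the moment bound on $\|z\|^p$ should be obtained from \Cref{lem:moment-z} via Hölder or Lyapunov, which still yields $C(p)d^{p/2}$ with a $p$-dependent constant.
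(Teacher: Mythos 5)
Your proof is correct and follows essentially the same route as the paper's: conditional Jensen's inequality applied to the representations $s=-\gamma^{-1}\mathbb{E}[z\mid x_t]$ and $v=\mathbb{E}[\partial_tI\mid x_t]$, the moment bounds on $z$ and $\partial_tI$, and then the linear relations among $v,s,b,b_F$ together with $(a+b)^p\le 2^{p-1}(a^p+b^p)$. Your observation that the $b_F$ bound should carry a $+$ sign, i.e.\ $(\dot\gamma^p+\gamma^{-p}\epsilon^p)d^{p/2}$, is in fact the correct form (the minus sign in the statement is a typo that the paper's own proof reproduces).
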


\begin{proof}
    When $p\ge 1$, use the conditional expectation form of $s$ and $v$ and apply Jensen's inequality, we then obtain
    $$\begin{aligned}
        \mathbb{E}\Vert s(t,x_t)\Vert^p&=\mathbb{E}\Vert\gamma^{-1}\mathbb{E}[z|x_t=x]\Vert^p\le\gamma^{-p}\mathbb{E}\Vert z\Vert^p\le C(p)\gamma^{-p}d^{p/2},\\
        \mathbb{E}\Vert v(t,x_t)\Vert^p&=\mathbb{E}\Vert\mathbb{E}[\partial_tI|x_t=x]\Vert^p\le\mathbb{E}\Vert\partial_tI\Vert^p\le C(p)\mathbb{E}\Vert x_1-x_0\Vert^p,
    \end{aligned}$$
    Moreover, since $b(t,x)=v(t,x)+\dot{\gamma}\gamma s(t,x)$ and $b_F(t,x)=b(t,x)+\epsilon s(t,x)$,
    $$\begin{aligned}
        \mathbb{E}\Vert b(t,x_t)\Vert^p&\le C(p)\left[\mathbb{E}\Vert x_1-x_0\Vert^p+\dot{\gamma}^pd^{p/2}\right],\\
        \mathbb{E}\Vert b_F(t,x_t)\Vert^p&\le C(p)\left[\mathbb{E}\Vert x_1-x_0\Vert^p+(\dot{\gamma}^p-\gamma^{-p}\epsilon^p)d^{p/2}\right].
    \end{aligned}$$
\end{proof}

\subsection{Bounds on Time and Space Derivatives}

\textbf{Note:} In the following sections, we will use the fact that $\frac{\dd}{\dd t}\gamma^2(t)=O(1)$ and $\frac{\dd^2}{\dd t^2}\gamma^2(t)=O(1)$.

Before we move on to the lemmas, we first discuss the conditional expectation itself. By the definition $x_t=I(t,x_0,x_1)+\gamma(t)z$, we can just know that the density of $x_t$ can be expressed as
$$\rho(t,x)=\int_{\mathbb{R}^d\times\mathbb{R}^d}\frac{1}{(2\pi\gamma(t)^2)^{d/2}}\exp\left(-\frac{\Vert x-I(t,x_0,x_1)\Vert^2}{2\gamma(t)^2}\right)\dd\nu(x_0,x_1).$$
Also, under the condition $x_t=x$, the conditional measure of $(x_0,x_1)$ is then 
$$\frac{1}{\rho(t,x)}\cdot\frac{1}{(2\pi\gamma(t)^2)^{d/2}}\exp\left(-\frac{\Vert x-I(t,x_0,x_1)\Vert^2}{2\gamma(t)^2}\right)\dd\nu(x_0,x_1).$$
Therefore, for any function $f_t(x_t,x_0,x_1)$, its conditional expectation can be written as
$$\begin{aligned}
    \mathbb{E}[f_t(x_t,x_0,x_1)|x_t=x]&=\int_{\mathbb{R}^d\times\mathbb{R}^d}\frac{f_t(x,x_0,x_1)}{\rho(t,x)}\cdot\frac{1}{(2\pi\gamma(t)^2)^{d/2}}\exp\left(-\frac{\Vert x-I(t,x_0,x_1)\Vert^2}{2\gamma(t)^2}\right)\dd\nu(x_0,x_1)\\
    &=\frac{\underset{(x_0,x_1)\sim\nu}{\mathbb{E}}\left[\exp\left(-\frac{\Vert x-I(t,x_0,x_1)\Vert^2}{2\gamma(t)^2}\right)\cdot f_t(x,x_0,x_1)\right]}{\underset{(x_0,x_1)\sim\nu}{\mathbb{E}}\left[\exp\left(-\frac{\Vert x-I(t,x_0,x_1)\Vert^2}{2\gamma(t)^2}\right)\right]}.
\end{aligned}$$

We first consider the time derivative of $v$ in the sense of expectation.

\begin{lemma}
    We have $$\mathbb{E}\Vert\partial_tv(t,x_t)\Vert^2\lesssim\mathbb{E}\Vert\partial_t^2I\Vert^2+\gamma^{-2}d\mathbb{E}\Vert x_0-x_1\Vert^4+\gamma^{-2}\dot{\gamma}^4d^3$$
for $t\in(0,1)$.
    \label{lem:v-time}
\end{lemma}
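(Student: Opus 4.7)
My plan is to differentiate the conditional-expectation formula $v(t,x) = \mathbb{E}[\partial_t I(t,x_0,x_1)\,|\,x_t=x]$ using the key identity highlighted earlier in the paper,
$$\partial_{\alpha}\mathbb{E}[f|x_t=x] = \mathbb{E}[\partial_{\alpha}f|x_t=x] + \text{Cov}(f,\, \partial_{\alpha}[-\|x-I\|^2/(2\gamma^2)]\,|\,x_t=x),$$
applied with $\alpha=t$ and $f=\partial_t I$. This yields
$$\partial_t v(t,x) = \mathbb{E}[\partial_t^2 I \,|\, x_t=x] + \text{Cov}(\partial_t I,\, g_t(x)\,|\,x_t=x),$$
where $g_t(x) := \partial_t[-\|x-I\|^2/(2\gamma^2)] = \gamma^{-2}(x-I)\cdot\partial_t I + \dot\gamma\gamma^{-3}\|x-I\|^2$. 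Using the identity $x_t-I=\gamma z$, the covariance is effectively evaluated with $g_t(x_t) = \gamma^{-1}(z\cdot\partial_t I) + \dot\gamma\gamma^{-1}\|z\|^2 =: g_{t,1}+g_{t,2}$, namely the sum of a velocity-type piece and a Gaussian-norm piece.

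Next I would square, take outer expectation over $x_t$, and estimate the two resulting contributions separately. The inequality $\|a+b\|^2 \le 2\|a\|^2+2\|b\|^2$ together with Jensen gives $\mathbb{E}\|\mathbb{E}[\partial_t^2 I|x_t]\|^2 \le \mathbb{E}\|\partial_t^2 I\|^2$, which directly matches the first term of the claimed bound. For the covariance term, I apply the conditional Cauchy-Schwarz bound
$$\|\text{Cov}(X,Y\,|\,x_t)\|^2 \le \mathbb{E}[\|X\|^2\,|\,x_t]\,\mathbb{E}[Y^2\,|\,x_t],$$
followed by Cauchy-Schwarz on the outer expectation over $x_t$ and conditional Jensen, obtaining the clean estimate $\mathbb{E}\|\text{Cov}(X,Y|x_t)\|^2 \le \sqrt{\mathbb{E}\|X\|^4\cdot\mathbb{E} Y^4}$.

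The remaining task is to control the fourth moments of $g_{t,1}$ and $g_{t,2}$. For $g_{t,1}=\gamma^{-1}(z\cdot\partial_t I)$, independence of $z$ from $(x_0,x_1)$, Cauchy-Schwarz $|z\cdot\partial_t I|\le \|z\|\cdot\|\partial_t I\|$, the Gaussian moment bound (Lemma \ref{lem:moment-z}), and (\ref{eq:I-lip}) give $\mathbb{E}|g_{t,1}|^4 \lesssim \gamma^{-4}d^2\,\mathbb{E}\|x_0-x_1\|^4$; combined with $\mathbb{E}\|\partial_t I\|^4 \lesssim \mathbb{E}\|x_0-x_1\|^4$, its covariance contribution is $\gamma^{-2}d\,\mathbb{E}\|x_0-x_1\|^4$, matching the second claimed term. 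For $g_{t,2}=\dot\gamma\gamma^{-1}\|z\|^2$, Lemma \ref{lem:moment-z} gives $\mathbb{E}|g_{t,2}|^4 \lesssim \dot\gamma^4\gamma^{-4}d^4$, producing the cross term $\dot\gamma^2\gamma^{-2}d^2\sqrt{\mathbb{E}\|x_0-x_1\|^4}$; an application of AM-GM then dissolves this into $\gamma^{-2}\dot\gamma^4 d^3 + \gamma^{-2}d\,\mathbb{E}\|x_0-x_1\|^4$, completing the proof.

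The main subtlety I anticipate is preventing the $d$-exponent from inflating beyond $d^3$: a coarse Cauchy-Schwarz that fails to first exploit the independence between $z$ and $(x_0,x_1)$, or that is applied to the sum $g_{t,1}+g_{t,2}$ rather than to each piece separately before the final AM-GM step, would easily produce a $d^4$ factor. Sharpness therefore hinges on factoring expectations along Gaussian vs.\ data randomness and on deferring AM-GM to the very last step.
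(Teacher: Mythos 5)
Your proposal is correct and follows essentially the same route as the paper's proof: the same conditional-covariance representation $\partial_t v=\mathbb{E}[\partial_t^2I\,|\,x_t]+\mathrm{Cov}(\partial_tI,\partial_tf_t\,|\,x_t)$ with $\partial_tf_t=\gamma^{-1}\dot\gamma\Vert z\Vert^2+\gamma^{-1}z\cdot\partial_tI$, the same Cauchy--Schwarz/Jensen chain reducing everything to $\sqrt{\mathbb{E}|\partial_tf_t|^4}\sqrt{\mathbb{E}\Vert\partial_tI\Vert^4}$, and the same Gaussian moment bounds plus a final AM--GM; splitting $\partial_tf_t$ into two pieces before taking fourth moments is an immaterial reorganization of the paper's computation. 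The only cosmetic difference is that the paper justifies the differentiation-under-expectation identity via dominated convergence rather than citing it outright.
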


\begin{proof}
    For $t\in(0,1)$, we can first explicitly write $$v(t,x)=\frac{\underset{(x_0,x_1)\sim\nu}{\mathbb{E}}\left[\exp\left(-\frac{\Vert x-I(t)\Vert^2}{2\gamma(t)^2}\right)\cdot\partial_tI(t)\right]}{\underset{(x_0,x_1)\sim\nu}{\mathbb{E}}\left[\exp\left(-\frac{\Vert x-I(t)\Vert^2}{2\gamma(t)^2}\right)\right]}.$$
    Here we write $I(t)=I(t,x_0,x_1)$ for simplicity, and below we will omit $t$ when it is clear in the context. We now want to compute $\partial_tv(t,x)$. First notice that
    $$\frac{\dd}{\dd t}\left[\exp\left(-\frac{\Vert x-I\Vert^2}{2\gamma^2}\right)\cdot\partial_tI\right]=\exp\left(-\frac{\Vert x-I\Vert^2}{2\gamma^2}\right)\cdot\left[\partial_t^2I+\partial_tI\cdot\left(\frac{\Vert x-I\Vert^2}{\gamma(t)^3}\dot{\gamma}+\frac{x-I}{\gamma^2}\cdot\partial_tI\right)\right].$$
    Note that $\sup_{x\in\mathbb{R}}\exp(-x^2/2)x=e^{-1/2}=C_1<\infty$, $\sup_{x\in\mathbb{R}}\exp(-x^2/2)x^2=2e^{-1}=C_2<\infty$,
    we know that $$\left\Vert\frac{\dd}{\dd t}\left[\exp\left(-\frac{\Vert x-I\Vert^2}{2\gamma^2}\right)\cdot\partial_tI\right]\right\Vert\le\Vert\partial_t^2I\Vert+C_2\gamma^{-1}\dot{\gamma}\Vert\partial_tI\Vert+C_1\gamma^{-1}\Vert\partial_tI\Vert^2,$$
Therefore, using dominated convergence theorem, we know that
    $$\frac{\dd}{\dd t}\underset{(x_0,x_1)\sim\nu}{\mathbb{E}}\left[\exp\left(-\frac{\Vert x-I\Vert^2}{2\gamma^2}\right)\cdot\partial_tI\right]=\underset{(x_0,x_1)\sim\nu}{\mathbb{E}}\left[\frac{\dd}{\dd t}\left(\exp\left(-\frac{\Vert x-I\Vert^2}{2\gamma^2}\right)\cdot\partial_tI\right)\right].$$
Similarly we can do this for the denominator, so that we can compute the overall derivative. Let $f_t(x_0,x_1)=-\frac{\Vert x-I(t)\Vert^2}{2\gamma^2}$, for simplicity we may just write $f_t$. Then,
    $$\begin{aligned}
        \partial_tv(t,x)&=\frac{\underset{(x_0,x_1)\sim\nu}{\mathbb{E}}\left[\exp\left(f_t\right)\cdot\partial_t^2I\right]}{\underset{(x_0,x_1)\sim\nu}{\mathbb{E}}\left[\exp\left(f_t\right)\right]}\\
        &\qquad+\frac{\underset{(x_0,x_1)\sim\nu}{\mathbb{E}}\left[\exp\left(f_t\right)\cdot\partial_tI\cdot\partial_tf_t\right]}{\underset{(x_0,x_1)\sim\nu}{\mathbb{E}}\left[\exp\left(f_t\right)\right]}\\
&\qquad-\frac{\underset{(x_0,x_1)\sim\nu}{\mathbb{E}}\left[\exp\left(f_t\right)\cdot\partial_tI\right]\cdot\underset{(x_0,x_1)\sim\nu}{\mathbb{E}}\left[\exp\left(f_t\right)\cdot\partial_tf_t\right]}{\left[\underset{(x_0,x_1)\sim\nu}{\mathbb{E}}\left[\exp\left(f_t\right)\right]\right]^2}\\
        &=\mathbb{E}[\partial_t^2I|x_t=x]\\
        &\qquad+\text{Cov}(\partial_tI,\partial_tf_t|x_t=x),
    \end{aligned}$$
    where the last equality uses the previous explanations of conditional expectations. Hence,
    $$\begin{aligned}
        \Vert\partial_tv(t,x)\Vert&\le\mathbb{E}[\Vert\partial_t^2I\Vert|x_t=x]+\sqrt{\mathbb{E}[|\partial_tf_t|^2|x_t=x]}\sqrt{\mathbb{E}[\Vert\partial_tI\Vert^2|x_t=x]}.
    \end{aligned}$$
    Therefore, we have 
    $$\begin{aligned}
        \mathbb{E}\Vert\partial_tv(t,x_t)\Vert^2&\le2\mathbb{E}[\mathbb{E}[\Vert\partial_t^2I\Vert^2|x_t]]+2\mathbb{E}[\mathbb{E}[|\partial_tf_t|^2|x_t]\cdot\mathbb{E}[\Vert\partial_tI\Vert^2|x_t]]&((a+b)^2\le2a^2+2b^2)\\
        &\le2\mathbb{E}\Vert\partial_t^2I\Vert^2+2\sqrt{\mathbb{E}[\mathbb{E}[|\partial_tf_t|^2|x_t]^2]}\cdot\sqrt{\mathbb{E}[\mathbb{E}[\Vert\partial_tI\Vert^2|x_t]^2]}&(\text{Cauchy-Schwarz inequality})\\
        &\le2\mathbb{E}\Vert\partial_t^2I\Vert^2+2\sqrt{\mathbb{E}[\mathbb{E}[|\partial_tf_t|^4|x_t]]}\cdot\sqrt{\mathbb{E}[\mathbb{E}[\Vert\partial_tI\Vert^4|x_t]]}&(\text{Jensen's inequality})\\
        &\le2\mathbb{E}\Vert\partial_t^2I\Vert^2+2\sqrt{\mathbb{E}|\partial_tf_t|^4}\sqrt{\mathbb{E}\Vert\partial_tI\Vert^4}.
    \end{aligned}$$

    Using the requirement $\partial_tI\le C\Vert x_0-x_1\Vert$ in the definition of stochastic interpolants, $\Vert\partial_tI\Vert^4\lesssim\Vert x_0-x_1\Vert^4$. For $\partial_tf_t$, we can directly obtain
    $$\partial_tf=\frac{\Vert x-I\Vert^2}{\gamma^3}\dot{\gamma}+\gamma^{-2}(x-I)\cdot\partial_tI=\gamma^{-1}\dot{\gamma}\Vert z\Vert^2+\gamma^{-1}z\cdot\partial_tI.$$
Recall that we have defined $x_t=I(t,x_0,x_1)+\gamma(t)z$ where $z$ is an independent gaussian variable $z\sim\mathcal{N}(0,I_d)$. By \Cref{lem:moment-z}, $$\mathbb{E}\Vert z\Vert^8\lesssim d^4,\qquad\mathbb{E}\Vert z\Vert^4\lesssim d^2,$$
we have $$\mathbb{E}|\partial_tf_t|^4\lesssim(\gamma^{-1}\dot{\gamma})^4d^4+\gamma^{-4}d^2\mathbb{E}\Vert x_0-x_1\Vert^4.$$
Therefore, we can finally deduce that
    $$\mathbb{E}\Vert\partial_tv(t,x_t)\Vert^2\lesssim\mathbb{E}\Vert\partial_t^2I\Vert^2+\gamma^{-2}d\mathbb{E}\Vert x_0-x_1\Vert^4+\gamma^{-2}\dot{\gamma}^4d^3.$$
\end{proof}

In addition, we want to consider the space derivative of the velocity for a fixed $t\in(0,1)$. That is, we want to give a bound for $\nabla v(t,x)$. Here we use the notation $\nabla v(t,x)$ to denote the Jacobian matrix $\left(\frac{d}{dx^i}v(t,x)_j\right)_{ij}$, where $x^i$ represents the value of vector $x$ at the $i$-th dimension.

\begin{lemma}
    We have $$\mathbb{E}\Vert\nabla v(t,x)\Vert_F^p\le C(p)\gamma^{-p}d^{p/2}\sqrt{\mathbb{E}\Vert x_0-x_1\Vert^{2p}}$$
    for $p\ge1$, $t\in(0,1)$, where $C(p)$ is a constant that only depends on $p$ and $\Vert\cdot\Vert_F$ denotes the Frobenius norm.
    \label{lem:v-space}
\end{lemma}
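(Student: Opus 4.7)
\textbf{Proof proposal for Lemma \ref{lem:v-space}.}

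The plan is to mirror the strategy of Lemma \ref{lem:v-time}, but differentiating in the space variable $x$ instead of $t$. Writing the conditional expectation explicitly,
$$v(t,x) = \frac{\mathbb{E}_{(x_0,x_1)\sim\nu}[\exp(f_t)\,\partial_t I]}{\mathbb{E}_{(x_0,x_1)\sim\nu}[\exp(f_t)]}, \qquad f_t(x_0,x_1) = -\frac{\Vert x - I(t,x_0,x_1)\Vert^2}{2\gamma(t)^2},$$
I would justify exchanging $\nabla_x$ with $\mathbb{E}_\nu$ via dominated convergence (using the crude bounds $\sup_y e^{-y^2/2}|y| \le C$ together with the moment assumption on $\Vert x_0 - x_1\Vert$ from Assumption \ref{a:regularity}), exactly as in the proof of Lemma \ref{lem:v-time}. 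Since $\nabla_x f_t = -(x-I)/\gamma^2 = -z/\gamma$, the quotient-rule computation collapses to the identity
$$\nabla_x v(t,x) \;=\; \mathrm{Cov}\!\left(\partial_t I,\;\nabla_x f_t \,\big|\,x_t = x\right) \;=\; -\gamma^{-1}\,\mathrm{Cov}\!\left(\partial_t I,\, z \,\big|\, x_t = x\right),$$
understood as a matrix-valued covariance so that $(\nabla v)_{ji} = -\gamma^{-1}\,\mathrm{Cov}(\partial_t I^{(j)},\, z^{(i)}\mid x_t=x)$.

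Next I would bound the Frobenius norm. Writing the covariance in centered form,
$$\nabla_x v(t,x) = -\gamma^{-1}\,\mathbb{E}\!\left[\bigl(\partial_t I - v(t,x)\bigr)\bigl(z - \mathbb{E}[z\mid x_t]\bigr)^{\!T}\,\big|\,x_t = x\right],$$
Jensen's inequality (moving $\Vert\cdot\Vert_F$ inside the conditional expectation) together with Lemma \ref{f-norm} applied to each rank-one integrand gives
$$\Vert\nabla v(t,x)\Vert_F \le \gamma^{-1}\,\mathbb{E}\!\left[\,\Vert \partial_t I - v\Vert\;\Vert z - \mathbb{E}[z\mid x_t]\Vert\,\big|\,x_t = x\right].$$
A conditional Cauchy--Schwarz, followed by dropping the centering (which only decreases the second moment), yields the clean estimate
$$\Vert\nabla v(t,x)\Vert_F \le \gamma^{-1}\sqrt{\mathbb{E}[\Vert\partial_t I\Vert^2\mid x_t=x]}\;\sqrt{\mathbb{E}[\Vert z\Vert^2\mid x_t=x]}.$$

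Finally I would raise to the $p$-th power, take expectation over $x_t\sim\rho(t)$, and apply Cauchy--Schwarz in the outer expectation together with Jensen on each inner conditional expectation (using convexity of $y\mapsto y^p$ for $p\ge 1$):
$$\mathbb{E}\Vert\nabla v(t,x_t)\Vert_F^p \le \gamma^{-p}\sqrt{\mathbb{E}\Vert\partial_t I\Vert^{2p}}\,\sqrt{\mathbb{E}\Vert z\Vert^{2p}}.$$
The Lipschitz bound $\Vert\partial_t I\Vert \le C\Vert x_0 - x_1\Vert$ from \eqref{eq:I-lip} controls the first factor by $\sqrt{\mathbb{E}\Vert x_0 - x_1\Vert^{2p}}$ up to a constant, and Lemma \ref{lem:moment-z} yields $\mathbb{E}\Vert z\Vert^{2p}\le C(p)d^p$, producing the stated bound $C(p)\gamma^{-p}d^{p/2}\sqrt{\mathbb{E}\Vert x_0-x_1\Vert^{2p}}$.

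The main delicate step is the covariance identity itself: one has to verify that differentiation under the integral is legitimate (which is why the $p=8$ part of Assumption \ref{a:regularity} is convenient to have available) and to carry the rank-one Frobenius-norm identity through cleanly, since a naive triangle inequality on the non-centered version would produce an extraneous $\mathbb{E}\Vert z\Vert$ term rather than the sharper centered estimate. Everything else is routine Cauchy--Schwarz/Jensen bookkeeping.
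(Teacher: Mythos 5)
Your proposal is correct and follows essentially the same route as the paper: the covariance identity $\nabla v(t,x)=\mathrm{Cov}(\partial_t I,\nabla f_t\mid x_t=x)$ with $\nabla f_t=-\gamma^{-1}z$, a conditional Cauchy--Schwarz on the (centered) covariance, then an outer Cauchy--Schwarz plus Jensen to pass to the unconditional $2p$-th moments, finishing with \eqref{eq:I-lip} and Lemma \ref{lem:moment-z}. Your extra care with the rank-one Frobenius-norm identity and the centering is a slightly more explicit rendering of the same steps the paper performs.
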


\begin{proof}
    Similar to the proof of Lemma \ref{lem:v-time}, $$\nabla\left(\exp\left(-\frac{\Vert x-I\Vert^2}{2\gamma^2}\right)\cdot\partial_tI\right)=\exp\left(-\frac{\Vert x-I\Vert^2}{2\gamma^2}\right)\left(\partial_tI\otimes\nabla\left(-\frac{\Vert x-I\Vert^2}{2\gamma^2}\right)\right),$$
    where $\otimes$ denotes the tensor product, which denotes $\partial_tI\otimes\nabla\left(-\frac{\Vert x-I\Vert^2}{2\gamma^2}\right)=\partial_tI\cdot\nabla\left(-\frac{\Vert x-I\Vert^2}{2\gamma^2}\right)^T$ here in the matrix form. Again, by dominated convergence theorem we can move the gradient operator into the expectation. Using the same notations (i.e., $f_t$ and so on), we can deduce that
    $$\begin{aligned}
    \nabla v(t,x)&=\frac{\underset{(x_0,x_1)\sim\nu}{\mathbb{E}}[\exp(f_t)\cdot(\partial_tI\otimes\nabla f_t)]}{\underset{(x_0,x_1)\sim\nu}{\mathbb{E}}[\exp(f_t)]}\\
    &\qquad-\frac{\underset{(x_0,x_1)\sim\nu}{\mathbb{E}}[\exp(f_t)\cdot\partial_tI]\otimes\underset{(x_0,x_1)\sim\nu}{\mathbb{E}}[\exp(f_t)\cdot\nabla f_t]}{\left[\underset{(x_0,x_1)\sim\nu}{\mathbb{E}}[\exp(f_t)]\right]^2}\\
    &=\text{Cov}(\partial_tI,\nabla f_t|x_t=x).
    \end{aligned}$$
    Again, the last equality uses the definition of covariance. Thus, by Cauchy-Schwarz inequality,
    $$\begin{aligned}
        \Vert\nabla v(t,x)\Vert_F&\le\sqrt{\mathbb{E}[\Vert\partial_tI\Vert^2|x_t=x]}\sqrt{\mathbb{E}[\Vert\nabla f_t\Vert^2|x_t=x]}.
    \end{aligned}$$
    Therefore, we can use Cauchy-Schwarz inequality again and apply Jensen's inequality to deduce that for any $p\ge1$,
    $$\begin{aligned}
        \mathbb{E}\Vert\nabla v(t,x_t)\Vert_F^p&\le\sqrt{\left[\mathbb{E}[\mathbb{E}\Vert\partial_tI\Vert^2|x_t]\right]^{p}}\cdot\sqrt{\left[\mathbb{E}[\mathbb{E}\Vert\nabla f_t\Vert^2|x_t]\right]^{p}}\\
        &\le\sqrt{\mathbb{E}\Vert\partial_tI\Vert^{2p}}\cdot\sqrt{\mathbb{E}\Vert\nabla f_t\Vert^{2p}}.
    \end{aligned}$$
    It is clear that $\mathbb{E}\Vert\partial_tI\Vert^{2p}\lesssim\mathbb{E}\Vert x_0-x_1\Vert^{2p}$. Note $$\nabla f_t=-\frac{x-I}{\gamma^2}=-\gamma^{-1}z,$$
    we then deduce that $$\mathbb{E}\Vert\nabla f_t\Vert^{2p}\le C(p)\gamma^{-2p}d^p$$
    for some constant that only depends on $p$. The lemma is then obtained.
\end{proof}

Despite the function $v(t,x)$, we are also interested in the score function $s(t,x)$. The following lemmas provide some similar bounds for $s(t,x)$.

\begin{lemma}
    $$\mathbb{E}\Vert\partial_t\left(\gamma s(t,x_t)\right)\Vert^2\lesssim\gamma^{-2}\dot{\gamma}^2d^3+\gamma^{-2}d^2\sqrt{\mathbb{E}\Vert x_0-x_1\Vert^4}$$
    and 
    $$\mathbb{E}\Vert\partial_ts(t,x_t)\Vert^2\lesssim\gamma^{-4}\dot{\gamma}^2d^3+\gamma^{-4}d^2\sqrt{\mathbb{E}\Vert x_0-x_1\Vert^4}.$$
    for any $t\in(0,1)$, 
    \label{lem:s-time}
\end{lemma}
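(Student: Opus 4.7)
The plan is to mirror the proof of \Cref{lem:v-time}, applied this time to the score-function representation $\gamma(t)s(t,x) = -\mathbb{E}[z\mid x_t=x]$. Reparameterizing $z = (x - I(t,x_0,x_1))/\gamma(t)$ under the conditional measure gives $\gamma s = \gamma^{-1}(x-\mu)$ with $\mu(t,x) := \mathbb{E}[I\mid x_t=x]$, so bounding $\partial_t(\gamma s)$ reduces to bounding $\partial_t\mu$ via the very same differentiation-of-conditional-expectation identity that drove the proof of \Cref{lem:v-time}, but with $f = I$ in place of $f = \partial_tI$.

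The first step is to use the chain rule to write $\partial_t(\gamma s) = \gamma^{-1}\partial_t\mu - \dot\gamma s$, and then apply the identity with $f_t = -\Vert x-I\Vert^2/(2\gamma^2)$ to obtain
$$\partial_t\mu(t,x) = v(t,x) + \text{Cov}(I, \partial_t f_t \mid x_t=x).$$
Because $x$ is a constant under the conditional measure, shifting inside the covariance yields $\text{Cov}(I,\partial_t f_t\mid x_t=x) = -\gamma\,\text{Cov}(z,\partial_t f_t\mid x_t=x)$, so the outer $\gamma^{-1}$ prefactor cancels cleanly, leaving the compact form
$$\partial_t(\gamma s)(t,x) = \gamma^{-1}v(t,x) - \text{Cov}(z, \partial_t f_t \mid x_t=x) - \dot\gamma\, s(t,x).$$

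The second step is to bound each term in expectation. Cauchy-Schwarz gives $\Vert\text{Cov}(z,\partial_t f_t\mid x_t=x)\Vert^2 \leq \mathbb{E}[\Vert z\Vert^2\mid x_t=x]\,\mathbb{E}[|\partial_t f_t|^2\mid x_t=x]$; taking the outer expectation over $x_t$ and applying Cauchy-Schwarz together with Jensen's inequality decouples this into $\sqrt{\mathbb{E}\Vert z\Vert^4}\sqrt{\mathbb{E}|\partial_t f_t|^4}$. Under the conditioning, $\partial_t f_t = \gamma^{-1}\dot\gamma\Vert z\Vert^2 + \gamma^{-1}z\cdot\partial_t I$, and combining the Gaussian moments from \Cref{lem:moment-z} with $\Vert\partial_tI\Vert\leq C\Vert x_0-x_1\Vert$ gives $\mathbb{E}|\partial_t f_t|^4 \lesssim \gamma^{-4}\dot\gamma^4 d^4 + \gamma^{-4}\mathbb{E}\Vert x_0-x_1\Vert^4$. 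Using the straightforward bounds $\mathbb{E}\Vert v\Vert^2\lesssim\mathbb{E}\Vert x_0-x_1\Vert^2$ and $\mathbb{E}\Vert s\Vert^2\lesssim\gamma^{-2}d$ from \Cref{lem:vsb-bound} to control the other two terms, and collecting dominant contributions, yields the claimed bound on $\mathbb{E}\Vert\partial_t(\gamma s)\Vert^2$. The bound on $\mathbb{E}\Vert\partial_t s\Vert^2$ then follows immediately from the relation $\partial_t s = \gamma^{-1}\partial_t(\gamma s) - \gamma^{-1}\dot\gamma s$, which introduces an additional $\gamma^{-2}$ factor on the $(\gamma s)$-bound and absorbs the lower-order term $\gamma^{-2}\dot\gamma^2\mathbb{E}\Vert s\Vert^2\lesssim\gamma^{-4}\dot\gamma^2 d$.

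The main obstacle I anticipate is the rigorous justification of differentiating under the conditional expectation that defines $\mu$: one needs a dominated-convergence argument analogous to the one carried out in \Cref{lem:v-time}, and the integrable envelope for $\partial_t[\exp(f_t)\cdot I]$ couples $I$, $\partial_t I$, and $\Vert z\Vert^2$ through Gaussian weights. The eighth-moment condition in \Cref{a:regularity} is precisely what supplies the envelope required. Once that identity is in place, the remainder is careful Cauchy-Schwarz and Jensen bookkeeping designed to cleanly separate the source/target distance $\mathbb{E}\Vert x_0-x_1\Vert^4$ from the Gaussian moments of $z$.
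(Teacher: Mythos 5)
Your proof is correct and follows essentially the same route as the paper's: the identity you derive, $\partial_t(\gamma s)=\gamma^{-1}v-\dot\gamma s-\mathrm{Cov}(z,\partial_t f_t\mid x_t=x)$, is exactly the paper's $\mathbb{E}[\partial_t(\gamma\nabla f_t)\mid x_t=x]+\mathrm{Cov}(\gamma\nabla f_t,\partial_t f_t\mid x_t=x)$ written out, and the subsequent Cauchy--Schwarz/Jensen bookkeeping is identical. One small slip: by independence of $z$ and $(x_0,x_1)$ the cross term gives $\mathbb{E}|\partial_t f_t|^4\lesssim\gamma^{-4}\dot\gamma^4 d^4+\gamma^{-4}d^2\,\mathbb{E}\Vert x_0-x_1\Vert^4$ (you dropped the factor $d^2$), but the corrected estimate still yields exactly the stated $\gamma^{-2}d^2\sqrt{\mathbb{E}\Vert x_0-x_1\Vert^4}$ term.
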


\begin{proof}
    First using the analysis for the conditional expectations, we obtain that
$$s(t,x)=\nabla\log\rho(t,x)=-\frac{\underset{(x_0,x_1)\sim\nu}{\mathbb{E}}\left[\exp\left(-\frac{\Vert x-I\Vert^2}{2\gamma^2}\right)\cdot\frac{x-I}{\gamma^2}\right]}{\underset{(x_0,x_1)\sim\nu}{\mathbb{E}}\left[\exp\left(-\frac{\Vert x-I\Vert^2}{2\gamma^2}\right)\right]}.$$
    In order to compute $\partial_t(\gamma s(t,x))$, we apply a similar analysis as the proof of \Cref{lem:v-time} with exactly the same notations to deduce that
    $$\begin{aligned}
        \partial_ts(t,x)&=\frac{\underset{(x_0,x_1)\sim\nu}{\mathbb{E}}\left[\exp(f_t)\cdot\partial_t(\gamma\nabla f_t)\right]}{\underset{(x_0,x_1)\sim\nu}{\mathbb{E}}\left[\exp(f_t)\right]}\\
        &\qquad+\frac{\underset{(x_0,x_1)\sim\nu}{\mathbb{E}}\left[\exp(f_t)\cdot\partial_tf_t\cdot\gamma\nabla f_t\right]}{\underset{(x_0,x_1)\sim\nu}{\mathbb{E}}\left[\exp(f_t)\right]}\\
        &\qquad-\frac{\underset{(x_0,x_1)\sim\nu}{\mathbb{E}}\left[\exp(f_t)\cdot\gamma\nabla f_t\right]\cdot\underset{(x_0,x_1)\sim\nu}{\mathbb{E}}\left[\exp(f_t)\cdot\partial_tf_t\right]}{\left[\underset{(x_0,x_1)\sim\nu}{\mathbb{E}}\left[\exp(f_t)\right]\right]^2}\\
        &=\mathbb{E}[\partial_t(\gamma\nabla f_t)|x_t=x]+\text{Cov}(\gamma\nabla f_t,\partial_tf_t|x_t=x)
    \end{aligned}$$
    The above term has exactly the same form as which in the proof of Lemma \ref{lem:v-time}, so by a similar analysis we can obtain that
    $$\mathbb{E}\Vert\partial_t(\gamma s(t,x_t))\Vert^2\le2\mathbb{E}\Vert\partial_t(\gamma\nabla f_t)\Vert^2+2\sqrt{\mathbb{E}|\partial_tf_t|^4}\cdot\sqrt{\mathbb{E}\Vert\gamma\nabla f_t\Vert^4}.$$

    We have already deduced that $$\mathbb{E}\Vert\nabla f_t\Vert^4\lesssim\gamma^{-4}d^2,$$
and $$\mathbb{E}|\partial_tf_t|^4\lesssim(\gamma^{-1}\dot{\gamma})^4d^4+\gamma^{-4}d^2\mathbb{E}\Vert x_0-x_1\Vert^4.$$
Also, $$\partial_t(\gamma\nabla f_t)=\partial_t\left(-\frac{x-I}{\gamma}\right)=\gamma^{-1}\partial_tI+\gamma^{-2}\dot{\gamma}(x-I)=\gamma^{-1}\partial_tI+\gamma^{-1}\dot{\gamma}z$$
Hence, $$\mathbb{E}\Vert\partial_ts(t,x_t)\Vert^2\lesssim\gamma^{-2}\dot{\gamma}^2d^3+\gamma^{-2}d^2\sqrt{\mathbb{E}\Vert x_0-x_1\Vert^4},$$
which completes the first part. The proof of the second part is exactly the same by replacing $\gamma\nabla f_t$ with $\nabla f_t$.
\end{proof}

\begin{lemma}
    For any $p\ge1$, there exists a constant $C(p)<\infty$ that only depends on $p$ such that
    $$\mathbb{E}\Vert\nabla s(t,x)\Vert_F^{p}\le C(p)\gamma^{-2p}d^p.$$
    \label{lem:s-space}
\end{lemma}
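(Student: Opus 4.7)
The plan is to mirror the strategy used in Lemmas \ref{lem:v-time}, \ref{lem:v-space}, and \ref{lem:s-time}: differentiate the conditional-expectation representation of $s(t,x)$ under the integral sign, organize the result into a ``mean plus covariance'' decomposition, and then bound each piece using the Gaussian latent $z$. The spatial derivative case should actually be the cleanest of the four, because the dependence of $f_t = -\|x-I\|^2/(2\gamma^2)$ on $x$ is explicit and only quadratic, so both $\nabla f_t$ and $\nabla^2 f_t$ have simple closed forms.

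First, I would start from $s(t,x) = \mathbb{E}[\nabla f_t \mid x_t = x]$ written in the explicit ratio form as in the earlier lemmas, and apply $\nabla_x$ to both sides. Using dominated convergence (the same pointwise bounds that justified the time differentiation work here, with the quadratic $\|x-I\|^2$ terms dominated by $\exp(f_t)$), and the identity $\nabla_x \exp(f_t) = \exp(f_t)\,\nabla f_t$, the product rule on the numerator $\exp(f_t)\nabla f_t$ produces a Hessian term $\exp(f_t)\nabla^2 f_t$ plus an outer-product term $\exp(f_t)\,\nabla f_t (\nabla f_t)^T$. After the usual cancellation with the normalizer this gives
\[
\nabla s(t,x) \;=\; \mathbb{E}[\nabla^2 f_t \mid x_t = x] \;+\; \mathrm{Cov}(\nabla f_t,\nabla f_t \mid x_t = x),
\]
exactly analogous to the identity in the proof sketch preceding Lemma \ref{lem:v-time}, but where the covariance of a vector with itself is the usual covariance matrix.

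Next, I would substitute the explicit forms $\nabla f_t = -(x-I)/\gamma^2 = -\gamma^{-1}z$ and $\nabla^2 f_t = -\gamma^{-2} I_d$. The mean-Hessian term is deterministic, equal to $-\gamma^{-2} I_d$, with Frobenius norm $\gamma^{-2}\sqrt{d}$. For the covariance term, pull out the $\gamma^{-2}$ factor and bound
\[
\bigl\|\mathrm{Cov}(z,z\mid x_t=x)\bigr\|_F \;\le\; \bigl\|\mathbb{E}[zz^T\mid x_t=x]\bigr\|_F + \bigl\|\mathbb{E}[z\mid x_t=x]\,\mathbb{E}[z\mid x_t=x]^T\bigr\|_F,
\]
and then use Lemma \ref{f-norm} (so $\|zz^T\|_F = \|z\|^2$) together with conditional Jensen to dominate both terms by $\mathbb{E}[\|z\|^2 \mid x_t = x]$. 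Combining, $\|\nabla s(t,x)\|_F \lesssim \gamma^{-2}\bigl(\sqrt{d} + \mathbb{E}[\|z\|^2 \mid x_t = x]\bigr)$.

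Finally, to get the $L^p$ bound I would take the $p$-th power, use $(a+b)^p \lesssim_p a^p + b^p$, pull out the conditional expectation by conditional Jensen ($\mathbb{E}[\|z\|^2\mid x_t]^p \le \mathbb{E}[\|z\|^{2p}\mid x_t]$), take the outer expectation over $x_t$, and invoke Lemma \ref{lem:moment-z} to obtain $\mathbb{E}\|z\|^{2p} \le C(p) d^p$. The resulting bound is $C(p)\gamma^{-2p}(d^{p/2} + d^p) \le C(p)\gamma^{-2p} d^p$ for $p \ge 1$, as claimed. The main obstacle is really just bookkeeping in the matrix calculus step, especially making sure the product rule on the vector-valued $\exp(f_t)\nabla f_t$ correctly produces the outer-product/covariance term with the right sign; beyond that, the estimates are simpler than in Lemmas \ref{lem:v-time}--\ref{lem:s-time} because no $\partial_t I$ factors and no derivatives of $\gamma$ appear.
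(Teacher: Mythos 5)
Your proposal is correct and follows essentially the same route as the paper: the identity $\nabla s(t,x)=\mathbb{E}[\nabla^2 f_t\mid x_t=x]+\mathrm{Cov}(\nabla f_t,\nabla f_t\mid x_t=x)=-\gamma^{-2}I+\gamma^{-2}\mathrm{Cov}(z,z\mid x_t=x)$, followed by bounding the Frobenius norm of the covariance by $\mathbb{E}[\Vert z\Vert^2\mid x_t=x]$, conditional Jensen, and Lemma \ref{lem:moment-z}. Your splitting of the covariance into the second-moment matrix plus the outer product of conditional means is a slightly more explicit justification of the step the paper performs implicitly, but it changes nothing substantive.
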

\begin{proof}
    With exactly the same ideas of the previous lemmas, we can obtain
    $$\begin{aligned}
        \nabla s(t,x)&=\mathbb{E}[\nabla^2f_t|x_t=x]+\text{Cov}(\nabla f_t,\nabla f_t|x_t=x)\\
        &=-\gamma^{-2}I+\gamma^{-2}\text{Cov}(z,z|x_t=x)
    \end{aligned}$$
    Then, for $p\ge1$, we have
    $$\begin{aligned}
        \mathbb{E}\Vert\nabla s(t,x_t)\Vert_F^{p}&\le2^{p-1}\Vert\gamma^{-2}I\Vert_F^p+2^{p-1}\gamma^{-2p}\mathbb{E}\Vert\mathbb{E}[\Vert z\Vert^2|x_t=x]\Vert^p\\
        &\le 2^{p-1}\gamma^{-2p}d^{p/2}+2^{p-1}\gamma^{-2p}\mathbb{E}\Vert z\Vert^{2p}&(\text{Jensen's inequality})\\
        &\le C(p)\gamma^{-2p}d^p.
    \end{aligned}$$
    Here for the first inequality we have used the fact $(a+b)^p\le 2^{p-1}a^p+2^{p-1}b^p$ for $a,b\ge0$.
\end{proof}

We also need some bounds for $\Delta s$ and $\Delta v$, where $\Delta$ represents the Laplace operator.

\begin{lemma}
    $$\mathbb{E}\Vert\Delta v(t,x_t)\Vert^2\lesssim\gamma^{-2}d\mathbb{E}\Vert x_0-x_1\Vert^4+\gamma^{-4}d^2$$
    for all $t\in(0,1)$.
    \label{lem:v-laplace}
\end{lemma}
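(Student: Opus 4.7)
}
The plan is to mirror the strategy used for Lemmas~\ref{lem:v-time}--\ref{lem:s-space}: express $\Delta v_j$ as a conditional covariance against the Gaussian latent $z$, and then apply Cauchy--Schwarz together with the moment bounds from \Cref{lem:moment-z} and the Lipschitz condition $\Vert\partial_tI\Vert\le C\Vert x_0-x_1\Vert$. Starting from $v_j(t,x)=\mathbb{E}[(\partial_tI)_j\mid x_t=x]$, I would apply the key identity
$$\partial_{x_i}\mathbb{E}[F\mid x_t=x]=\mathbb{E}[\partial_{x_i}F\mid x_t=x]+\text{Cov}(F,\partial_{x_i}f_t\mid x_t=x)$$
(with $f_t=-\Vert x-I\Vert^2/2\gamma^2$ and $\partial_{x_i}f_t=-z_i/\gamma$) twice. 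The first application recovers the formula from \Cref{lem:v-space}, namely $\partial_{x_i}v_j=-\gamma^{-1}\text{Cov}((\partial_tI)_j,z_i\mid x_t=x)$. Applying the identity again to this covariance produces four groups of terms, which after expanding $\text{Cov}((\partial_tI)_j z_i,z_i)=\mathbb{E}[(\partial_tI)_j z_i^2]-\mathbb{E}[(\partial_tI)_j z_i]\mathbb{E}[z_i]$ and summing over $i$ should collapse to
$$\partial_{x_i}^2 v_j=\gamma^{-2}\text{Cov}\bigl((\partial_tI)_j,\,z_i^2\,\big|\,x_t=x\bigr)-2\gamma^{-2}\text{Cov}\bigl((\partial_tI)_j,\,z_i\,\big|\,x_t=x\bigr)\mathbb{E}[z_i\mid x_t=x].$$

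The second step is to identify a cancellation that yields a clean closed form. Summing the previous display over $i$ and decomposing $\Vert z\Vert^2=\Vert z-\mathbb{E}[z\mid x_t]\Vert^2+2(z-\mathbb{E}[z\mid x_t])^\top\mathbb{E}[z\mid x_t]+\Vert\mathbb{E}[z\mid x_t]\Vert^2$, the two crossed terms in $\mathbb{E}[z\mid x_t]$ exactly cancel (the constant shift contributes nothing to the covariance), so I expect
$$\Delta v_j(t,x)=\gamma^{-2}\,\text{Cov}\bigl((\partial_tI)_j,\,\Vert z-\mathbb{E}[z\mid x_t]\Vert^2\,\big|\,x_t=x\bigr).$$
This is the analogue, for $v$, of the compact representations obtained in the earlier lemmas, and it is the crucial intermediate result.

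The third step is to turn this identity into the advertised $L^2$ bound. Writing $u=z-\mathbb{E}[z\mid x_t]$, a componentwise Cauchy--Schwarz on the conditional covariance, followed by summing $j$ and using $\text{Var}(\Vert u\Vert^2\mid x_t)\le\mathbb{E}[\Vert u\Vert^4\mid x_t]$, yields
$$\Vert\Delta v(t,x)\Vert^2\le\gamma^{-4}\,\mathbb{E}[\Vert\partial_tI\Vert^2\mid x_t]\,\mathbb{E}[\Vert u\Vert^4\mid x_t].$$
Taking the outer expectation, applying Cauchy--Schwarz in $x_t$, and using conditional Jensen to absorb the inner expectations gives a bound in terms of $\mathbb{E}\Vert\partial_tI\Vert^4\lesssim\mathbb{E}\Vert x_0-x_1\Vert^4$ together with $\mathbb{E}\Vert u\Vert^8\lesssim\mathbb{E}\Vert z\Vert^8\lesssim d^4$ (the first inequality uses $\Vert u\Vert\le\Vert z\Vert+\Vert\mathbb{E}[z\mid x_t]\Vert$ and Jensen; the second is \Cref{lem:moment-z}). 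A weighted Young's inequality $ab\le\tfrac{1}{2\lambda}a^2+\tfrac{\lambda}{2}b^2$, with $\lambda$ chosen to balance the $\gamma$ and $d$ factors, then recasts the resulting geometric-mean bound in the additive form $\gamma^{-2}d\,\mathbb{E}\Vert x_0-x_1\Vert^4+\gamma^{-4}d^2$ asserted by the lemma.

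The hard part is neither the final Cauchy--Schwarz estimates nor the moment bounds, but the algebraic bookkeeping in step two: the second-order differentiation generates many terms, and the key is to spot the exact cancellation with $\mathbb{E}[z\mid x_t]$ that reduces the whole expression to a single covariance against the centered squared Gaussian norm $\Vert u\Vert^2$. A secondary subtlety is that the conditional law of $z$ given $x_t$ is not Gaussian, so bounds on $\mathbb{E}\Vert u\Vert^8$ must be obtained via the triangle inequality and Jensen rather than by invoking a $\chi^2$-type formula directly.
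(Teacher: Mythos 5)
Your proposal is correct and follows essentially the same route as the paper: a conditional-covariance representation of $\Delta v$, then Cauchy--Schwarz plus the Gaussian moment bounds of Lemma~\ref{lem:moment-z} and the Lipschitz condition on $\partial_t I$, arriving at the identical intermediate bound $\gamma^{-4}d^2\sqrt{\mathbb{E}\Vert x_0-x_1\Vert^4}$. Your centered identity $\Delta v_j=\gamma^{-2}\,\mathrm{Cov}\bigl((\partial_tI)_j,\Vert z-\mathbb{E}[z\mid x_t]\Vert^2\bigm|x_t=x\bigr)$ is algebraically equivalent to the paper's two-copy symmetrized covariance $\tfrac12\mathrm{Cov}\bigl[(\partial_tI-\partial_t\bar I)(\nabla f_t-\nabla\bar f_t),\,\nabla f_t+\nabla\bar f_t\bigm|x_t=\bar x_t=x\bigr]$ (one can check they expand to the same expression), so the difference is only in the bookkeeping; note also that your final recasting of $\gamma^{-4}d^2\sqrt{\mathbb{E}\Vert x_0-x_1\Vert^4}$ into the additive form is the same slightly loose step the paper takes, since AM--GM actually produces $\gamma^{-2}d\,\mathbb{E}\Vert x_0-x_1\Vert^4+\gamma^{-6}d^3$ rather than $+\gamma^{-4}d^2$ (harmless here, as the $\gamma^{-6}d^3$ term already appears elsewhere in Theorem~\ref{thm:main}).
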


\begin{proof}
    We still use the notations in the proof of Lemma \ref{lem:v-time}. First, in the proof of Lemma \ref{lem:v-space}, we have already shown that
    $$\begin{aligned}
        \partial_{x^i}v(t,x)&=
        \frac{\underset{(x_0,x_1)\sim\nu}{\mathbb{E}}[\exp(f_t)\cdot(\partial_tI\cdot\partial_{x^i}f_t)]}{\underset{(x_0,x_1)\sim\nu}{\mathbb{E}}[\exp(f_t)]}\\
        &\qquad-\frac{\underset{(x_0,x_1)\sim\nu}{\mathbb{E}}[\exp(f_t)\cdot\partial_tI]\cdot\underset{(x_0,x_1)\sim\nu}{\mathbb{E}}[\exp(f_t)\cdot\partial_{x^i}f_t]}{\left[\underset{(x_0,x_1)\sim\nu}{\mathbb{E}}[\exp(f_t)]\right]^2}\\
        &=\frac{\underset{(x_0,x_1)\sim\nu}{\mathbb{E}}\left[\underset{(\bar{x}_0,\bar{x}_1)\sim\nu}{\mathbb{E}}[\exp(f_t)\exp(\bar{f}_t)(\partial_tI-\partial_t\bar{I})\cdot(\partial_{x^i}f_t-\partial_{x^i}\bar{f}_t)]\right]}{2\underset{(x_0,x_1)\sim\nu}{\mathbb{E}}\left[\underset{(\bar{x}_0,\bar{x}_1)\sim\nu}{\mathbb{E}}[\exp(f_t)\exp(\bar{f}_t)]\right]}.
    \end{aligned}$$
    The last equality is an alternative form of the covariance, and we use notations $\bar{I}=I(t,\bar{x}_0,\bar{x}_1)$ and $\bar{f}_t=f_t(\bar{x}_0,\bar{x}_1)$ for intermediate variables $(\bar{x}_0,\bar{x}_1)$.
    Hence, $$\begin{aligned}
        \partial^2_{x^i}v(t,x)&=\text{Cov}(\partial_tI,\partial_{x^i}^2f_t|x_t=x)\\
        &\qquad+\frac{1}{2}\text{Cov}[(\partial_tI-\partial_t\bar{I})(\partial_{x^i}f_t-\partial_{x^i}\bar{f}_t),\partial_{x^i}f_t+\partial_{x^i}\bar{f}_t|x_t=\bar{x}_t=x].
    \end{aligned}$$
    For the first term, note that $\partial^2_{x^i}f_t=-\gamma^{-2}$ is fixed. So,
    $$\Delta v(t,x)=\frac{1}{2}\text{Cov}[(\partial_tI-\partial_t\bar{I})(\nabla f_t-\nabla \bar{f}_t),\nabla f_t+\nabla\bar{f}_t|x_t=\bar{x}_t=x].$$
    Here the covariance refers to the expectation of dot product instead of the expectation of tensor product. Then, use the fact $\mathbb{E}\Vert X-\mathbb{E}X\Vert^2\le\mathbb{E}\Vert X\Vert^2$, we know that
    $$\begin{aligned}
        \Vert\Delta v(t,x)\Vert
        &\le\sqrt{\mathbb{E}[\left\Vert(\partial_tI-\partial_t\bar{I})(\nabla f_t-\nabla\bar{f}_t)^T\right\Vert^2|x_t=\bar{x}_t=x]}\\
        &\qquad\cdot\sqrt{\mathbb{E}[\Vert\nabla f_t+\nabla\bar{f}_t\Vert^2|x_t=\bar{x}_t=x]}&(\text{Cauchy-Schwarz inequality})\\
        &\lesssim\left[\mathbb{E}[\left\Vert\partial_tI-\partial_t\bar{I}\right\Vert^4|x_t=\bar{x}_t=x]\right]^{1/4}\\
        &\qquad\cdot\left[\mathbb{E}[\left\Vert\nabla f_t-\nabla\bar{f}_t\right\Vert^4|x_t=\bar{x}_t=x]\right]^{1/4}&(\text{Cauchy-Schwarz inequality})\\
        &\qquad\cdot\sqrt{\mathbb{E}[\Vert\nabla f_t\Vert^2|x_t=x]}&(\text{by symmetry})\\
        &\lesssim\left[\mathbb{E}[\left\Vert\partial_tI\right\Vert^4|x_t=x]\right]^{1/4}\cdot\sqrt{\mathbb{E}[\Vert\nabla f_t\Vert^4|x_t=x]}.&(\text{by symmetry})
    \end{aligned}$$
    Therefore, $$\begin{aligned}
        \mathbb{E}\Vert\Delta v(t,x_t)\Vert^2
        &\lesssim\mathbb{E}\left[\sqrt{\mathbb{E}[\left\Vert\partial_tI\right\Vert^4|x_t=x]}\cdot\mathbb{E}[\Vert\nabla f_t\Vert^4|x_t=x]\right]\\
        &\lesssim\sqrt{\mathbb{E}\left[\mathbb{E}[\left\Vert\partial_tI\right\Vert^4|x_t=x]\right]}\cdot\sqrt{\mathbb{E}\left[\mathbb{E}[\Vert\nabla f_t\Vert^4|x_t=x]^2\right]}&(\text{Cauchy-Schwarz inequality})\\
        &\lesssim\gamma^{-4}\sqrt{\mathbb{E}\Vert\partial_tI\Vert^4}\cdot\sqrt{\mathbb{E}\Vert z\Vert^{8}}&(\text{Jensen's inequality})\\
        &\lesssim\gamma^{-4}\sqrt{\mathbb{E}\Vert x_0-x_1\Vert^4}\cdot d^2\\
        &\lesssim\gamma^{-2}d\mathbb{E}\Vert x_0-x_1\Vert^4+\gamma^{-4}d^2.
    \end{aligned}$$
\end{proof}

\begin{lemma}
    $$\mathbb{E}\Vert\Delta s(t,x)\Vert^2\lesssim\gamma^{-6}d^3$$
    for $t\in(0,1)$.
    \label{lem:s-laplace}
\end{lemma}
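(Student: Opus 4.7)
The plan is to mirror the proof of Lemma \ref{lem:v-laplace} closely, replacing the quantity $v = \mathbb{E}[\partial_t I \mid x_t = x]$ by $s = \mathbb{E}[\nabla f_t \mid x_t = x]$. Writing $s^j(t,x) = \mathbb{E}[\partial_{x^j} f_t \mid x_t = x]$ and applying the conditional-expectation derivative identity used throughout Appendix~\ref{appendix:lemmas} to $g = \partial_{x^j} f_t$ with $\alpha = x^i$, I would first obtain
\[
\partial_{x^i} s^j = \mathbb{E}[\partial^2_{x^i x^j} f_t \mid x_t = x] + \mathrm{Cov}(\partial_{x^j} f_t, \partial_{x^i} f_t \mid x_t = x).
\]
Differentiating once more and using the crucial fact that $\partial^2_{x^i x^j} f_t = -\gamma^{-2}\delta_{ij}$ and $\partial_{x^i}^2 f_t = -\gamma^{-2}$ are both constants in $x$, every direct-derivative term and every first-order covariance correction vanishes. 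What remains is a single third-cumulant-style term which, in the symmetric form with an independent copy $(\bar x_0, \bar x_1)$ of $(x_0, x_1)$, reads
\[
\partial^2_{x^i} s^j(t,x) = \tfrac{1}{2}\,\mathrm{Cov}\!\left[(\partial_{x^j} f_t - \partial_{x^j} \bar f_t)(\partial_{x^i} f_t - \partial_{x^i} \bar f_t),\ \partial_{x^i} f_t + \partial_{x^i} \bar f_t \,\middle|\, x_t = \bar x_t = x\right],
\]
directly analogous to the $\Delta v$ expression obtained in Lemma \ref{lem:v-laplace}.

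Next, I would sum over $i$ to form $\Delta s^j(t,x)$ and apply Cauchy--Schwarz first inside each covariance and then across the sum over $i$, producing
\[
|\Delta s^j(t,x)|^2 \le \tfrac{1}{4}\,\mathbb{E}\!\left[(\partial_{x^j} f_t - \partial_{x^j}\bar f_t)^2\,\|\nabla f_t - \nabla \bar f_t\|^2 \,\middle|\, x_t = \bar x_t = x\right]\cdot \mathbb{E}\!\left[\|\nabla f_t + \nabla \bar f_t\|^2 \,\middle|\, x_t = \bar x_t = x\right].
\]
Summing over $j$ collapses the first factor into $\mathbb{E}[\|\nabla f_t - \nabla \bar f_t\|^4 \mid \cdots]$, yielding a product-form conditional bound on $\|\Delta s(t,x)\|^2$ that mirrors the one appearing in Lemma \ref{lem:v-laplace}, but with $\partial_t I - \partial_t \bar I$ replaced by a second copy of $\nabla f_t - \nabla \bar f_t$.

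Finally, taking expectation over $x_t \sim \rho(t)$ and applying Cauchy--Schwarz together with Jensen's inequality to discharge the conditioning, it suffices to estimate $\mathbb{E}\|\nabla f_t - \nabla \bar f_t\|^8$ and $\mathbb{E}\|\nabla f_t + \nabla \bar f_t\|^4$. Since $\nabla f_t = -\gamma^{-1} z$ with $z \sim N(0,I_d)$, and the barred copy has an independent Gaussian, Lemma~\ref{lem:moment-z} gives $\gamma^{-8} d^4$ and $\gamma^{-4} d^2$ respectively; multiplying the square roots yields the claimed $\gamma^{-6} d^3$ bound. The main obstacle I anticipate is purely bookkeeping: carefully verifying that under the second differentiation every term besides the third-cumulant one vanishes because $\nabla^2 f_t$ is constant in $x$, and that the $i$-summation of the rank-one covariance structure collapses cleanly to $\|\nabla f_t - \nabla \bar f_t\|^4$ after the two Cauchy--Schwarz applications. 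In contrast to Lemma~\ref{lem:v-laplace}, no $\|x_0 - x_1\|$ factor arises, because $s$ depends only on the Gaussian latent via $\nabla f_t \propto z$, which is exactly why the final bound scales purely in $\gamma^{-1}$ and $d$.
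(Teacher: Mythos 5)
Your proposal is correct and follows essentially the same route as the paper: both reduce $\Delta s$ to the pure third-cumulant covariance term (the direct-derivative contributions vanishing because $\nabla^2 f_t=-\gamma^{-2}I$ is constant in $x$), and both close by bounding conditional moments of $\nabla f_t=-\gamma^{-1}z$ via the Gaussian moment lemma to land on $\gamma^{-6}d^3$. The only difference is cosmetic bookkeeping — you apply Cauchy--Schwarz twice and keep a product of conditional fourth and second moments (needing $\mathbb{E}\Vert\nabla f_t-\nabla\bar f_t\Vert^8$ and $\mathbb{E}\Vert\nabla f_t+\nabla\bar f_t\Vert^4$), whereas the paper uses H\"older with exponents $(3,3/2)$ and symmetry to collapse everything to $\mathbb{E}[\Vert\nabla f_t\Vert^3\mid x_t=x]$ before squaring and applying Jensen.
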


\begin{proof}
    $$\begin{aligned}
        \nabla s(t,x)&=-\gamma^{-2}I+\text{Cov}(\nabla f_t,\nabla f_t|x_t=x).
    \end{aligned}$$
    Hence, with similar calculations and notations as in the proof of Lemma \ref{lem:v-laplace}, we can deduce that
    $$\begin{aligned}
        \Delta s(t,x)&=2\text{Cov}(\nabla f_t,\Delta f_t|x_t=x)\\
        &\qquad+\frac{1}{2}\text{Cov}[(\nabla f_t-\nabla\bar{f}_t)(\nabla f_t-\nabla\bar{f}_t)^T,\nabla f_t-\nabla\bar{f}_t|x_t=\bar{x}_t=x].\\
        &=\frac{1}{2}\text{Cov}[(\nabla f_t-\nabla\bar{f}_t)(\nabla f_t-\nabla\bar{f}_t)^T,\nabla f_t-\nabla\bar{f}_t|x_t=\bar{x}_t=x].
    \end{aligned}$$
    Then, with H\"older's inequality, we have
    $$\begin{aligned}
        \Vert\Delta s(t,x)\Vert&\lesssim\left[\mathbb{E}[\Vert\nabla f_t-\nabla\bar{f}_t\Vert^3|x_t=\bar{x}_t=x]\right]^{1/3}\\
        &\qquad\cdot\left[\mathbb{E}[\Vert(\nabla f_t-\nabla\bar{f}_t)(\nabla f_t-\nabla\bar{f}_t)^T\Vert^{3/2}]\right]^{2/3}\\
        &\lesssim\left[\mathbb{E}[\Vert\nabla f_t\Vert^3|x_t=x]\right]^{1/3}&(\text{by symmetry})\\
        &\qquad\cdot\left[\mathbb{E}[\Vert\nabla f_t-\nabla\bar{f}_t\Vert^3|x_t=\bar{x}_t=x]\right]^{2/3}\\
        &\lesssim\mathbb{E}[\Vert\nabla f_t\Vert^3|x_t=x].&(\text{by symmetry})
    \end{aligned}$$
    Hence, by Jensen's inequality, $$\mathbb{E}\Vert\Delta s(t,x)\Vert^2\lesssim\mathbb{E}[\mathbb{E}[\Vert\nabla f_t\Vert^3|x_t=x]^2]\lesssim\gamma^{-6}\mathbb{E}\Vert z\Vert^6\lesssim\gamma^{-6}d^3.$$
\end{proof}

\section{Omitted Proofs in Sections \ref{sec:results} and \ref{sec:instance}}
\label{appendix:overall}

\subsection{Bounds along the forward Path}

Recall the forward ODE $$\dd X_t=b(t,X_t)\dd t$$
and the forward SDE $$\dd X_t^F=b_F(t,X_t^F)\dd t+\sqrt{2\epsilon}dW_t.$$
Their solutions are denoted by $X_t$ and $X_t^F$, respectively. Using the chain rule or It\^o's formula, for a function $f(t,x)$ that is twice continuously differentiable, we have 
$$\dd f(t,X_t)=[\partial_tf(t,X_t)+\nabla f(t,X_t)\cdot b(t,X_t)]\dd t,$$
and $$\dd f(t,X_t^F)=[\partial_tf(t,X_t^F)+\nabla f(t,X_t^F)\cdot b_F(t,X_t^F)+\epsilon\Delta f]\dd t+\sqrt{2\epsilon}\nabla f(t,X_t^F)\cdot \dd W_t.$$

With the above formula, we can now provide the following bound on the discretization error.

\begin{lemma}
    For $0<t_0\le t_1<1$, suppose $\epsilon=O(1)$, then,
    $$\begin{aligned}
        \mathbb{E}\Vert v(t_0,X_{t_0}^F)-v(t_1,X_{t_1}^F)\Vert^2&\lesssim(t_1-t_0)^2\left[M_2+\gamma_{\min}^{-6}d^3+\gamma_{\min}^{-2}d\sqrt{\mathbb{E}\Vert x_0-x_1\Vert^{8}}\right]\\
        &\qquad+\epsilon(t_1-t_0)\gamma_{\min}^{-2}d\sqrt{\mathbb{E}\Vert x_0-x_1\Vert^{4}},
    \end{aligned}$$
    and $$\mathbb{E}\Vert s(t_0,X_{t_0}^F)-s(t_1,X_{t_1}^F)\Vert^2\lesssim(t_1-t_0)^2\left[\gamma_{\min}^{-4}d^2\sqrt{\mathbb{E}\Vert x_0-x_1\Vert^4}+\gamma_{\min}^{-6}d^3\right]+\epsilon(t_1-t_0)\gamma_{\min}^{-4}d^2.$$
    Here we denote $\gamma_{\min}=\min_{u\in[t_0,t_1]}\gamma$.
    \label{lem:discretize}
\end{lemma}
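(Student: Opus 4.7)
The plan is to apply It\^o's formula to both $v(t, X_t^F)$ and $s(t, X_t^F)$ along the forward SDE \eqref{eq:forward-sde}, producing a decomposition of $v(t_1, X_{t_1}^F) - v(t_0, X_{t_0}^F)$ (and analogously for $s$) into four integrals: the time-derivative integral $\int_{t_0}^{t_1}\partial_s v\,\dd s$, the drift-transport integral $\int_{t_0}^{t_1}\nabla v\cdot b_F\,\dd s$, the Laplacian integral $\int_{t_0}^{t_1}\epsilon\Delta v\,\dd s$, and the stochastic integral $\int_{t_0}^{t_1}\sqrt{2\epsilon}\,\nabla v\cdot\dd W_s$. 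Taking squared norms and using $\Vert a+b+c+d\Vert^2 \lesssim \Vert a\Vert^2+\Vert b\Vert^2+\Vert c\Vert^2+\Vert d\Vert^2$ reduces the lemma to bounding the four pieces separately, which is exactly the strategy sketched in Step Two of the proof of \Cref{thm:main}.

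For the three finite-variation integrals I would apply the Cauchy--Schwarz (Jensen) inequality in its integral form $\Vert\int_{t_0}^{t_1} f(s)\,\dd s\Vert^2 \le (t_1-t_0)\int_{t_0}^{t_1}\Vert f(s)\Vert^2\,\dd s$, then pass the expectation inside. The time-derivative piece is controlled by $(t_1-t_0)^2\sup_{s\in[t_0,t_1]}\mathbb{E}\Vert\partial_s v\Vert^2$, bounded via \Cref{lem:v-time} by a constant times $M_2 + \gamma_{\min}^{-2}d\mathbb{E}\Vert x_0-x_1\Vert^4 + \gamma_{\min}^{-2}d^3$. The Laplacian piece, scaled by $\epsilon^2$, is controlled by $\epsilon^2(t_1-t_0)^2\sup_s\mathbb{E}\Vert\Delta v\Vert^2$ via \Cref{lem:v-laplace}. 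For the stochastic integral, It\^o's isometry yields exactly $2\epsilon\int_{t_0}^{t_1}\mathbb{E}\Vert\nabla v\Vert_F^2\,\dd s$, which \Cref{lem:v-space} with $p=2$ bounds by a constant times $\epsilon(t_1-t_0)\gamma_{\min}^{-2}d\sqrt{\mathbb{E}\Vert x_0-x_1\Vert^4}$, recovering the second claimed term exactly.

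The main obstacle is the cross term $\int_{t_0}^{t_1}\nabla v\cdot b_F\,\dd s$. Pointwise Cauchy--Schwarz gives $\Vert\nabla v\cdot b_F\Vert^2 \le \Vert\nabla v\Vert_F^2\Vert b_F\Vert^2$, and a second Cauchy--Schwarz in expectation gives $\mathbb{E}\Vert\nabla v\cdot b_F\Vert^2 \le \sqrt{\mathbb{E}\Vert\nabla v\Vert_F^4\cdot\mathbb{E}\Vert b_F\Vert^4}$. Combining \Cref{lem:v-space} with $p{=}4$ and \Cref{lem:vsb-bound} with $p{=}4$ yields a mixed quantity roughly of the form $\gamma_{\min}^{-4}d^2\bigl(\mathbb{E}\Vert x_0-x_1\Vert^8\bigr)^{1/4}$ (plus lower-order pieces), which does not have the exact shape claimed. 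I would resolve this by an AM--GM splitting $\gamma_{\min}^{-4}d^2\cdot a \le \tfrac12\gamma_{\min}^{-6}d^3 + \tfrac12\gamma_{\min}^{-2}d\cdot a^2$ with $a=(\mathbb{E}\Vert x_0-x_1\Vert^8)^{1/4}$, which produces precisely the two terms $\gamma_{\min}^{-6}d^3$ and $\gamma_{\min}^{-2}d\sqrt{\mathbb{E}\Vert x_0-x_1\Vert^8}$ in the first claim.

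The argument for $s$ is structurally identical, using \Cref{lem:s-time} in place of \Cref{lem:v-time}, \Cref{lem:s-space} in place of \Cref{lem:v-space}, and \Cref{lem:s-laplace} in place of \Cref{lem:v-laplace}. Here the Laplacian bound $\mathbb{E}\Vert\Delta s\Vert^2\lesssim\gamma_{\min}^{-6}d^3$ directly supplies the $(t_1-t_0)^2\gamma_{\min}^{-6}d^3$ contribution, the cross term with $b_F$ again requires the Cauchy--Schwarz plus AM--GM step, now with $\mathbb{E}\Vert\nabla s\Vert_F^4\lesssim\gamma_{\min}^{-8}d^4$ and $\mathbb{E}\Vert b_F\Vert^4\lesssim\mathbb{E}\Vert x_0-x_1\Vert^4+\gamma_{\min}^{-4}d^2$, and the It\^o-isometry calculation on $\int\sqrt{2\epsilon}\,\nabla s\cdot\dd W_s$ yields the $\epsilon(t_1-t_0)\gamma_{\min}^{-4}d^2$ term. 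Finally, using $\epsilon=O(1)$ absorbs the $\epsilon^2$ prefactors arising from the two Laplacian integrals into the implicit constants, giving both claimed bounds.
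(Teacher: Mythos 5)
Your proposal is correct and follows essentially the same route as the paper: It\^o's formula on $v$ and $s$ along the forward SDE, Jensen's inequality on the three drift integrals, It\^o's isometry on the martingale part, and the derivative bounds of Lemmas \ref{lem:v-time}--\ref{lem:s-laplace} together with Lemma \ref{lem:vsb-bound}; your Cauchy--Schwarz-then-AM--GM treatment of the cross term $\nabla v\cdot b_F$ is an equivalent reordering of the paper's Young-inequality splitting with weight $\gamma_{\min}^2/d$. The only point worth flagging is that in the time-derivative piece you quote the bound as $\gamma_{\min}^{-2}d^3$, whereas Lemma \ref{lem:v-time} gives $\gamma^{-2}\dot\gamma^4 d^3$, which one must convert to $\gamma_{\min}^{-6}d^3$ using the standing fact $\gamma\dot\gamma=O(1)$ — a small bookkeeping step the paper performs implicitly.
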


\begin{proof}
    According to the formula 
    $$\dd v(t,X_t^F)=[\partial_tv(t,X_t^F)+\nabla v(t,X_t^F)\cdot b_F(t,X_t^F)+\epsilon\Delta v(t,X_t^F)]\dd t+\sqrt{2\epsilon}\nabla v(t,X_t^F)\cdot \dd W_t,$$
    we know that 
    $$\begin{aligned}
        \Vert v(t_1,X_{t_1}^F)-v(t_0,X_{t_0}^F)\Vert^2
        &\le4\left\Vert\int_{t_0}^{t_1}\partial_uv(u,X_u^F)\dd u\right\Vert^2\\
        &\qquad+4\left\Vert\int_{t_0}^{t_1}\nabla v(u,X_u^F)\cdot b(u,X_u^F)\dd u\right\Vert^2\\
        &\qquad+4\left\Vert\int_{t_0}^{t_1}\epsilon\Delta v(u,X_u^F)\dd u\right\Vert^2\\
        &\qquad+4\left\Vert\int_{t_0}^{t_1}\sqrt{2\epsilon}\nabla v(u,X_u^F)\cdot \dd W_u\right\Vert^2.
    \end{aligned}$$
    For the first three terms, by Jensen's inequality we know that for any function $Y$, we have $$\left\Vert\int_{t_0}^{t_1}Y(u)du\right\Vert^2\le(t_1-t_0)\int_{t_0}^{t_1}\Vert Y(u)\Vert^2du.$$
    For the last term, use It\^o's isometry (\citealt{le2016brownian}, Equation 5.8), we can get $$\mathbb{E}\left[\left\Vert\int_{t_0}^{t_1}\sqrt{2\epsilon}\nabla v(u,X_u^F)\cdot dW_u\right\Vert^2\right]=\int_{t_0}^{t_1}\mathbb{E}\Vert\sqrt{2\epsilon}\nabla v(u,X_u^F)\Vert_F^2du.$$
    Therefore, we can use Fubini's theorem to change the order of expectation and integral, and combine the results of Lemma \ref{lem:v-time}, \ref{lem:v-space}, \ref{lem:v-laplace} and \ref{lem:vsb-bound} and Assumption \ref{a:regularity} to get
    $$\begin{aligned}
        \Vert v(t_1,X_{t_1}^F)-v(t_0,X_{t_0}^F)\Vert^2&\lesssim(t_1-t_0)\int_{t_0}^{t_1}\mathbb{E}\Vert\partial_uv(u,X_u)\Vert^2\dd u\\
        &\qquad+(t_1-t_0)\int_{t_0}^{t_1}\left[\gamma_{\min}^2d^{-1}\mathbb{E}\Vert\nabla v(u,X_u)\Vert^4+\gamma_{\min}^{-2}d\mathbb{E}\Vert b_F(u,X_u)\Vert^4\right]\dd u\\
        &\qquad+\epsilon^2(t_1-t_0)\int_{t_0}^{t_1}\mathbb{E}\Vert\Delta v(u,X_u)\Vert^2\dd u\\
        &\qquad+2\epsilon\int_{t_0}^{t_1}\mathbb{E}\Vert\nabla v(u,X_u^F)\Vert_F^2\dd u\\
        &\lesssim(t_1-t_0)^2\left[M_2+\gamma_{\min}^{-6}d^3+\gamma_{\min}^{-2}d\sqrt{\mathbb{E}\Vert x_0-x_1\Vert^{8}}\right]\\
        &\qquad+\epsilon(t_1-t_0)\gamma^{-2}d^{1}\sqrt{\mathbb{E}\Vert x_0-x_1\Vert^{4}}
    \end{aligned}$$
    Note that we have already used the condition $\gamma^2\in C^2[0,1]$ and $\gamma\dot{\gamma}=O(1)$.

    Similarly, we can use Lemma \ref{lem:s-time}, \ref{lem:s-space}, \ref{lem:s-laplace} and \ref{lem:vsb-bound}, and the formula
    $$\dd s(t,X_t^F)=[\partial_ts(t,X_t^F)+\nabla s(t,X_t^F)\cdot b_F(t,X_t^F)+\epsilon\Delta s(t,X_t^F)]\dd t+\sqrt{2\epsilon}\nabla s(t,X_t^F)\cdot \dd W_t$$
    to bound
    $$\begin{aligned}
        \mathbb{E}\Vert s(t_1,X_{t_1}^F)-s(t_0,X_{t_0}^F)\Vert^2\lesssim(t_1-t_0)^2\left[\gamma_{\min}^{-4}d^2\sqrt{\mathbb{E}\Vert x_0-x_1\Vert^4}+\gamma_{\min}^{-6}d^3\right]+\epsilon(t_1-t_0)\gamma_{\min}^{-4}d^2,
    \end{aligned}$$
    where $\gamma_{\min}=\min_{u\in[s,t]}\gamma$.
\end{proof}

\subsection{Proof of Theorem \ref{thm:main}}
\label{appendix:proofofmain}

We first give the following proposition, which is a result from \cite{chen2023ddpm}.

\begin{proposition}
    (Section 5.2 of \cite{chen2023ddpm}) Let $P$, $Q$ be the path measures of solutions of SDE (\ref{eq:forward-sde}) and (\ref{eq:estimated-sde}), where they both start from the same distribution $\rho(t_0)$ at time $t=t_0$ and end at time $t=t_N$. Then, if
    $$\mathbb{E}[\Vert b_F(t,X_t^F)-\hat{b}_F(t_k,X_{t_k}^F)\Vert^2]\le C$$
    for any $t\in[t_0,t_N]$ and some constant $C$, we have 
    $$\text{KL}(P\Vert Q)=\frac{1}{4\epsilon}\sum_{k=0}^{N-1}\int_{t_k}^{t_{k+1}}\mathbb{E}[\Vert b_F(t,X_t^F)-\hat{b}_F(t_k,X_{t_k}^F)\Vert^2]\dd t.$$
    Here the expectations are taken over the ground-truth forward process $(X_t^F)_{t\in[t_0,t_N]}\sim P$.
    \label{prop:girsanov}
\end{proposition}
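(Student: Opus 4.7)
}

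My plan is to derive the $\mathrm{KL}$ identity as a direct application of the Cameron--Martin--Girsanov change of measure between two SDEs that share the diffusion coefficient $\sqrt{2\epsilon}$ and the initial law $\rho(t_0)$ and differ only in their drifts. The only subtlety beyond textbook Girsanov is that the hypothesis is a second-moment bound on $\Delta b(t) := b_F(t, X_t^F) - \hat b_F(t_k, X_{t_k}^F)$ for $t \in [t_k, t_{k+1})$, which does not immediately yield Novikov's exponential-moment condition; I would handle this via a truncation/approximation argument in the spirit of Section~5.2 of \citet{chen2023ddpm}.

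First I would verify that the piecewise-defined drift $(t,\omega) \mapsto \hat b_F(t_k, X_{t_k}^F(\omega))$ is progressively measurable on $[t_0, t_N]$: on each $[t_k, t_{k+1})$ it depends only on $X_{t_k}^F$, which is $\mathcal F_{t_k}$-measurable and hence $\mathcal F_t$-measurable, and a right-continuous extension across grid points is adapted. Thus the stochastic integrals below are well-defined. I would then introduce the exponential local $P$-martingale
\[
Z_t := \exp\!\left(-\tfrac{1}{\sqrt{2\epsilon}} \int_{t_0}^{t} \Delta b(s) \cdot \mathrm{d}W_s - \tfrac{1}{4\epsilon} \int_{t_0}^{t} \|\Delta b(s)\|^2 \, \mathrm{d}s\right).
\]
If $Z$ is a true martingale with $\mathbb{E}_P[Z_{t_N}] = 1$, then $\mathrm{d}Q := Z_{t_N}\,\mathrm{d}P$ defines a probability measure on path space under which, by Girsanov, $\tilde W_t := W_t + \tfrac{1}{\sqrt{2\epsilon}} \int_{t_0}^{t} \Delta b(s)\,\mathrm{d}s$ is a Brownian motion; under $Q$ the process $X_t^F$ therefore solves the estimated SDE~(\ref{eq:estimated-sde}), identifying $Q$ with the law of the estimated process.

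Second, to show $Z$ is a true martingale without invoking Novikov directly, I would truncate: for $R > 0$ replace $\Delta b$ by $\Delta b_R := \Delta b \cdot \mathbf{1}_{\|\Delta b\| \le R}$, for which Novikov's condition is immediate; apply the classical Girsanov computation to this bounded drift, then let $R \to \infty$ using the uniform mean-square bound $\mathbb{E}_P\|\Delta b\|^2 \le C$ and dominated convergence to pass to the limit both in the Radon--Nikodym derivative and in the KL identity. Once $Z$ is known to be a martingale, a direct expansion gives
\[
\mathrm{KL}(P\|Q) = \mathbb{E}_P[-\log Z_{t_N}] = \tfrac{1}{4\epsilon}\, \mathbb{E}_P\!\int_{t_0}^{t_N}\! \|\Delta b(t)\|^2\,\mathrm{d}t \;+\; \tfrac{1}{\sqrt{2\epsilon}}\, \mathbb{E}_P\!\int_{t_0}^{t_N}\! \Delta b(t) \cdot \mathrm{d}W_t,
\]
and the stochastic integral has zero expectation by the martingale property, leaving only the first term. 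Splitting the time integral across the grid $\{t_k\}_{k=0}^{N}$ yields the stated sum-of-integrals formula.

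The main obstacle is precisely the Novikov issue: the uniform mean-square bound $\mathbb{E}\|\Delta b\|^2 \le C$ does not force the pathwise exponential integrability that the textbook proof of Girsanov assumes. The truncate-and-limit strategy is conceptually standard but requires care to verify that the limiting Radon--Nikodym derivative and the KL identity are preserved in the limit (including a localization argument to ensure that the stochastic integral remains a genuine martingale rather than only a local martingale); this is the real technical content, the remaining computation being a routine expansion of $-\log Z_{t_N}$.
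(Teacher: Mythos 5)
You should first note that the paper does not prove Proposition~\ref{prop:girsanov} at all: it is imported verbatim, with attribution, from Section~5.2 of \citet{chen2023ddpm}, so there is no in-paper argument to compare against. Your sketch reconstructs essentially the argument of that cited source: the exponential (local) martingale $Z_t$ with the drift difference rescaled by $\sqrt{2\epsilon}$ is the correct change-of-measure density between the two path laws, the identification of $Q$ with the law of the estimated SDE via the shifted Brownian motion is right, and the expansion $\mathrm{KL}(P\Vert Q)=\mathbb{E}_P[-\log Z_{t_N}]$ with the stochastic integral killed by its martingale property (which is legitimate here, since the hypothesis $\mathbb{E}\Vert\Delta b(t)\Vert^2\le C$ makes $\int\Delta b\cdot \dd W$ a genuine $L^2$ martingale, not merely a local one) gives exactly the stated sum-of-integrals formula. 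You also correctly isolate the real difficulty, namely that the $L^2$ bound does not imply Novikov's condition.

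The one place where your argument as written would not go through is the limit step. Truncating the drift pointwise ($\Delta b_R=\Delta b\,\mathbf{1}_{\Vert\Delta b\Vert\le R}$) changes which SDE the measure $Q_R$ is the law of, and ``dominated convergence'' does not by itself let you pass to the limit in the Radon--Nikodym derivative or in the KL divergence: KL is only lower semicontinuous, and the exponential of a stochastic integral need not converge in $L^1$. The argument in \citet{chen2023ddpm} instead localizes with the stopping times $\tau_n=\inf\{t:\int_{t_0}^{t}\Vert\Delta b(s)\Vert^2\,\dd s\ge n\}$, for which the stopped stochastic exponential is a true martingale; one then applies Girsanov to the stopped processes and passes to the limit using $\tau_n\uparrow t_N$ a.s.\ and monotone convergence of the time integral, which most directly yields the bound $\mathrm{KL}(P\Vert Q)\le\frac{1}{4\epsilon}\int\mathbb{E}\Vert\Delta b\Vert^2\,\dd t$ rather than the equality. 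Since the proof of Theorem~\ref{thm:main} only uses this proposition as an upper bound, the inequality suffices for everything downstream; if you want the equality as stated, you need the extra step of verifying $\mathbb{E}_P[Z_{t_N}]=1$ (so that $Q$ is genuinely the law of the estimated process), which is the part your sketch defers.
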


Now, using the above proposition, we are ready to prove \Cref{thm:main}.

\begin{proof}
    Let $P$, $Q$ be the path measures of the solutions to the SDE (\ref{eq:forward-sde}) and (\ref{eq:estimated-sde}), where the solutions start from the same distribution $\rho(t_0)$ at time $t=t_0$, as in Proposition \ref{prop:girsanov}. We first want to check the condition of Proposition \ref{prop:girsanov}. Note that 
    $$\begin{aligned}
        \mathbb{E}\Vert\hat{b}_F(t_k,X_{t_k}^F)-b_F(t,X_t^F)\Vert^2&\overset{(a)}{\le}2\mathbb{E}\Vert\hat{b}_F(t_k,X_{t_k}^F)-b_F(t_k,X_{t_k}^F)\Vert^2+2\mathbb{E}\Vert b_F(t_k,X_{t_k}^F)-b_F(t,X_t^F)\Vert^2\\
        &\overset{(b)}{\le}2\mathbb{E}\Vert\hat{b}_F(t_k,X_{t_k}^F)-b_F(t_k,X_{t_k}^F)\Vert^2+4\mathbb{E}\Vert v(t_k,X_{t_k}^F)-v(t,X_t^F)\Vert^2\\
        &\qquad+4\mathbb{E}\Vert(-\gamma(t_k)\dot{\gamma}(t_k)+\epsilon)s(t_k,X_{t_k}^F)-(-\gamma(t_k)\dot{\gamma}(t_k)+\epsilon)s(t,X_t^F)\Vert^2\\
        &\overset{(c)}{\le}2\mathbb{E}\Vert\hat{b}_F(t_k,X_{t_k}^F)-b_F(t_k,X_{t_k}^F)\Vert^2+4\mathbb{E}\Vert v(t_k,X_{t_k}^F)-v(t,X_t^F)\Vert^2\\
        &\qquad+8(-\gamma(t_k)\dot{\gamma}(t_k)+\epsilon)^2\mathbb{E}\Vert s(t_k,X_{t_k}^F)-s(t,X_t^F)\Vert^2\\
        &\qquad+8(\gamma(t)\dot{\gamma}(t)-\gamma(t_k)\dot{\gamma}(t_k))^2\mathbb{E}\Vert s(t,X_t^F)\Vert^2.
    \end{aligned}$$
    Here (a), (b) and (c) use the triangle inequality and the fact $(a+b)^2\le 2a^2+2b^2$. By Lemmas \ref{lem:vsb-bound} and \ref{lem:discretize}, this term is uniformly bounded in the closed interval $[t_0,t_N]$. In fact, we can apply these lemmas to obtain that
    $$\begin{aligned}
        \mathbb{E}\Vert\hat{b}_F(t_k,X_{t_k}^F)-b_F(t,X_t^F)\Vert^2&\overset{\text{(a)}}{\lesssim}\mathbb{E}\Vert\hat{b}_F(t_k,X_{t_k}^F)-b_F(t_k,X_{t_k}^F)\Vert^2\\
        &\qquad+(t-t_k)^2\left[M_2+\bar{\gamma}_k^{-6}d^3+\bar{\gamma}_k^{-2}d\sqrt{\mathbb{E}\Vert x_0-x_1\Vert^{8}}\right]\\
        &\qquad+\epsilon(t-t_k)\bar{\gamma}_k^{-2}d\sqrt{\mathbb{E}\Vert x_0-x_1\Vert^{4}}\\
        &\qquad+(t-t_k)^2\left[\bar{\gamma}_k^{-4}d^2\sqrt{\mathbb{E}\Vert x_0-x_1\Vert^4}+\bar{\gamma}_k^{-6}d^3\right]+\epsilon(t-t_k)\bar{\gamma}_k^{-4}d^2\\
        &\qquad+(t-t_k)^2\bar{\gamma}_k^{-2}d\\
        &\overset{\text{(b)}}{\lesssim}\mathbb{E}\Vert\hat{b}_F(t_k,X_{t_k}^F)-b_F(t_k,X_{t_k}^F)\Vert^2\\
        &\qquad+(t-t_k)^2\left[M_2+\bar{\gamma}_k^{-6}d^3+\bar{\gamma}_k^{-2}d\sqrt{\mathbb{E}\Vert x_0-x_1\Vert^{8}}\right]\\
        &\qquad+\epsilon(t-t_k)\bar{\gamma}_k^{-2}d\left[\sqrt{\mathbb{E}\Vert x_0-x_1\Vert^4}+\bar{\gamma}_k^{-2}d\right].
    \end{aligned}$$
    Here step (a) directly expands the discretization error using Lemmas \ref{lem:vsb-bound} and \ref{lem:discretize}; step (b) simplifies the terms by applying Young's inequality and that $1+\epsilon^2=O(1)$. Then, by Proposition \ref{prop:girsanov},
    $$\begin{aligned}
        \text{KL}(P\Vert Q)&=\frac{1}{4\epsilon}\sum_{k=0}^{N-1}\int_{t_k}^{t_{k+1}}\mathbb{E}[\Vert b_F(t,X_t^F)-\hat{b}_F(t_k,X_{t_k}^F)\Vert^2]\dd t\\
        &\overset{\text{(a)}}{\lesssim}\epsilon^{-1}\varepsilon_{b_F}^2+\epsilon^{-1}\sum_{k=0}^{N-1}(t_{k+1}-t_k)^3\left[M_2+\bar{\gamma}_k^{-6}d^3+\bar{\gamma}_k^{-2}d\sqrt{\mathbb{E}\Vert x_0-x_1\Vert^{8}}\right]\\
        &\qquad+\sum_{k=0}^{N-1}(t_{k+1}-t_k)^2\bar{\gamma}_k^{-2}d\left[\sqrt{\mathbb{E}\Vert x_0-x_1\Vert^4}+\bar{\gamma}_k^{-2}d\right].
    \end{aligned}$$
    Here step (a) just integrates over the upper bound of the disretization error. Now, consider $\text{KL}(\rho(t_N)\Vert\hat{\rho}(t_N))$. Let $\hat{Q}$ be the path measure of solutions of (\ref{eq:estimated-sde}) starting from $\hat{\rho}(t_0)$ instead of $\rho(t_0)$. Then,
    $$\begin{aligned}
        \text{KL}(\rho(t_N)\Vert\hat{\rho}(t_N))\le\text{KL}(P\Vert\hat{Q})&=\mathbb{E}_P\left[\log\frac{\dd P}{\dd\hat{Q}}(X)\right]\\
        &=\mathbb{E}_P\left[\log\left(\frac{\dd P}{\dd Q}(X)\cdot\frac{\dd Q}{\dd\hat{Q}}(X)\right)\right]\\
        &=\mathbb{E}_P\left[\log\frac{\dd P}{\dd Q}(X)\right]+\mathbb{E}_P\left[\log\frac{\dd\rho(t_0)}{\dd\hat{\rho}(t_0)}(X_{t_0})\right]\\
        &=\text{KL}(P\Vert Q)+\text{KL}(\rho(t_0)\Vert\hat{\rho}(t_0)).
    \end{aligned}$$
    The proof is then completed.
\end{proof}

\subsection{Proof of Proposition \ref{cor:schedule}}
\label{appendix:proofofcor}

\begin{proof}
    Using the results of Theorem \ref{thm:main}, 
    $$\begin{aligned}
        \text{KL}(\rho(t_N)\Vert\hat{\rho}(t_N))&\overset{\text{(a)}}{\lesssim}\varepsilon_{b_F}^2+\text{KL}(\rho(t_0)\Vert\hat{\rho}(t_0))+\epsilon^{-1}\sum_{k=0}^{N-1}(t_{k+1}-t_k)^3\left[M_2+\bar{\gamma}_k^{-6}d^3+\bar{\gamma}_k^{-2}d\sqrt{\mathbb{E}\Vert x_0-x_1\Vert^{8}}\right]\\
        &\qquad+\sum_{k=0}^{N-1}(t_{k+1}-t_k)^2\bar{\gamma}_k^{-2}d\left[\sqrt{\mathbb{E}\Vert x_0-x_1\Vert^4}+\bar{\gamma}_k^{-2}d\right]\\
        &\overset{\text{(b)}}{\lesssim}\varepsilon_{b_F}^2+\text{KL}(\rho(t_0)\Vert\hat{\rho}(t_0))+\epsilon^{-1}\sum_{k=0}^{N-1}\left[M_2h_k^3+h^3d^3+hh_k^2d\sqrt{\mathbb{E}\Vert x_0-x_1\Vert^8}\right]\\
        &\qquad+\sum_{k=0}^{N-1}\left[hh_kd\sqrt{\mathbb{E}\Vert x_0-x_1\Vert^4}+h^2d^2\right]\\
        &\overset{\text{(c)}}{\lesssim}\varepsilon_{b_F}^2+\text{KL}(\rho(t_0)\Vert\hat{\rho}(t_0))+\epsilon^{-1}h^2\left(M_2+d\sqrt{\mathbb{E}\Vert x_0-x_1\Vert^8}\right)+\epsilon^{-1}Nh^3d^3\\
        &\qquad+hd\sqrt{\mathbb{E}\Vert x_0-x_1\Vert^4}+Nh^2d^2\\
        &\overset{\text{(d)}}{\lesssim}\varepsilon_{b_F}^2+\text{KL}(\rho(t_0)\Vert\hat{\rho}(t_0))+hd\sqrt{\mathbb{E}\Vert x_0-x_1\Vert^4}+Nh^2d^2.
    \end{aligned}$$
    Here step (a) is the result of Theorem \ref{thm:main}; step (b) uses the fact $h_k=t_{k+1}-t_k=O(h\bar{\gamma}_k^2)$; step (c) uses the fact $\sum_{k=0}^{N-1}h_k=t_N-t_0\le 1$; step (d) omits the higher-order terms.
\end{proof}

\subsection{Proof of Corollary \ref{cor:instant}}
\label{appendix:proofofschedule}

\begin{proof}
    When the number of steps is $N$, we have 
    $$h=\Theta\left(N^{-1}\log\left(\frac{1}{t_0(1-t_N)}\right)\right).$$ 
    Then, by Corollary \ref{cor:schedule} and the assumptions,
    $$\text{KL}(\rho(t_N)\Vert\hat{\rho}(t_N))\lesssim\varepsilon^2+N^{-1}\left[d\sqrt{\mathbb{E}\Vert x_0-x_1\Vert^4}\log\left(\frac{1}{t_0(1-t_N)}\right)+d^2\log^2\left(\frac{1}{t_0(1-t_N)}\right)\right].$$
    This gives the complexity to to make $\text{KL}(\rho(t_N)\Vert\hat{\rho}(t_N))\lesssim\varepsilon^2$.
\end{proof}

\subsection{Reducing to Diffusion Models}
\label{appendix:reduce-to-gaussian}

By modifying the definition of stochastic interpolant to $$x_t=I(t,x_1)+\gamma(t)z$$
and change the condition on $I$ to $\Vert\partial_tI(t,x_1)\Vert\le C\Vert x_1\Vert$, we can repeat the previous analysis while replacing $\sqrt{\mathbb{E}\Vert x_0-x_1\Vert^p}$ by $\sqrt{\mathbb{E}\Vert x_1\Vert^p}$. For the case of diffusion models, we can choose $I(t,x_1)=tx_1$ and $\gamma(t)=\sqrt{1-t^2}$ to obtain a process with the same marginal distributions. Moreover, under this definition of interpolants, we can choose $t_0=0$ and $h_k=t_{k+1}-t_k\propto(1-t_k)$ as the time schedule to recover the sample complexity of diffusion models.

\subsection{Omitted Proofs for $\gamma^2(t)=(1-t)^2t$}
\label{appendix:another}

In this section, we will design a schedule for $\gamma^2(t)=(1-t)^2t$, and provide the corresponding complexity deduced using \Cref{thm:main}. Moreover, we also derived the complexity of using a uniform schedule for comparison.

\begin{corollary}
    For $\gamma^2(t)=(1-t)^2t$, there exists a schedule so that under the same assumptions as \Cref{cor:instant}, the complexity is given by
    $$N=O\left(\frac{1}{\varepsilon^2}\left[\sqrt{\mathbb{E}\Vert x_0-x_1\Vert^4}d\left(\frac{1}{\sqrt{1-t_N}}+\log\left(\frac{1}{t_0}\right)\right)+d^2\left(\frac{1}{(1-t_N)^2}+\log^2\left(\frac{1}{t_0}\right)\right)\right]\right).$$
    In addition, the complexity for using a uniform schedule is $$N=O\left(\frac{1}{\varepsilon^2}\left[\sqrt{\mathbb{E}\Vert x_0-x_1\Vert^4}d\left(\frac{1}{1-t_N}+\log\left(\frac{1}{t_0}\right)\right)+d^2\left(\frac{1}{(1-t_N)^3}+\frac{1}{t_0}\right)\right]\right).$$
\end{corollary}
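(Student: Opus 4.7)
The plan is to apply \Cref{thm:main} directly to both schedules and bound each of the discretization sums on its right-hand side. The key feature of $\gamma^2(t)=(1-t)^2 t$ is its asymmetry: $\gamma^2(t)=\Theta(t)$ as $t\to 0^+$, while $\gamma^2(t)=\Theta((1-t)^2)$ as $t\to 1^-$, so the two endpoints call for qualitatively different step-size profiles. I handle this by splitting $[t_0,t_N]$ at the midpoint $t_M=1/2$ and using a separate step-size parameter on each half.

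For the designed schedule, on $[t_0,1/2]$ I take $h_k=\Theta(h_L t_k)$, the exponentially decaying profile already used in \Cref{sec:instance}, which gives $N_L=O(h_L^{-1}\log(1/t_0))$. On $[1/2,t_N]$ I take the polynomial profile $h_k=\Theta\bigl(h_R(1-t_{k+1})^{3/2}\bigr)$; writing $u_k=1-t_k$, the recursion $u_{k+1}=u_k-h_R u_{k+1}^{3/2}$ integrates approximately to $u_{k+1}^{-1/2}-u_k^{-1/2}\approx h_R/2$, so $N_R=O\bigl(h_R^{-1}(1-t_N)^{-1/2}\bigr)$. The exponent $3/2$ on the right is the main design choice and, I expect, the main obstacle: over the family $h_k\propto h_R(1-t_{k+1})^r$ with $r>1$, the score part of \Cref{thm:main} yields $N_R\gtrsim d^2/(\varepsilon^2(1-t_N)^2)$ for every such $r$, while the distance part yields $N_R\gtrsim d\sqrt{\mathbb{E}\|x_0-x_1\|^4}/(\varepsilon^2(1-t_N)^{r-1})$, so $r=3/2$ is precisely the value that reproduces the claimed $(1-t_N)^{-1/2}$ rate. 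Substituting these $h_k$ into \Cref{thm:main} and converting $\sum_k g(t_k)\approx\int g(t)/h_t\,dt$ by a Riemann-sum argument, the dominant quadratic-in-$h$ contributions on the two halves come out as
\begin{align*}
\text{left:}\ & d^2 h_L\log(1/t_0)\ +\ d\sqrt{\mathbb{E}\|x_0-x_1\|^4}\,h_L,\\
\text{right:}\ & d^2 h_R(1-t_N)^{-3/2}\ +\ d\sqrt{\mathbb{E}\|x_0-x_1\|^4}\,h_R,
\end{align*}
while the cubic-in-$h$ terms from \Cref{thm:main} are dominated for small $\varepsilon$. Requiring each contribution to be at most $\varepsilon^2$ fixes upper bounds on $h_L$ and $h_R$, and plugging the largest admissible values into $N=N_L+N_R$ directly produces the first claim, with no cross terms to absorb.

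For the uniform schedule $h_k\equiv(t_N-t_0)/N\approx 1/N$, I substitute directly into \Cref{thm:main} and use the same integral approximation with $h_t\equiv 1/N$. Splitting the integration at $t=1/2$ and evaluating $\int_{t_0}^{1/2}t^{-p}\,dt$ and $\int_{1/2}^{t_N}(1-t)^{-p}\,dt$ explicitly, the two score sums together become $O\bigl(d^2 N^{-1}(t_0^{-1}+(1-t_N)^{-3})\bigr)$, and the two velocity sums together become $O\bigl(d\sqrt{\mathbb{E}\|x_0-x_1\|^4}\,N^{-1}(\log(1/t_0)+(1-t_N)^{-1})\bigr)$. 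Setting each at most $\varepsilon^2$ and solving for $N$ yields the second claim, and the gaps $(1-t_N)^{-1/2}\ll(1-t_N)^{-1}$ and $(1-t_N)^{-2}\ll(1-t_N)^{-3}$ between the two bounds exhibit the asymptotic advantage of the designed schedule.
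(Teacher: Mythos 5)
Your proposal is correct and follows essentially the same route as the paper's proof: split at $t_M=1/2$, use the geometric schedule $h_k\propto t_k$ on the left half (where $\gamma^2=\Theta(t)$) and the polynomial schedule $h_k\propto(1-t)^{3/2}$ on the right half (where $\gamma^2=\Theta((1-t)^2)$), count steps via the $u^{-1/2}$ recursion, and bound the sums by the corresponding integrals; the uniform-schedule bound is obtained by the same direct substitution and integral evaluation as in the appendix. Your added remark explaining why the exponent $3/2$ is forced by balancing the score and velocity contributions is a correct observation that the paper leaves implicit.
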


\begin{proof}
    Here we also take $h_M=0.5$ for some $M>0$. Then, we define
    $$h_k=\begin{cases}
        h_A\cdot t_{k+1},&k<M\\
        h_B\cdot (1-t_k)^{1.5},&k\ge M.
    \end{cases}$$
    for some $h_A\in[0,0.5),h_B\in[0,1)$.
    For the part $k<M$ and $t_k\in[0,0.5)$, $\gamma^2(t_k)=\Theta(t_k)$, so it is the same as what we have discussed for the case, and we need $$M=N_1=O\left(\frac{1}{\varepsilon^2}\left[\sqrt{\mathbb{E}\Vert x_0-x_1\Vert^4}d\log\left(\frac{1}{t_0}\right)+d^2\log^2\left(\frac{1}{t_0}\right)\right]\right)$$
    steps to make the discretization error $$\begin{aligned}
        &\varepsilon_{b_F}^2+\sum_{k=0}^{M-1}(t_{k+1}-t_k)^3\left[M_2+\bar{\gamma}_k^{-6}d^3+\bar{\gamma}_k^{-2}d\sqrt{\mathbb{E}\Vert x_0-x_1\Vert^{8}}\right]\\
        &\qquad+\sum_{k=0}^{M-1}(t_{k+1}-t_k)^2\bar{\gamma}_k^{-2}d\left[\sqrt{\mathbb{E}\Vert x_0-x_1\Vert^4}+\bar{\gamma}_k^{-2}d\right]\lesssim\varepsilon^2.
    \end{aligned}$$
    For the part $k\ge M$, 
    $$\sum_{k=M}^{N-1}(t_{k+1}-t_k)^3\left[M_2+\bar{\gamma}_k^{-6}d^3+\bar{\gamma}_k^{-2}d\sqrt{\mathbb{E}\Vert x_0-x_1\Vert^{8}}\right]=O(h_B^2),$$
    and by that $h_k=\Theta(h_B\bar{\gamma}_k^{1.5})=\Theta(h_B(1-t_k)^{1.5})$ (use in step (a) below),
    $$\begin{aligned}
        &\qquad\sum_{k=M}^{N-1}h_k^2\bar{\gamma}_k^{-2}d\left[\sqrt{\mathbb{E}\Vert x_0-x_1\Vert^4}+\bar{\gamma}_k^{-2}d\right]\\
        &\overset{\text{(a)}}{\lesssim} h_B\sum_{k=M}^{N-1}h_k\left[d\sqrt{\mathbb{E}\Vert x_0-x_1\Vert^4}\bar{\gamma}_k^{-0.5}+\bar{\gamma}_k^{-2.5}d^2\right]\\
        &\overset{\text{(b)}}{\lesssim} h_B\left[d\sqrt{\mathbb{E}\Vert x_0-x_1\Vert^4}\int_{0.5}^{t_N}(1-s)^{-0.5}\dd s+d^2\int_{0.5}^{t_N}(1-s)^{-2.5}\dd s\right]\\
        &\lesssim h_B\left[d\sqrt{\mathbb{E}\Vert x_0-x_1\Vert^4}+d^2\frac{1}{(1-t_N)^{1.5}}\right].
    \end{aligned}$$
    Here the inequality (b) is by that $\bar{\gamma}_k=\Theta(1-t)$ for $t\in[t_k,t_{k+1}]$. Now, we want to compute the number of steps $N_2=N-M$ for the part $k\ge M$. Note that if $t_k=1-2^{-p}$, it takes $O(2^{p/2}h_B^{-1})$ more steps to reach $1-2^{-p-1}$. Hence $N_2=O\left(h_B^{-1}(1-t_N)^{-0.5}\right)$, so we need to take $h_B=\Theta\left(N^{-1}(1-t_N)^{-0.5}\right)$.
    Therefore,
    $$\begin{aligned}
        &\qquad\sum_{k=M}^{N-1}h_k^2\bar{\gamma}_k^{-2}d\left[\sqrt{\mathbb{E}\Vert x_0-x_1\Vert^4}+\bar{\gamma}_k^{-2}d\right]\\
        &\lesssim N^{-1}\left[\frac{d\sqrt{\mathbb{E}\Vert x_0-x_1\Vert^4}}{\sqrt{1-t_N}}+\frac{d^2}{(1-t_N)^{2}}\right].
    \end{aligned}$$
    Thus, for the part $k>M$, we need $$N-M=N_2=O\left(\frac{1}{\varepsilon^2}\left(\frac{d\sqrt{\mathbb{E}\Vert x_0-x_1\Vert^4}}{\sqrt{1-t_N}}+\frac{d^2}{(1-t_N)^{2}}\right)\right)$$
    steps to make the discretization error bounded by $O(\varepsilon^2)$. Hence, the overall complexity is given by $N=N_1+N_2$, which is our result.

    If we use a uniform schedule, by \Cref{thm:main} and that $\gamma^2(t)=\Theta(\min\{t,(1-t)^2\})$, we can bound
    $$\begin{aligned}
        \text{KL}(\rho(t_N)\Vert\hat{\rho}(t_N))&\overset{\text{(a)}}{\lesssim}\varepsilon_{b_F}^2+\text{KL}(\rho(t_0)\Vert\hat{\rho}(t_0))\\
        &\qquad+\frac{1}{N}\sqrt{\mathbb{E}\Vert x_0-x_1\Vert^4}d\left(\int_{t_0}^{0.5}s^{-1}\dd s+\int_{0.5}^{t_N}(1-s)^{-2}\dd s\right)\\
        &\qquad+\frac{1}{N}d^2\left(\int_{t_0}^{0.5}s^{-2}\dd s+\int_{0.5}^{t_N}(1-s)^{-4}\dd s\right)\\
        &\lesssim\varepsilon_{b_F}^2+\text{KL}(\rho(t_0)\Vert\hat{\rho}(t_0))\\
        &\qquad+\frac{1}{N}\sqrt{\mathbb{E}\Vert x_0-x_1\Vert^4}d\left(\frac{1}{1-t_N}+\log\left(\frac{1}{t_0}\right)\right)\\
        &\qquad+\frac{1}{N}d^2\left(\frac{1}{(1-t_N)^3}+\frac{1}{t_0}\right),
    \end{aligned}$$
    which further gives the complexity bound for uniform schedule. Here the inequality (a) is by applying \Cref{thm:main} and replacing $\bar{\gamma}_k$ with the term of the same order. This bound is worse than using the schedule satisfying that $h_k\lesssim h\bar{\gamma}_k$.
\end{proof}

\section{More Details of Numerical Experiments}
\label{appendix:experiments}


To parameterize the estimator $\hat{b}_F(t,x)$ for two-dimensional data, we utilize a simple multilayer perceptron (MLP) network. The input of the network comprises a three-dimensional vector $(x,t)$, and its output is a two-dimensional vector $\hat{b}_F(t,x)$. The MLP architecture consists of three hidden layers, each with $256$ neurons, followed by ReLU activation functions \cite{nair2010rectified}. 

To train the estimator $\hat{b}_F(t,x)$, we leverage a simple quadratic objective (see Appendix \ref{appendix:preliminaries} for details) whose optimizer is the real drift $b_F(t,x)$. Given the estimator, data batches, and sampled time points, we are ready to compute an empirical loss. We employ the Adam optimizer \cite{adam} to train the network using the gradient computed on the empirical loss. 

We set $t_0=0.001$ and $t_N=0.999$ to ensure that the initial density $\rho(t_0)$ is close to $\rho_0$ and the estimated density $\rho(t_N)$ closely approximates $\rho_1$. During the training process of the estimator $\hat{b}_F(t,x)$, we employ an importance sampling technique so that $t\in[t_0,t_N]$ is sampled with probability proportional to $\gamma^{-2}(t)$. We implement the discretized sampler as defined in Equation (\ref{eq:sampler}). We use more than $10,000$ data samples to empirically visualize the densities in Figures \ref{fig:1}, \ref{fig:2} and \ref{fig:4}.

In addition, for KL divergence estimation, we estimate the density ratio by comparing the distance to the $k$-th nearest neighborhood. Specifically, for some $x\in\mathbb{R}^d$, let $\{x_0^i\}_{i=1}^{n}$ and $\{x_1^j\}_{j=0}^m$ be i.i.d. samples from $\rho_0$ and $\rho_1$, and $d_0,d_1$ be the corresponding distance from $x$ to the $k$-th nearest neighborhood, then we estimate $$\frac{\rho_0(x)}{\rho_1(x)}\approx\frac{k/(nd_0^d)}{k/(md_1^d)}=\frac{md_1^d}{nd_0^d}.$$

\begin{figure}[htb]
    \centering
    \includegraphics[width=0.4\linewidth]{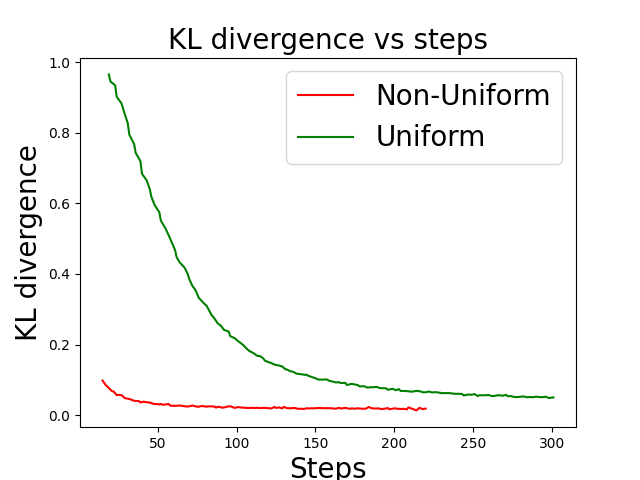}
    \caption{Estimated KL divergence for different step size schedules, where we use $\gamma^2(t)=(1-t)^2t$. The red curve denotes the distance when we use the schedule designed in \Cref{appendix:another}, while the green curve denotes the distance when we use the uniform schedule.}
    \label{fig:a1}
\end{figure}

\subsection{Additional Experiments for $\gamma(t)=\sqrt{(1-t)^2t}$}

We implement the schedule discussed in \Cref{appendix:another} and compare it to the uniform schedule. We choose $(t_0,t_N)=(0.001,0.97)$ since $\gamma^2(t)$ is $\Theta((1-t)^2)$ near $t=1$. We choose $\rho_0$ as the ``checkerboard" density and $\rho_1$ as the ``spiral" density. We estimate the KL divergence $\text{KL}(\rho(t_N)\Vert\hat{\rho}(t_N))$ to indicate how close the estimated distribution is to the target distribution. The comparison is shown in Figure \ref{fig:a1}.

\stopcontents[section]
\newpage
\printcontents[section]{l}{1}{\setcounter{tocdepth}{2}}
\newpage

\end{document}
